\documentclass[11pt]{article}
\usepackage{amsmath, amssymb, amsthm, soul}
\usepackage[numbers, square]{natbib}
\usepackage[left=1in, bottom=1in, right=1in, top=1in]{geometry}
\usepackage[citecolor=blue, urlcolor=blue]{hyperref}
\usepackage{dsfont}
\usepackage{xcolor}
\usepackage{natbib}
\usepackage{multicol}
\usepackage{multirow}
\usepackage{graphicx}
\usepackage{subcaption}
\usepackage{caption}
\usepackage{comment}
\usepackage{wrapfig}
\usepackage{mathtools}
\usepackage{booktabs}
\usepackage{enumitem}
\usepackage{bbm}
\DeclarePairedDelimiterX{\norm}[1]{\lVert}{\rVert}{#1}
\usepackage{subcaption}

\definecolor{applegreen}{rgb}{0.55, 0.71, 0.0}

\usepackage{wrapfig} 
\usepackage[font=small,skip=0pt]{caption}
\newtheorem{theorem}{Theorem}
\newtheorem{lemma}{Lemma}
\newtheorem{corollary}{Corollary}

\newtheorem{proposition}{Proposition}
\usepackage{authblk}

\title{Uncertainty Quantification of Next Generation Reservoir Computing with Applications to Memory-Driven Dynamical Systems}

\author[1]{Livia Popa\thanks{Corresponding author: \texttt{lp472@cornell.edu}}}
\author[1]{Sumanta Basu}
\author[1]{Martin T. Wells}

\affil[1]{Department of Statistics and Data Science, Cornell University, New York, USA}

\date{ }

\begin{document}

\maketitle

\begin{abstract}
Nonlinear dynamical systems with memory arise across science and engineering, yet uncertainty quantification for efficient forecasting methods such as Next Generation Reservoir Computing (NGRC) remains underdeveloped. We study Bayesian ridge and conformal prediction intervals for NGRC and characterize when their uncertainty estimates agree or differ. In low dimensions, their asymptotic widths are governed by different summaries of the residual distribution, so agreement depends on residual shape rather than dimensionality alone. In high dimensions, regularization introduces a further tradeoff between estimation variance, shrinkage bias, and posterior uncertainty, leading to an explicit transition between regimes where Bayesian intervals are wider or narrower than conformal intervals. We extend these results to quadratic NGRC feature maps and give sufficient conditions for transferring the analysis to temporally dependent forecast windows. Simulations and real-data experiments support the theoretical predictions and illustrate how residual distribution, regularization, dimensionality, and distribution shift affect interval calibration and efficiency. These results provide a principled framework for choosing and interpreting uncertainty quantification methods in reservoir-based forecasting.

\end{abstract}

\newpage

\section{Introduction}\label{sec:intro}

Forecasting uncertainty for nonlinear systems with memory is important in many scientific domains, including neuroscience \cite{Aram2015IntracorticalConnectivity}, epidemiology \cite{Pakkanen2023IncidencePrevalence}, materials engineering \cite{Patlashenko2001VolterraTimestepping}, and finance \cite{HanWong2021VolterraHeston}. In these settings, future behavior can depend on nonlinear interactions among past states and accumulated history. Accurate point forecasts alone are therefore insufficient when predictions are used for scientific interpretation, risk assessment, or decision-making; reliable predictive uncertainty is also needed \cite{GneitingBalabdaouiRaftery2007}.

Next generation reservoir computing (NGRC) provides a computationally efficient framework for forecasting nonlinear dynamical systems with memory. NGRC constructs a deterministic delay-coordinate representation of the observed process, augments it with nonlinear features, and estimates only a linear readout \cite{nextgeneration,GrigoryevaTingOrtega2025}. This structure is particularly convenient for uncertainty quantification because uncertainty can be introduced at the readout layer while the nonlinear feature map remains fixed. At the same time, increasing the number of delays and nonlinear interactions can cause the feature dimension to grow rapidly relative to the available sample size, while polynomial feature construction induces substantial dependence among coordinates. Thus, even though only the readout is trained, NGRC can naturally enter a proportional high-dimensional regime in which standard low-dimensional uncertainty approximations need not apply. Recent work has developed conformal methods for reservoir-based forecasting, including Reservoir Conformal Prediction (ResCP) \cite{Neglia2026ResCP}, but the relationship between Bayesian and conformal uncertainty quantification for NGRC remains incompletely understood. In particular, it is unclear when the two procedures should produce comparable intervals, what mechanisms determine differences in their widths and coverage, and how regularization and increasing feature dimension alter this comparison.

We study Bayesian ridge and split-conformal prediction intervals constructed from a common NGRC readout. First, under stationary ergodic sampling and fixed feature dimension, we show that Bayesian and conformal widths converge to different functionals of the limiting residual distribution. The Bayesian width is determined by the root-mean-square residual scale, whereas the conformal width is determined by an absolute-residual quantile. This formulation allows conditional-mean misspecification and singular feature covariance, showing that fixed-dimensional agreement is a residual-distributional matching property rather than an automatic consequence of low dimensionality.

Second, in proportional dimension, we derive a bias--variance--posterior decomposition separating Bayesian posterior uncertainty from frequentist estimation variance and ridge prediction bias. For isotropic Gaussian designs, the decomposition yields explicit signal-, regularization-, noise-, and aspect-ratio-dependent limits for interval width and Bayesian marginal coverage. In particular, the relative width of Bayesian and conformal intervals can change across a signal-dependent phase boundary, so a narrower Bayesian interval may reflect undercoverage rather than greater efficiency at equal coverage. These results connect NGRC uncertainty quantification with high-dimensional ridge regression and random-matrix theory \cite{DobribanWager2018,MarchenkoPastur1967}.

Third, we extend the high-dimensional comparison to an explicit quadratic NGRC feature model whose polynomial coordinates are dependent. Although the centered quadratic feature representation is isotropic, its coordinates are not independent, so the Gaussian-design argument does not apply directly. We show that quadratic-form concentration controls the spectral behavior of the dependent feature matrix, while sufficient diffuseness of the quadratic prediction direction permits a Gaussian approximation for scalar prediction errors. Together, these properties recover the finite-trace Bayesian and conformal width comparison for an explicit nonlinear NGRC feature model. Random effects provide one sufficient mechanism for diffuseness, while a deterministic-readout extension identifies diffuseness of the ridge-transformed quadratic bias direction as the relevant condition. This condition is a population-level theoretical criterion rather than a directly observable diagnostic for a fitted forecasting model. We therefore interpret the quadratic-feature theory as an analytical benchmark for NGRC uncertainty rather than a complete calibration theory for deployed NGRC systems \cite{Yaskov2016}.

We also study the effect of temporal dependence. The main high-dimensional calculations use independent fitting, calibration, and test feature vectors. We give a sufficient coupling argument under absolute regularity showing that probability statements from this independent-window benchmark can be transferred to sufficiently separated NGRC forecast windows \cite{ChernozhukovWuthrichZhu2018,xu2023conformal}. The separation requirement applies to the retained forecast windows themselves, not only to the boundaries between fitting, calibration, and test blocks, so the result should be viewed as a theoretical bridge for thinned, sufficiently separated forecast origins rather than as a direct approximation for ordinary overlapping NGRC windows.

We conduct simulation experiments to test these mechanisms directly. The experiments examine fixed-dimensional residual-shape effects, the signal-dependent high-dimensional phase transition, quadratic-feature diffuseness and temporal transfer, and robustness under an observation-noise variance change. An additional controlled-data-budget study examines the fitting--calibration tradeoff created by reserving observations for conformal calibration. Real-data experiments then compare Bayesian ridge, split conformal prediction, adaptive conformal prediction, time-weighted conformal prediction, and ResCP using a common NGRC forecasting model across environmental, energy, air-quality, and financial time series. We additionally compare NGRC, ARIMA, recurrent neural networks, and Transformer forecasts under a common ResCP uncertainty procedure to examine how point-forecasting architecture, residual behavior, calibration, and computational cost interact.
\section{Background}\label{sec:background}

We review the literature most closely related to our analysis: next generation reservoir computing, Bayesian and conformal uncertainty quantification for linear readouts, high-dimensional ridge regression and random-matrix theory, and conformal prediction for temporally dependent data.

\subsection{Next Generation Reservoir Computing}

Reservoir computing (RC) is a recurrent-learning framework in which a fixed dynamical representation is combined with a trained linear readout. Recent work has established a close connection between reservoir computing and nonlinear vector autoregressive (NVAR) models \cite{Bollt_2021,nextgeneration}. Next generation reservoir computing (NGRC) exploits this connection by replacing the recurrent reservoir with an explicit delay-coordinate feature map constructed from lagged observations and nonlinear transformations. Only the output readout is estimated, yielding a computationally efficient forecasting framework for nonlinear dynamical systems with memory \cite{nextgeneration}.

The explicit NGRC representation is also convenient for statistical analysis. Once the nonlinear features have been constructed, estimation reduces to a regularized linear regression problem. The feature dimension can nevertheless grow rapidly with the number of delays and the degree of the nonlinear expansion, while polynomial transformations of a common delay vector induce dependence among feature coordinates. Recent work has further studied NGRC through high-dimensional and kernel representations, including settings in which large polynomial feature spaces are handled implicitly \cite{GrigoryevaTingOrtega2025}. In this paper, we work with the finite-dimensional delay-coordinate representation defined in Section~\ref{sec:method} and study uncertainty quantification for its linear readout.

\subsection{Bayesian and Conformal Uncertainty Quantification}
\label{subsec:uq_background}

Because the NGRC feature map is fixed before fitting the readout, uncertainty quantification can be introduced directly at the output layer. Bayesian ridge regression provides a probabilistic treatment of the readout using a Gaussian likelihood and shrinkage prior \cite{mackay1992bayesian,tipping2001sparse}. Conformal prediction provides a complementary framework in which prediction sets are calibrated using empirical conformity scores rather than a fully specified parametric model for the forecast-error distribution \cite{angelopoulos2023conformal,shafer2008tutorial}. Our primary conformal benchmark is split conformal prediction, which estimates the forecasting model on a fitting sample and uses a separate calibration sample to determine the prediction interval \cite{lei2018distribution,papadopoulos2002inductive}.

Standard conformal prediction provides marginal coverage under exchangeability, which does not in general imply validity conditional on the covariates. Recent work has therefore considered weaker forms of conditional uncertainty quantification based on coarsened conditioning events or lower-dimensional summaries. For example, \citet{YangEtAl2026PICPI} introduce prediction-interval-conditional prediction intervals (PICPIs), which condition on the fitted prediction itself and provide locally calibrated uncertainty statements over prediction strata. Their inferential target differs from the response prediction intervals studied here, but the framework provides a complementary perspective on the distinction between marginal and locally conditional uncertainty quantification.

The relationship between Bayesian and conformal ridge procedures has also been studied previously. In particular, \citet{BurnaevVovk2014} analyze the efficiency of full conformalized ridge regression relative to Bayesian ridge prediction under a standard probabilistic model. Our setting differs in two important respects. First, we compare a Gaussian readout interval with split conformal prediction built around a common ridge predictor rather than full conformalized ridge regression. Second, our fixed-dimensional analysis characterizes the limiting discrepancy through the population prediction-error distribution without requiring the conditional mean to be exactly linear in the NGRC features or the population feature covariance to be nonsingular. Thus, the fixed-dimensional result identifies when Bayesian and split-conformal NGRC intervals agree under possible readout misspecification, rather than treating asymptotic agreement under the standard Bayesian model as a new phenomenon.

\subsection{High-Dimensional Ridge Regression and Nonlinear Features}

NGRC feature maps can become high dimensional because increasing the number of delays and adding polynomial interactions rapidly expands the readout dimension. When the feature dimension is non-negligible relative to sample size, classical fixed-dimensional approximations can fail. High-dimensional ridge regression has been studied extensively under proportional asymptotics, where $p/n\rightarrow\gamma>0$ and prediction risk admits deterministic limits derived using random-matrix theory \cite{DobribanWager2018,MarchenkoPastur1967}. In this regime, estimation variance and regularization bias can both remain nonvanishing.

Our high-dimensional analysis builds on this literature but focuses on the uncertainty represented by Bayesian and conformal procedures constructed around the same ridge predictor. A key distinction is that Bayesian predictive variance contains observation noise and posterior parameter uncertainty, whereas the prediction errors used for conformal calibration contain observation noise, frequentist estimation variance, and ridge prediction bias. Consequently, there is no universal ordering between Bayesian and conformal interval widths in proportional dimension: the prediction-bias contribution must be included in the comparison.

High-dimensional conformal prediction has also received recent attention. 
Full conformal procedures in proportional regimes have been studied using high-dimensional asymptotics and related computational methods
\cite{ClarteZdeborova2025,GibbsCandes2025HighDimensional}. 
Our focus is different: we study the discrepancy between Bayesian and split-conformal intervals sharing a common ridge center, and identify how posterior uncertainty, estimation variance, and ridge bias determine their relative widths and Bayesian marginal calibration.

A further complication for NGRC is that polynomial features are not composed of independent coordinates. Quadratic expansions contain squared lag variables and cross-products formed from the same underlying state vector, producing substantial dependence among feature coordinates. Whitening may make such coordinates uncorrelated, but it does not make them independent. Random-matrix limits can nevertheless hold for isotropic random vectors with dependent coordinates under suitable quadratic-form concentration conditions \cite{Yaskov2016}. This motivates our analysis of an explicit quadratic NGRC feature model, for which the required spectral concentration can be established directly. The resulting analysis also shows that high dimensionality alone is not sufficient for Gaussian prediction-error approximations: the relevant quadratic prediction direction must be sufficiently diffuse.

\subsection{Conformal Prediction for Temporally Dependent Data}

The standard finite-sample split-conformal guarantee relies on exchangeability, an assumption that is generally violated in time-series forecasting. A growing literature therefore studies conformal prediction under temporal dependence, distribution shift, and sequential sampling \cite{ChernozhukovWuthrichZhu2018,xu2023conformal,barber2023conformal}. For example, \citet{ChernozhukovWuthrichZhu2018} develop conformal inference methods for dependent data using dependence-aware permutations, while \citet{xu2023conformal} study sequential predictive conformal inference for time series.

Adaptive and weighted procedures provide additional mechanisms for responding to temporal variation. Adaptive conformal prediction updates the effective miscoverage level according to recent coverage errors \cite{gibbs2021adaptive,zaffran2022adaptive}, while weighted conformal procedures place greater emphasis on calibration observations that are more relevant to the current prediction problem \cite{barber2023conformal}. In our empirical analysis, we compare ordinary split conformal prediction with adaptive and time-weighted variants to assess robustness when forecast-error behavior changes over time. These procedures are empirical comparators; the main theoretical Bayesian--conformal comparison concerns Bayesian ridge and ordinary split conformal prediction constructed from a common NGRC readout.

Recent work has also developed conformal uncertainty quantification specifically for reservoir-based forecasting. Reservoir Conformal Prediction (ResCP) \cite{Neglia2026ResCP} represents residual dynamics in a reservoir state space and uses state-dependent weighting of past conformity scores to adapt prediction intervals to local temporal behavior. Rather than introducing a new conformity-score weighting mechanism, we study uncertainty quantification for the NGRC forecasting model itself and characterize how Bayesian and split-conformal intervals compare across fixed- and proportional-dimensional regimes. ResCP is used as an external adaptive benchmark in our simulation and real-data studies.

Our theoretical treatment of temporal dependence is complementary to these adaptive conformal approaches. The high-dimensional width calculations are first developed under independent fitting, calibration, and test feature vectors. We then use absolute-regularity arguments to give sufficient conditions under which probability statements from this independent-window benchmark transfer to sufficiently separated forecast origins of a dependent process. This result concerns dependence across retained forecasting windows and should not be interpreted as a finite-sample conformal guarantee for ordinary overlapping time-series windows.
\section{Method}\label{sec:method}

We consider direct forecasting of a scalar time series using an NGRC feature
representation followed by a regularized linear readout. Bayesian and conformal
uncertainty quantification are then applied to the same fitted readout. Throughout
this section, we fix a forecast horizon $H$ and suppress the horizon subscript
when no ambiguity arises.

\subsection{NGRC Forecasting}

Let $\{x_t\}$ denote a uniformly sampled scalar time series and define the
$H$-step-ahead response as $y_t=x_{t+H}$. For a delay length $k$ and lag spacing
$s$, define the delay vector
\[
z_t=(x_t,x_{t-s},\ldots,x_{t-(k-1)s})^\top\in\mathbb R^k.
\]
We use the standard quadratic NGRC feature map
\begin{equation}
r_t=
\left(
1,\,
z_t^\top,\,
\{z_{t,i}z_{t,j}:i\le j\}
\right)^\top
\in\mathbb R^p,
\qquad
p=1+k+\frac{k(k+1)}{2}.
\label{eq:ngrc_features}
\end{equation}
Thus, the representation contains an intercept, the lagged observations, and
all unique quadratic interactions among them. More general feature maps can be
accommodated by the same readout and uncertainty-quantification procedures.

Let $\mathcal I_{\mathrm{fit}}$ denote the fit indices, with
$n=|\mathcal I_{\mathrm{fit}}|$, and let $R\in\mathbb R^{n\times p}$ and
$y\in\mathbb R^n$ denote the corresponding design matrix and response vector.
We estimate the readout by ridge regression,
\begin{equation}
\widehat w
=
\arg\min_{w\in\mathbb R^p}
\left\{
\frac{1}{n}\|y-Rw\|_2^2+\lambda\|w\|_2^2
\right\}
=
\left(\frac{R^\top R}{n}+\lambda I_p\right)^{-1}
\frac{R^\top y}{n}.
\label{eq:ridge_solution}
\end{equation}
For a new feature vector $r_*$, the forecast point is
$\widehat y_*=r_*^\top\widehat w$.

We treat ridge regression as a prediction rule and do not require the
conditional mean of $y_t$ to be exactly linear in $r_t$. Stronger probabilistic
assumptions are introduced only where required by the uncertainty model or the
theoretical results in Section~\ref{sec:theory}.

\subsection{Uncertainty Quantification at the Readout Layer}
\label{subsec:uq_methods}

We compare Bayesian ridge and split conformal prediction using the same NGRC
feature map and fitted ridge readout, so differences in interval behavior are
not confounded with differences in the point predictor. Time-weighted conformal
prediction, adaptive conformal prediction, and Reservoir Conformal Prediction
(ResCP) are additionally considered as empirical benchmarks for temporally
dependent and time-varying forecast errors. The main theoretical comparison in
Section~\ref{sec:theory} concerns Bayesian ridge and ordinary split conformal
prediction.

\subsubsection{Bayesian Ridge}
\label{subsec:bayes_ridge}

For Bayesian uncertainty quantification, we use the Gaussian readout model
$y\mid R,w\sim\mathcal N(Rw,\sigma^2I_n)$ and prior
$w\sim\mathcal N(0,\sigma^2(n\lambda)^{-1}I_p)$. This scaling makes the posterior
mean equal to the ridge estimator in \eqref{eq:ridge_solution}, with posterior
covariance
\(
\Sigma_w
=
\sigma^2(R^\top R+n\lambda I_p)^{-1}.
\)
For a future feature vector $r_*$, define
$v_B(r_*)=\sigma^2+r_*^\top\Sigma_w r_*$. The $(1-\alpha)$ Gaussian predictive
interval is
\begin{equation}
C_B(r_*)
=
\left[
r_*^\top\widehat w-z_{1-\alpha/2}\sqrt{v_B(r_*)},
\;
r_*^\top\widehat w+z_{1-\alpha/2}\sqrt{v_B(r_*)}
\right].
\label{eq:bayes_interval}
\end{equation}

We refer to \eqref{eq:bayes_interval} as a Gaussian readout interval when the
conditional Gaussian model is not assumed to be correctly specified. In the
proportional high-dimensional theory, $\sigma$ is treated as known or replaced
by a separately justified consistent estimator.

\subsubsection{Split Conformal Prediction}

Let $\mathcal I_{\mathrm{cal}}$ denote a calibration set disjoint from the fit
set, with $m=|\mathcal I_{\mathrm{cal}}|$. Using the readout estimated only on
$\mathcal I_{\mathrm{fit}}$, define
$S_i=|y_i-r_i^\top\widehat w|$ for $i\in\mathcal I_{\mathrm{cal}}$, and let
$S_{(1)}\le\cdots\le S_{(m)}$ denote the ordered scores. With
$k_m=\lceil(m+1)(1-\alpha)\rceil$ and the convention $S_{(m+1)}=+\infty$, define
$\widehat q_{1-\alpha}=S_{(k_m)}$. The split-conformal interval is
\begin{equation}
C_C(r_*)
=
\left[
r_*^\top\widehat w-\widehat q_{1-\alpha},
\;
r_*^\top\widehat w+\widehat q_{1-\alpha}
\right].
\label{eq:split_interval}
\end{equation}
Under exchangeability, the corrected quantile gives the usual finite-sample
marginal coverage guarantee
\cite{lei2018distribution,papadopoulos2002inductive}. This guarantee does not
directly extend to general time-series data
\cite{ChernozhukovWuthrichZhu2018,xu2023conformal}.

\subsubsection{Time-Weighted Conformal Prediction}
\label{subsec:weighted_conformal}

To respond to changes in forecast-error behavior, let
$S_1,\ldots,S_{M_t}$ denote the conformity scores available before prediction
time $t$ and assign weights
$\omega_{j,t} = \frac{\rho^{M_t-j}}{\sum_{\ell=1}^{M_t}\rho^{M_t-\ell}}, \;\; 0<\rho\le1.$
The weighted empirical quantile is
\[
\widehat q^{\,w}_{t,1-\alpha}
=
\inf\left\{
q:
\sum_{j=1}^{M_t}\omega_{j,t}\mathbf 1\{S_j\le q\}
\ge1-\alpha
\right\},
\]
and the corresponding interval is
$C_{\mathrm{TW}}(r_t)=
[r_t^\top\widehat w-\widehat q^{\,w}_{t,1-\alpha},
 r_t^\top\widehat w+\widehat q^{\,w}_{t,1-\alpha}]$.
When $\rho=1$, all available scores receive equal weight, while $\rho<1$
emphasizes more recent forecast errors \cite{barber2023conformal}. We evaluate
this procedure empirically rather than include it in the main width theory.

\subsubsection{Adaptive Conformal Prediction}
\label{subsec:adaptive_conformal}

Adaptive conformal prediction updates the effective miscoverage level in
response to recent coverage errors
\cite{gibbs2021adaptive,zaffran2022adaptive}. Let $\alpha_t$ denote the level
at time $t$ and
$\mathrm{err}_t=\mathbf 1\{y_t\notin C_t(1-\alpha_t)\}$. The update is
\begin{equation}
\alpha_{t+1}
=
\Pi_{[0,1]}
\left[
\alpha_t+\eta(\alpha-\mathrm{err}_t)
\right],
\label{eq:adaptive_update}
\end{equation}
where $\eta>0$ is a learning rate. At prediction time $t$, the conformal
quantile is computed from the currently available residual history using level
$1-\alpha_t$. Once $y_t$ becomes observable, its residual is added to the
history and the miscoverage level is updated according to
\eqref{eq:adaptive_update}. Thus, no current or future response is used to
construct its own prediction interval. We include this procedure as an
empirical comparator for time-varying forecast errors.

\subsubsection{Reservoir Conformal Prediction}
\label{subsec:rescp}

Reservoir Conformal Prediction (ResCP) \cite{Neglia2026ResCP} provides an
adaptive conformal benchmark designed specifically for time-series forecasting.
ResCP maps the recent forecasting history into a reservoir state and uses
similarity between the current state and past reservoir states to construct
state-dependent weights on previously observed forecast residuals. Prediction
intervals are then formed from weighted residual quantiles, allowing the
uncertainty estimate to adapt to local temporal behavior.

In our experiments, ResCP is treated as an external
uncertainty-quantification procedure rather than as part of the Bayesian--conformal width theory developed below. The reservoir configuration is fixed in advance, while the sampling temperature and
residual-history policy are selected using only pre-test observations and are frozen before test evaluation. During testing, reservoir states and
residual histories are updated sequentially, and a forecast residual becomes available for calibration only after its corresponding outcome has been
observed. Detailed tuning ranges, history policies, and implementation choices are given in the experimental details.
\section{Theory}
\label{sec:theory}

We study how Bayesian and split-conformal NGRC intervals differ as the complexity
of the readout changes relative to sample size. Throughout this section, the
forecast horizon is fixed and suppressed from the notation. The Bayesian and
conformal procedures use the same ridge point predictor, so the comparison
isolates differences in uncertainty quantification rather than differences in
point prediction.

The assumptions differ across the regimes considered below. The
fixed-dimensional result permits readout misspecification and requires only
stationarity, ergodicity, and moment and quantile conditions. The proportional
high-dimensional results instead assume an exact linear readout with Gaussian
noise together with stronger conditions on the feature distribution. Thus, the
high-dimensional results should be interpreted as analytical benchmarks rather
than as misspecification-robust extensions of the fixed-dimensional theory.

The analysis proceeds from a general benchmark to the nonlinear feature
structure specific to NGRC. We first establish a fixed-dimensional comparison
without requiring the chosen feature representation to define an exact
conditional linear model. We then derive a finite-trace decomposition showing
that, in proportional dimension, the Bayesian--conformal discrepancy is
governed by posterior uncertainty, frequentist estimation variance, and ridge
prediction bias.

The main NGRC-specific result concerns quadratic polynomial features. Unlike an
ordinary Gaussian design, quadratic NGRC coordinates are not independent. For a
Gaussian delay vector $G$, coordinates such as $G_jG_k$ and
$(G_j^2-1)/\sqrt{2}$ are dependent even though the centered, normalized feature
vector is isotropic. Thus standard arguments based on independent feature
coordinates cannot be applied directly. We show that the quadratic feature map
nevertheless satisfies sufficient quadratic-form concentration for proportional
random-matrix limits and, under a diffuseness condition on the quadratic
prediction direction, admits a Gaussian approximation for forecast errors.
These properties recover the finite-trace Bayesian and conformal width formulas
despite the dependence among polynomial feature coordinates.

We then specialize the general comparison to the isotropic proportional regime,
where random-matrix limits yield an explicit signal-dependent phase boundary.
Finally, we state a coupling result showing when independent-window probability
statements can be transferred to sufficiently separated observations from a
temporally dependent process.

\paragraph{Coverage terminology.}
Throughout the asymptotic theory, ``coverage'' refers to marginal frequentist
coverage for a new test observation unless otherwise stated. A Bayesian
credible interval has an exact posterior-predictive interpretation under the
specified Bayesian probability model, but this does not in general imply exact
frequentist coverage for a fixed readout. Standard split conformal prediction
has a finite-sample marginal coverage guarantee under exchangeability. The
stationary-ergodic results below instead concern asymptotic marginal coverage,
and the dependent-window extension incurs an explicit coupling error. We do not
claim feature-conditional or sequential conditional coverage unless it is
stated explicitly.

\subsection{Fixed-Dimensional Bayesian--Conformal Comparison}
\label{subsec:fixed_theory}

Suppose $(r_t,y_t)_{t\in\mathbb Z}$ is strictly stationary and ergodic, with
$r_t\in\mathbb R^p$ and fixed $p$. Assume
\(
\mathbb E\|r_0\|^2+\mathbb E y_0^2<\infty,
\)
and define
\(
\Sigma=\mathbb E[r_0r_0^\top],
\;\;
w^\dagger=\Sigma^\dagger\mathbb E[r_0y_0],
\;\;
e_t=y_t-r_t^\top w^\dagger,
\;\;
\tau^2=\mathbb E[e_0^2]\in(0,\infty),
\)
where $\Sigma^\dagger$ denotes the Moore--Penrose inverse. Let
$\mathcal S=\operatorname{range}(\Sigma)$. The vector $w^\dagger$ is the
minimum-norm population least-squares projection; we do not assume that
$\mathbb E[y_t\mid r_t]=r_t^\top w^\dagger$ or that $\Sigma$ is nonsingular.
This projection formulation is natural for NGRC, where a finite delay and
polynomial feature map generally approximate rather than exactly specify the
underlying dynamics.

Also define
\(
S_n=\frac{1}{n}R_n^\top R_n,
\;\;
\widehat w_n
=
(S_n+\lambda_n I_p)^{-1}\frac{R_n^\top y_n}{n},
\)
where $\lambda_n>0$ and $\lambda_n\to0$. Define the fitted residual scale
\(
\widehat\tau_n^2
=
\frac{1}{n}
\sum_{t=1}^n
(y_t-r_t^\top\widehat w_n)^2.
\)

\begin{lemma}[Projection and residual-scale consistency]
\label{lem:projection_consistency}
Under the preceding conditions,
\(
\widehat w_n\longrightarrow w^\dagger
\;\; \text{and} \;\;
\widehat\tau_n^2\longrightarrow\tau^2
\;\; \text{almost surely}.
\)
The result does not require $\Sigma$ to be invertible.
\end{lemma}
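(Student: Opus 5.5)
The plan is to reduce everything to Birkhoff's ergodic theorem applied to finitely many integrable functionals of the stationary ergodic sequence $(r_t,y_t)$, and then to handle the possibly singular $\Sigma$ by a subspace argument. Write $b_n=R_n^\top y_n/n$ and $b=\mathbb E[r_0y_0]$. The moment condition $\mathbb E\|r_0\|^2+\mathbb E y_0^2<\infty$ makes every entry of $r_0r_0^\top$ and of $r_0y_0$ integrable, by Cauchy--Schwarz for the cross terms. Since there are finitely many entries, the ergodic theorem gives, on a single probability-one event,
\[
S_n\to\Sigma
\qquad\text{and}\qquad
b_n\to b .
\]

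The key step, and the main obstacle, is passing to the limit in $(S_n+\lambda_nI_p)^{-1}b_n$ when $\Sigma$ is singular. Here $\lambda_n\to0$ means the inverse need not stay bounded on all of $\mathbb R^p$. I would first show that $r_t\in\mathcal S$ almost surely for every $t$. If $u\perp\mathcal S$, then
\[
\mathbb E[(u^\top r_0)^2]=u^\top\Sigma u=0 ,
\]
so $u^\top r_0=0$ a.s. Applying this to a basis of $\mathcal S^\perp$ and using stationarity and a countable union over $t$, there is one null set outside of which all $r_t$ lie in $\mathcal S$. Consequently $S_n$ maps $\mathbb R^p$ into $\mathcal S$ and annihilates $\mathcal S^\perp$. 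In addition, $b_n\in\mathcal S$, and $b=\mathbb E[r_0y_0]\in\mathcal S$.

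It follows that $S_n+\lambda_nI_p$ is block diagonal with respect to $\mathcal S\oplus\mathcal S^\perp$, acting as $\lambda_n I$ on $\mathcal S^\perp$. Since $b_n\in\mathcal S$, we get
\[
\widehat w_n=\bigl(S_n|_{\mathcal S}+\lambda_n I_{\mathcal S}\bigr)^{-1}b_n\in\mathcal S .
\]
Now $S_n|_{\mathcal S}\to\Sigma|_{\mathcal S}$, which is positive definite on $\mathcal S$ with smallest eigenvalue $\mu>0$. Also $\lambda_n\to0$. Hence, for large $n$, the smallest eigenvalue of $S_n|_{\mathcal S}+\lambda_nI_{\mathcal S}$ exceeds $\mu/2$. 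By continuity of matrix inversion on this set, $\widehat w_n\to(\Sigma|_{\mathcal S})^{-1}b$. This limit equals $\Sigma^\dagger b=w^\dagger$, since $\Sigma^\dagger$ inverts $\Sigma$ on its range and vanishes on $\mathcal S^\perp$.

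For the residual scale, set $\delta_n=\widehat w_n-w^\dagger$, so that $y_t-r_t^\top\widehat w_n=e_t-r_t^\top\delta_n$. Expanding the square gives
\[
\widehat\tau_n^2
=\frac1n\sum_{t=1}^n e_t^2
-2\,\delta_n^\top\frac1n\sum_{t=1}^n r_te_t
+\delta_n^\top S_n\delta_n .
\]
Each $e_t$ is square integrable, since $e_t^2\le 2y_t^2+2\|r_t\|^2\|w^\dagger\|^2$, and $r_te_t$ is integrable. The ergodic theorem therefore gives $n^{-1}\sum_t e_t^2\to\tau^2$, and gives $n^{-1}\sum_t r_te_t\to\mathbb E[r_0e_0]$. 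The latter is finite; it in fact equals $b-\Sigma w^\dagger=0$ by the normal equations, though only boundedness is needed. Combining this with $\delta_n\to0$ and $S_n\to\Sigma$ bounded, the last two terms vanish. Hence $\widehat\tau_n^2\to\tau^2$ almost surely. All statements hold on the intersection of finitely or countably many probability-one events, and no invertibility of $\Sigma$ was used.
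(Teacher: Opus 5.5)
Your proposal is correct and follows essentially the same route as the paper: show that $r_t\in\mathcal S$ almost surely, observe that $S_n+\lambda_nI_p$ preserves $\mathcal S$ and has an eventually bounded inverse there, apply the ergodic theorem to the finitely many moment averages, and expand $\widehat\tau_n^2$ around $e_t$. The only cosmetic difference is that you conclude via continuity of matrix inversion applied together with $b_n\to b$, whereas the paper bounds $\widehat w_n-w^\dagger=(S_n+\lambda_nI_p)^{-1}(n^{-1}R_n^\top e_n-\lambda_nw^\dagger)$ directly using Weyl's inequality.
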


The proof is given in Appendix~\ref{app:fixed_theory}. The use of the population
projection rather than a correctly specified conditional linear model is
important for NGRC, where a finite delay and polynomial feature map should
generally be viewed as an approximation to the underlying dynamics.

Let $z=z_{1-\alpha/2}$, and let $q_e$ denote the $(1-\alpha)$ quantile of
$|e_0|$. Assume that the distribution function of $|e_0|$ is continuous in a
neighborhood of $q_e$ and strictly crosses $1-\alpha$ there. For calibration,
let
\(
\mathcal I_n=\{a_n+1,\ldots,a_n+m_n\}
\)
be a deterministic contiguous block disjoint from the observations used to fit
$\widehat w_n$, where $a_n$ is deterministic and $m_n\to\infty$.

For the Gaussian readout construction, define
\[
W_{B,n}(r)
=
2z\widehat\tau_n
\sqrt{
1+
\frac{1}{n}
r^\top(S_n+\lambda_nI_p)^{-1}r
}.
\]
Let $W_{C,n}$ denote the width of the ordinary split-conformal interval based
on the absolute calibration residuals
$|y_i-r_i^\top\widehat w_n|$, $i\in\mathcal I_n$.

\begin{theorem}[Fixed-dimensional Bayesian--conformal width discrepancy]
\label{thm:fixed_width_discrepancy}
For every fixed $r\in\mathcal S$,
\(
W_{B,n}(r)
\xrightarrow{\mathbb P}
2z\tau,
\;\;
W_{C,n}
\xrightarrow{\mathbb P}
2q_e.
\)
Consequently,
\begin{equation}
W_{B,n}(r)-W_{C,n}
\xrightarrow{\mathbb P}
2(z\tau-q_e).
\label{eq:fixed_width_difference}
\end{equation}
In particular, the two widths are asymptotically equivalent if and only if
\(
q_e=z\tau.
\)
\end{theorem}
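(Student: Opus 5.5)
The plan is to treat the two widths separately and combine them at the end by Slutsky's lemma, building everything on Lemma~\ref{lem:projection_consistency}.

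\textbf{Bayesian width.} Lemma~\ref{lem:projection_consistency} already gives $\widehat\tau_n\to\tau$ almost surely, so it suffices to show that the leverage term vanishes:
\[
\frac1n\,r^\top(S_n+\lambda_nI_p)^{-1}r\to0 .
\]
This step needs care because $\Sigma$ may be singular and $\lambda_n\to0$, so $(S_n+\lambda_nI_p)^{-1}$ need not stay bounded. I would handle it as follows.

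By the ergodic theorem, $S_n\to\Sigma$ almost surely. Moreover, $\operatorname{range}(S_n)\subseteq\mathcal S$ almost surely: for any $v\perp\mathcal S$ we have $\mathbb E(r_0^\top v)^2=0$, so $r_t^\top v=0$ for all $t$ almost surely. Hence $S_n$ and $(S_n+\lambda_nI_p)^{-1}$ act invariantly on $\mathcal S$.

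On $\mathcal S$, the smallest eigenvalue of $S_n$ restricted to $\mathcal S$ converges to the smallest positive eigenvalue $\mu_{\min}>0$ of $\Sigma$. For $r\in\mathcal S$ this gives, eventually,
\[
r^\top(S_n+\lambda_nI_p)^{-1}r\le \frac{2\|r\|^2}{\mu_{\min}},
\]
and dividing by $n$ sends the term to $0$. Therefore $W_{B,n}(r)\to2z\tau$ almost surely, hence in probability. This is where the hypothesis $r\in\mathcal S$ is used.

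\textbf{Conformal width.} Write the calibration scores as
\[
S_i=|e_i-r_i^\top(\widehat w_n-w^\dagger)| .
\]
Since $\big||S_i|-|e_i|\big|\le\|r_i\|\,\|\widehat w_n-w^\dagger\|$, the perturbation is controlled by $\delta_n=\|\widehat w_n-w^\dagger\|$.

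The main obstacle is that the calibration block index $a_n$ moves with $n$, so a pointwise ergodic theorem along a fixed sequence does not apply directly. I would instead use stationarity together with an $L^1$ ergodic theorem, which transfers to shifted blocks. For each fixed $u$, the empirical CDF
\[
\widehat F_m(u)=\frac1{m_n}\sum_{i\in\mathcal I_n}\mathbf 1\{|e_i|\le u\}
\]
has the same law as the block starting at $1$, and converges in $L^1$, hence in probability, to $F(u)=\mathbb P(|e_0|\le u)$ as $m_n\to\infty$.

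To control the perturbation, fix $\eta>0$ and a large $K$. Split the calibration points into those with $\|r_i\|\le K$ and those with $\|r_i\|>K$; the fraction of the latter is at most about $\mathbb P(\|r_0\|>K)$ in probability, again by the block $L^1$ argument. On the event $\delta_n K<\eta$, which has probability tending to one by Lemma~\ref{lem:projection_consistency}, this yields the sandwich
\[
\widehat F_m(u-\eta)-\epsilon_K\le \widehat F^S_m(u)\le \widehat F_m(u+\eta)+\epsilon_K ,
\]
where $\widehat F^S_m$ is the empirical CDF of the $S_i$.

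One point needs checking: $\widehat w_n$ is built from the fitting data, which may be dependent on the calibration block. Since the bound uses only the scalar $\delta_n$ and a uniform-in-$i$ deterministic inequality, no independence between the two blocks is needed.

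The quantile index satisfies $k_m/m_n\to1-\alpha$. Because $F$ is continuous near $q_e$ and strictly crosses $1-\alpha$ there, taking $\eta,\epsilon_K$ small gives
\[
\widehat F^S_m(q_e-\epsilon)<k_m/m_n<\widehat F^S_m(q_e+\epsilon)
\]
with probability tending to one. Hence $\widehat q_{1-\alpha}\to q_e$ in probability, and $W_{C,n}=2\widehat q_{1-\alpha}\to2q_e$.

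\textbf{Conclusion.} Subtracting the two limits gives \eqref{eq:fixed_width_difference}. The limit $2(z\tau-q_e)$ is a constant, so it is zero exactly when $q_e=z\tau$, which gives the equivalence claim.
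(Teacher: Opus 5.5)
Your proposal is correct and follows essentially the paper's route. The leverage term is handled through invariance of $\mathcal S$ and a bounded inverse on $\mathcal S$. The conformal quantile is handled through the reverse-triangle perturbation bound, a stationarity shift of the calibration block so the ergodic theorem applies, and a sandwich of empirical CDFs around $q_e$. The only difference is cosmetic: you control the exceptional scores by truncating at $\|r_i\|\le K$, whereas the paper packages this step in its separate quantile-stability lemma. There the fraction of large perturbations is bounded by Markov's inequality applied to $m_n^{-1}\sum_i|S_i-S_i^0|\le\|\widehat w_n-w^\dagger\|\,m_n^{-1}\sum_i\|r_i\|$.
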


The proof is given in Appendix~\ref{app:fixed_theory}.
Equation~\eqref{eq:fixed_width_difference} clarifies that low dimensionality
alone does not imply Bayesian--conformal equivalence. The Gaussian readout
interval is determined asymptotically by the residual root-mean-square scale,
whereas split conformal prediction is determined by an absolute-residual
quantile. Gaussian residuals provide an important case in which these two
summaries agree. The result does not require an exact conditional linear model,
a nonsingular population feature covariance, or mixing of the calibration
sequence; stationarity and ergodicity are sufficient for the deterministic
contiguous calibration blocks considered here.

This result is related to earlier work on the asymptotic efficiency of
conformalized ridge regression \cite{BurnaevVovk2014}. Here, we characterize
the fixed-dimensional comparison for the NGRC readout without requiring an
exact conditional linear model or a nonsingular population feature covariance.

The same calculation also identifies an effective-dimensional extension. Let
\(
d_{\mathrm{eff},n}
=
\operatorname{tr}\{
S_n(S_n+\lambda_nI_p)^{-1}
\}.
\)
For the fitted observations,
\[
\frac{1}{n}\sum_{t=1}^n W_{B,n}(r_t)^2
=
4z^2\widehat\tau_n^2
\left(
1+\frac{d_{\mathrm{eff},n}}{n}
\right).
\]
Thus the Bayesian parameter-uncertainty contribution also vanishes whenever
$d_{\mathrm{eff},n}/n\to0$, provided the corresponding prediction and
calibration errors are well behaved. This observation motivates the
proportional regime, where the effective dimension need not be negligible.

\begin{corollary}[Fixed-dimensional marginal coverage]
\label{cor:fixed_coverage}
Suppose the conditions of Theorem~\ref{thm:fixed_width_discrepancy} hold, and
let $(r_*,y_*)$ be an independent draw from the stationary marginal
distribution. Write \(e_*=y_*-r_*^\top w^\dagger,\) and
\(F_{|e|}(x)=\mathbb P(|e_*|\le x).\) Assume additionally that $F_{|e|}$ is
continuous at $z\tau$. Then
\(\mathbb P\{y_*\in C_{B,n}(r_*)\}\longrightarrow F_{|e|}(z\tau),\)
and
\(\mathbb P\{y_*\in C_{C,n}(r_*)\}\longrightarrow1-\alpha.\)
In particular, when $q_e=z\tau$, the Bayesian interval also has asymptotic
marginal coverage $1-\alpha$.
\end{corollary}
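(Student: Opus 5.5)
The plan is to write both coverage events in terms of the population residual $e_*$ plus a vanishing perturbation, and then pass to the limit using continuity of $F_{|e|}$ at the relevant threshold. Write the prediction error as
\[
y_*-r_*^\top\widehat w_n=e_*-r_*^\top(\widehat w_n-w^\dagger).
\]
Since $(r_*,y_*)$ is independent of the fitting and calibration data and $\|r_*\|<\infty$ almost surely, Lemma~\ref{lem:projection_consistency} gives $\Delta_n:=r_*^\top(\widehat w_n-w^\dagger)\to0$ in probability. With this, the Bayesian event is $\{|e_*-\Delta_n|\le W_{B,n}(r_*)/2\}$ and the conformal event is $\{|e_*-\Delta_n|\le \widehat q_{1-\alpha}\}$.

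\textbf{Bayesian interval.} I first need $W_{B,n}(r_*)\to2z\tau$ in probability, now at a random test point. Theorem~\ref{thm:fixed_width_discrepancy} covers only a fixed $r\in\mathcal S$, so I will argue directly.

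\begin{enumerate}
\item Since $r_*\in\mathcal S$ almost surely, the quadratic form is
\[
\frac1n r_*^\top(S_n+\lambda_nI)^{-1}r_*=\frac1n r_*^\top\Sigma^\dagger r_*+o_P(1/n)\cdot O_P(1)\to0 .
\]
\item Alternatively, I can condition on $r_*$ using independence and apply the fixed-$r$ argument pointwise, then use dominated convergence on the conditional probabilities. I expect this step to be the main technical obstacle, because the ridge inverse can blow up on $\mathcal S^\perp$ as $\lambda_n\to0$.
\item The obstacle is avoided by noting $r_*\in\mathcal S$ almost surely, since $\mathbb E[(u^\top r_*)^2]=0$ for $u\perp\mathcal S$. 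On $\mathcal S$, $S_n$ restricted to that subspace converges to an invertible $\Sigma|_{\mathcal S}$. The one remaining delicacy is that $S_n$ may have small mass leaking off $\mathcal S$; but almost surely every $r_t\in\mathcal S$, so $S_n$ itself lives on $\mathcal S$.
\end{enumerate}

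Combining with $\widehat\tau_n\to\tau$ gives $W_{B,n}(r_*)/2\to z\tau$. Hence $|e_*-\Delta_n|-W_{B,n}/2$ converges in probability to $|e_*|-z\tau$. Because $F_{|e|}$ is continuous at $z\tau$, a standard Slutsky/portmanteau argument (sandwiching with $\pm\epsilon$ and letting $\epsilon\downarrow0$) yields $\mathbb P\{y_*\in C_{B,n}(r_*)\}\to F_{|e|}(z\tau)$.

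\textbf{Conformal interval.} Theorem~\ref{thm:fixed_width_discrepancy} gives $\widehat q_{1-\alpha}=W_{C,n}/2\to q_e$ in probability. Together with $\Delta_n\to0$, the same sandwich argument gives
\[
\mathbb P\{y_*\in C_{C,n}(r_*)\}\to F_{|e|}(q_e).
\]
The assumptions that $F_{|e|}$ is continuous near $q_e$ and strictly crosses $1-\alpha$ there give $F_{|e|}(q_e)=1-\alpha$. Finally, when $q_e=z\tau$, the Bayesian limit equals $F_{|e|}(q_e)=1-\alpha$, which is the last claim.
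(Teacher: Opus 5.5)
Your proposal is correct and follows essentially the same route as the paper. You write $y_*-r_*^\top\widehat w_n=e_*+o_{\mathbb P}(1)$, establish $W_{B,n}(r_*)/2\xrightarrow{\mathbb P}z\tau$ and $W_{C,n}/2\xrightarrow{\mathbb P}q_e$, and conclude by a Slutsky-type sandwich using continuity of $F_{|e|}$ at $z\tau$ and $F_{|e|}(q_e)=1-\alpha$. Your explicit check that $r_*\in\mathcal S$ almost surely, so that the eventually bounded restricted resolvent $(S_n+\lambda_nI)^{-1}|_{\mathcal S}$ controls the leverage term even when $\lambda_n\to0$ quickly, is exactly what the paper compresses into ``the same fixed-dimensional leverage argument, now applied to the random test feature''; once that is noted, your step~2 conditioning detour is unnecessary.
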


Corollary~\ref{cor:fixed_coverage} concerns marginal coverage for an independent
draw from the stationary marginal distribution; it does not claim conditional
coverage given the current forecasting history. The Gaussian readout interval
is asymptotically calibrated according to where the Gaussian radius $z\tau$
falls in the actual absolute prediction-error distribution, whereas split
conformal prediction estimates the corresponding absolute-error quantile
directly. Thus, the same residual RMS--quantile mismatch that determines the
limiting width difference also determines asymptotic Bayesian marginal
miscalibration.

\subsection{High-Dimensional Bias, Variance, and Posterior Uncertainty}
\label{subsec:hd_decomposition}

We next consider a sequence of problems for which $p=p_n$ grows with the
training size $n$. Write
\(
S=\frac{1}{n}R^\top R,
\;\;
A=(S+\lambda I_p)^{-1},
\;\;
\widehat w=A\frac{R^\top y}{n},
\)
where $\lambda>0$ is fixed on the normalized covariance scale.

For the high-dimensional benchmark, suppose
\(
y=Rw_0+\varepsilon,
\;\;
\mathbb E[\varepsilon\mid R]=0,
\;\;
\operatorname{Cov}(\varepsilon\mid R)=\sigma^2I_n.
\)
Let $r_*$ be an independent test feature with
$\Sigma=\mathbb E[r_*r_*^\top]$, and let
$y_*=r_*^\top w_0+\varepsilon_*$, where $\varepsilon_*$ is independent
mean-zero test noise with variance $\sigma^2$.

Define $B_n=\lambda^2w_0^\top A\Sigma Aw_0$,
$V_n=\frac{\sigma^2}{n}\operatorname{tr}(\Sigma ASA)$, and
$P_n=\frac{\sigma^2}{n}\operatorname{tr}(\Sigma A)$.
Here $B_n$ is squared ridge prediction bias, $V_n$ is frequentist estimation
variance, and $P_n$ is feature-averaged Bayesian posterior parameter
uncertainty. The following comparison is an exact finite-sample algebraic
identity conditional on the design.

\begin{lemma}[Bias--variance--posterior decomposition]
\label{lem:bvp_decomposition}
The out-of-sample prediction error satisfies
\begin{equation}
\mathbb E[(y_*-r_*^\top\widehat w)^2\mid R]
=
\sigma^2+B_n+V_n,
\;\; \text{and} \;\;
P_n-V_n-B_n
=
\frac{\sigma^2\lambda}{n}\operatorname{tr}(\Sigma A^2)
-
\lambda^2w_0^\top A\Sigma Aw_0.
\label{eq:bvp_difference}
\end{equation}
\end{lemma}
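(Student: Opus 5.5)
The plan is to first decompose the ridge estimation error conditional on $R$, and then compute the out-of-sample risk by conditioning on $R$ and $\varepsilon$ and averaging over the independent test pair $(r_*,\varepsilon_*)$. Substituting $y=Rw_0+\varepsilon$ into $\widehat w=A R^\top y/n$ gives
\[
\widehat w = A S w_0 + A\frac{R^\top\varepsilon}{n}.
\]
Since $AS=A(S+\lambda I_p)-\lambda A=I_p-\lambda A$, this becomes
\[
\widehat w-w_0=-\lambda A w_0 + A\frac{R^\top\varepsilon}{n}.
\]
The test error is then
\[
y_*-r_*^\top\widehat w=\varepsilon_*+\lambda r_*^\top A w_0 - r_*^\top A R^\top\varepsilon/n.
\]

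Next we square this and take conditional expectations given $R$. Because $\varepsilon_*$ is mean zero and independent of everything else, its cross terms vanish and it contributes $\sigma^2$. Because $\mathbb E[\varepsilon\mid R]=0$ and $r_*$ is independent of $(R,\varepsilon)$, the cross term between the bias part and the noise part also vanishes. For the remaining terms we use $\mathbb E[r_*r_*^\top]=\Sigma$:
\begin{itemize}
\item the bias term gives $\lambda^2 w_0^\top A\Sigma A w_0=B_n$;
\item the variance term gives $\frac{\sigma^2}{n^2}\operatorname{tr}(\Sigma A R^\top R A)=\frac{\sigma^2}{n}\operatorname{tr}(\Sigma ASA)=V_n$, using $\operatorname{Cov}(\varepsilon\mid R)=\sigma^2 I_n$ and the symmetry of $A$.
\end{itemize}
This proves the first identity.

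For the second identity, the key algebraic fact is again $S=A^{-1}-\lambda I_p$, which gives $ASA=A-\lambda A^2$. Hence
\[
V_n=\frac{\sigma^2}{n}\operatorname{tr}(\Sigma A)-\frac{\sigma^2\lambda}{n}\operatorname{tr}(\Sigma A^2)=P_n-\frac{\sigma^2\lambda}{n}\operatorname{tr}(\Sigma A^2).
\]
Rearranging and subtracting $B_n=\lambda^2w_0^\top A\Sigma A w_0$ yields the stated expression for $P_n-V_n-B_n$.

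No step here is a serious obstacle. The only care needed is in justifying that the cross terms vanish: this requires independence of $r_*$ from $(R,\varepsilon)$ and finiteness of second moments, so that the expectations exist. We will also note that $A$ commutes with $S$, which is what makes $ASA=A-\lambda A^2$ valid.
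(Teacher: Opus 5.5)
Your proposal is correct and follows essentially the paper's argument. It uses the same error decomposition $\widehat w-w_0=-\lambda Aw_0+\frac1nAR^\top\varepsilon$, kills the cross terms using the conditionally mean-zero noise and the independence of the test pair, and obtains the second claim from the identity $ASA=A-\lambda A^2$. The only difference is cosmetic: the paper first computes $\mathbb E[(\widehat w-w_0)^\top\Sigma(\widehat w-w_0)\mid R]=B_n+V_n$ and then integrates out the test pair, whereas you average over $(r_*,\varepsilon_*)$ and $\varepsilon$ in a single expansion.
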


The proof is given in Appendix~\ref{app:hd_theory}. The decomposition highlights
the central distinction between the two uncertainty procedures:
\(
\text{Bayesian predictive variance}
=
\text{noise}+\text{posterior uncertainty},
\)
whereas
\(
\text{frequentist prediction error}
=
\text{noise}+\text{estimation variance}+\text{ridge bias}.
\)
Therefore, the fact that posterior uncertainty exceeds frequentist estimation
variance does not by itself imply that the Bayesian interval is wider. The
prediction bias appearing in the conformal residual distribution can reverse
the ordering.

We next translate this decomposition into interval widths. Assume
$\varepsilon\mid R\sim N(0,\sigma^2I_n)$. Suppose the calibration and test
features are independent of the training sample and are i.i.d.
$N(0,\Sigma)$, with independent Gaussian noises. Assume $p/n$ is bounded,
$\|\Sigma\|_{\mathrm{op}}$ and $\|w_0\|$ are uniformly bounded, and the
calibration size $m\to\infty$.

For the conformal calculation, define the realized feature-averaged
prediction-error variance
\(
Q_n=(\widehat w-w_0)^\top\Sigma(\widehat w-w_0).
\)
Conditional on the fitted training sample, a fresh calibration or test residual
is Gaussian with variance $\sigma^2+Q_n$. Moreover,
\(\mathbb E(Q_n\mid R)=B_n+V_n,\)
and the argument in Appendix~\ref{app:hd_theory} gives
\(Q_n=B_n+V_n+O_{\mathbb P}(n^{-1/2}).\)
This concentration converts the conditional residual distribution into the
finite-trace conformal width approximation below.

\begin{theorem}[Finite-trace width comparison]
\label{thm:finite_trace_widths}
For the common ridge center and known noise scale $\sigma$,
\(
\frac{W_{B,n}(r_*)^2}{4z^2}
=
\sigma^2+P_n+O_{\mathbb P}(n^{-1/2}),
\;\; \text{whereas} \;\;
\frac{W_{C,n}^2}{4z^2}
=
\sigma^2+B_n+V_n+O_{\mathbb P}(n^{-1/2}+m^{-1/2}).
\)
Consequently,
\begin{equation}
\frac{W_{B,n}(r_*)^2-W_{C,n}^2}{4z^2}
=
\frac{\sigma^2\lambda}{n}
\operatorname{tr}(\Sigma A^2)
-
B_n
+
O_{\mathbb P}(n^{-1/2}+m^{-1/2}).
\label{eq:finite_trace_width_difference}
\end{equation}
\end{theorem}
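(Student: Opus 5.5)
The argument treats the two widths separately and then subtracts, using Lemma~\ref{lem:bvp_decomposition} for the algebraic identity.

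\emph{Bayesian width.} With known $\sigma$, the squared width is
\[
W_{B,n}(r_*)^2/(4z^2)=\sigma^2+\frac{\sigma^2}{n}r_*^\top A r_*,
\]
since $\Sigma_w=\sigma^2 n^{-1}A$. Conditional on $R$, $r_*\sim N(0,\Sigma)$ is independent of $A$. Hence
\[
\mathbb E\bigl[r_*^\top A r_*\mid R\bigr]=\operatorname{tr}(\Sigma A),
\]
which gives the centering $\sigma^2+P_n$. For the fluctuation, write $r_*=\Sigma^{1/2}g$ and apply the Gaussian quadratic-form variance formula:
\[
\operatorname{Var}\bigl(r_*^\top A r_*\mid R\bigr)=2\operatorname{tr}\bigl((\Sigma^{1/2}A\Sigma^{1/2})^2\bigr)\le 2p\|\Sigma\|_{\mathrm{op}}^2\lambda^{-2}.
\]
Here $\|A\|_{\mathrm{op}}\le\lambda^{-1}$. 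After the $n^{-1}$ scaling this variance is $O(p/n^2)=O(n^{-1})$ because $p/n$ is bounded. Chebyshev then gives $\frac{\sigma^2}{n}r_*^\top Ar_*=P_n+O_{\mathbb P}(n^{-1/2})$.

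\emph{Conformal width.} Conditional on the training sample, the calibration residuals are i.i.d.\ $N(0,\sigma^2+Q_n)$. This holds because each residual equals $\varepsilon_i-r_i^\top(\widehat w-w_0)$, with $r_i$ and $\varepsilon_i$ independent Gaussians. So $|e_i|/\sqrt{\sigma^2+Q_n}$ are i.i.d.\ half-normal, whose $(1-\alpha)$ quantile is exactly $z=z_{1-\alpha/2}$. Sample-quantile asymptotics for a density bounded away from zero near $z$, together with $k_m/m=1-\alpha+O(m^{-1})$, give
\[
\widehat q_{1-\alpha}=\sqrt{\sigma^2+Q_n}\,\bigl(z+O_{\mathbb P}(m^{-1/2})\bigr).
\]
One can use the DKW inequality plus a local inversion to obtain this conditionally on the training sample. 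The pivot distribution does not depend on the training sample, so the bound is uniform. Squaring, and using that $\sigma^2+Q_n$ is $O_{\mathbb P}(1)$, gives $W_{C,n}^2/(4z^2)=\sigma^2+Q_n+O_{\mathbb P}(m^{-1/2})$. The stated concentration $Q_n=B_n+V_n+O_{\mathbb P}(n^{-1/2})$ then yields the claim.

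\emph{Concentration of $Q_n$ (main obstacle).} The paper defers this to the appendix, but I would prove it as follows. Write
\[
\widehat w-w_0=-\lambda Aw_0+A R^\top\varepsilon/n .
\]
Then $Q_n$ splits into three pieces:
\begin{itemize}
\item the deterministic-given-$R$ term $B_n$;
\item a cross term $-2\lambda w_0^\top A\Sigma AR^\top\varepsilon/n$, which is Gaussian with conditional variance $\le 4\lambda^2\sigma^2\|w_0\|^2\|\Sigma\|^2\|A\|^2\|AR^\top R A\|/n^2=O(n^{-1})$, using $\|AS A\|\le\lambda^{-1}$;
\item a Gaussian quadratic form $\varepsilon^\top M\varepsilon$ with $M=RA\Sigma AR^\top/n^2$, whose mean is $V_n$ and whose variance is $2\sigma^4\operatorname{tr}(M^2)\le 2\sigma^4 p\|M\|_{\mathrm{op}}^2$.
\end{itemize}
For the last bound, $\|M\|_{\mathrm{op}}\le\|\Sigma\|\,\|ASA\|/n=O(n^{-1})$, so the variance is $O(p/n^2)=O(n^{-1})$. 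Every bound holds deterministically given $R$; this is the point needing care, since no operator-norm control of $S$ itself is required. Chebyshev therefore gives $Q_n=B_n+V_n+O_{\mathbb P}(n^{-1/2})$.

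\emph{Conclusion.} Subtracting the two expansions, the difference of squared widths is $P_n-B_n-V_n+O_{\mathbb P}(n^{-1/2}+m^{-1/2})$. Lemma~\ref{lem:bvp_decomposition} rewrites $P_n-V_n-B_n$ as $\frac{\sigma^2\lambda}{n}\operatorname{tr}(\Sigma A^2)-B_n$. This uses $A-ASA=\lambda A^2$, which follows from $A(S+\lambda I)=I$. That establishes \eqref{eq:finite_trace_width_difference}.
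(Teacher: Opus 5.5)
Your proposal is correct and follows the paper's proof essentially step for step: Gaussian quadratic-form variance plus Chebyshev for the leverage term $\frac{\sigma^2}{n}r_*^\top Ar_*$, conditional Gaussianity of the calibration residuals with DKW and local quantile inversion for $W_{C,n}$, the bias/cross/quadratic split of $Q_n$ controlled by the deterministic-given-$R$ bounds $\|A\|_{\mathrm{op}}\le\lambda^{-1}$ and $\|ASA\|_{\mathrm{op}}\le(4\lambda)^{-1}$, and Lemma~\ref{lem:bvp_decomposition} for the final identity. Your $\varepsilon^\top M\varepsilon$ is the paper's $\xi^\top\Sigma\xi$ with $\xi\mid R\sim N(0,\frac{\sigma^2}{n}ASA)$, and your half-normal pivot is a slightly tidier route to the uniform $O_{\mathbb P}(m^{-1/2})$ quantile bound than the paper's argument that the density is bounded below with probability tending to one.
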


The resulting interval comparison is asymptotic. Under the assumptions above,
the Bayesian and conformal squared widths admit finite-trace approximations,
proved in Appendix~\ref{app:hd_theory}. Their difference in
Equation~\eqref{eq:finite_trace_width_difference} includes ridge prediction bias
and therefore allows either procedure to have the larger asymptotic width. The
general finite-trace comparison depends on covariance-weighted traces and
signal alignment and does not require isotropy.

The high-dimensional Bayesian result treats $\sigma$ as known, or requires a
separately justified consistent estimator. In proportional dimension, the
training residual root mean square need not consistently estimate $\sigma$
because ridge shrinkage and high-dimensional leverage remain non-negligible;
the calculation is given in Appendix~\ref{app:hd_residual_scale}.

\subsection{Quadratic NGRC Features}
\label{subsec:quadratic_ngrc_theory}

The finite-trace decomposition in
Theorem~\ref{thm:finite_trace_widths} is a generic ridge result. Applying it to
NGRC requires additional work because polynomial features constructed from the
same delay vector are dependent. In particular, for a Gaussian delay vector
$G$, coordinates such as $G_jG_k$ and $G_j^2-1$ are not independent even after
centering and normalization. Whitening therefore makes the coordinates
uncorrelated but does not reduce the problem to the independent Gaussian
feature model considered in the preceding high-dimensional benchmark.

To isolate this issue, let $G\sim N(0,I_d)$ and define the centered quadratic
feature map
\begin{equation}
\phi_d(G)=
\begin{pmatrix}
G \\
\operatorname{svec}
\left(
\frac{GG^\top-I_d}{\sqrt{2}}
\right)
\end{pmatrix}
\in\mathbb R^p,
\qquad
p=\frac{d(d+3)}{2},
\label{eq:quadratic_feature_map}
\end{equation}
where $\operatorname{svec}$ is an isometry from symmetric matrices under the
Frobenius norm to Euclidean space. The quadratic coordinates consist of
$(G_j^2-1)/\sqrt{2}$ and $G_jG_k$, $j<k$. Although these coordinates are
dependent, they satisfy $\mathbb E[\phi_d(G)]=0$ and
$\mathbb E[\phi_d(G)\phi_d(G)^\top]=I_p$.

The map $\phi_d$ is a population-normalized theoretical benchmark rather than
the literal empirical NGRC representation in
Section~\ref{sec:method}, which includes an intercept and fit-sample
normalization of raw lag and quadratic-product coordinates. Extending the
theory to empirical normalization under dependent delays remains an open
problem.

Isotropy alone is not sufficient for the preceding Gaussian-design argument.
The extension requires quadratic-form concentration to control the spectral
behavior of the dependent feature matrix and sufficient diffuseness of the
quadratic prediction direction to obtain a Gaussian approximation for scalar
prediction errors. Quadratic-form concentration for isotropic random vectors
with dependent coordinates is sufficient for Marchenko--Pastur behavior under
proportional growth \cite{Yaskov2016}. For the feature map
\eqref{eq:quadratic_feature_map}, this concentration can be verified directly.
The corresponding technical results are given in
Appendix~\ref{app:quadratic_feature_proofs}.

To describe the second requirement, for any $v\in\mathbb R^p$, write
$v=(a,q)$, where $a\in\mathbb R^d$ and
$q\in\mathbb R^{d(d+1)/2}$, and define
\(
\mathcal Q_d(v)=\operatorname{svec}^{-1}(q).
\)
Then
\(
v^\top\phi_d(G)
=
a^\top G
+
\{G^\top\mathcal Q_d(v)G
-
\operatorname{tr}(\mathcal Q_d(v))\}/\sqrt{2}.
\)
Thus $\|\mathcal Q_d(v)\|_{\mathrm{op}}$ measures the largest individual
quadratic direction in the scalar functional generated by $v$.

Let
\(r=\Sigma_p^{1/2}\phi_d(G), \;\; \|\Sigma_p\|_{\mathrm{op}}\le L,\)
and assume
\(p/n\longrightarrow\gamma\in(0,\infty), \;\; m\longrightarrow\infty,\)
and \(\lambda>0.\)
The rows of the training design $R$, the calibration features, and the test
feature are independent copies of $r$, with independent Gaussian observation
noise of variance $\sigma^2$.

For the first benchmark, consider the random-effects readout
\begin{equation}
w_0
\sim
N\left(
0,\frac{s_p^2}{p}I_p
\right),
\qquad
y=r^\top w_0+\varepsilon,
\qquad
\varepsilon\sim N(0,\sigma^2),
\label{eq:quadratic_random_effects}
\end{equation}
where $s_p^2$ is uniformly bounded and the readout, feature vectors, and noises
are mutually independent.

Let $S=R^\top R/n$, $A=(S+\lambda I_p)^{-1}$, and define
$P_n=\frac{\sigma^2}{n}\operatorname{tr}(\Sigma_pA)$ and
$T_n=\frac{s_p^2\lambda^2}{p}\operatorname{tr}(\Sigma_pA^2)
+\frac{\sigma^2}{n}\operatorname{tr}(\Sigma_pASA)$.

\begin{theorem}[Quadratic-NGRC width approximation]
\label{thm:quadratic_ngrc_width}
Under the preceding conditions,

$$
\frac{W_{B,n}(r_*)^2}{4z^2}
=
\sigma^2+P_n+O_{\mathbb P}(n^{-1/4}),
\qquad
\frac{W_{C,n}^2}{4z^2}
=
\sigma^2+T_n+
O_{\mathbb P}(n^{-1/4}+m^{-1/2}).
$$

Consequently,
\begin{equation*}
\frac{W_{B,n}(r_*)^2-W_{C,n}^2}{4z^2}
=
\left(
\frac{\sigma^2\lambda}{n}
-
\frac{s_p^2\lambda^2}{p}
\right)
\operatorname{tr}(\Sigma_pA^2)
+
O_{\mathbb P}
\left(
n^{-1/4}+m^{-1/2}
\right).
\end{equation*}
When $\Sigma_p=I_p$ and $s_p^2\to s^2$, the same
Marchenko--Pastur width limits and phase relation as in
Corollary~\ref{cor:hd_phase} below hold despite the dependence among the
quadratic feature coordinates.
\end{theorem}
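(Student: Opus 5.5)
The argument runs in parallel with the proof of Theorem~\ref{thm:finite_trace_widths}. Gaussianity of the features enters there in two places, and each use is replaced by a property of the quadratic map $\phi_d$. The first is the Bayesian width, $W_{B,n}(r_*)^2/(4z^2)=\sigma^2+\frac{\sigma^2}{n}r_*^\top Ar_*$. The second is the conditional law of calibration residuals $y_i-r_i^\top\widehat w=\varepsilon_i+r_i^\top(w_0-\widehat w)$.

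\textbf{Bayesian width.} Condition on the training sample, so that $A$ is fixed. Then $r_*^\top Ar_*=\phi_d(G_*)^\top M\phi_d(G_*)$ with $M=\Sigma_p^{1/2}A\Sigma_p^{1/2}$ and $\|M\|_{\mathrm{op}}\le L/\lambda$. Its conditional mean is $\operatorname{tr}(\Sigma_pA)$, so we need $\operatorname{Var}(\phi^\top M\phi)=O(\|M\|_F^2)+O(\|M\|_{\mathrm{op}}^2 p)$, which is $O(p)$. This would follow from a direct fourth-moment computation for degree-$\le 2$ Wiener chaos: $\phi^\top M\phi$ is a polynomial of degree at most $4$ in $G$, and hypercontractivity bounds its variance by a sum of contractions of $M$ that are controlled by $\|M\|_{\mathrm{op}}^2p$. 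The same bound yields Yaskov's quadratic-form concentration condition $\operatorname{Var}(\phi^\top M\phi)/p^2\to0$, which also supplies the Marchenko--Pastur statements used later. Dividing by $n$ gives $\frac1n r_*^\top Ar_*=\frac1n\operatorname{tr}(\Sigma_pA)+O_{\mathbb P}(n^{-1/2})$, hence $\sigma^2+P_n$ with an error well inside $n^{-1/4}$.

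\textbf{Conformal width.} Let $\delta=w_0-\widehat w$. Conditionally on the training data, the calibration residuals are i.i.d.\ copies of $\varepsilon+\delta^\top\Sigma_p^{1/2}\phi_d(G)$. The linear part $a^\top G$ is Gaussian. For the quadratic part we apply a Berry--Esseen or Wasserstein bound for Gaussian quadratic forms (fourth-moment theorem), which gives a distance to $N(0,Q_n)$ of order $\|\mathcal Q_d(v)\|_{\mathrm{op}}/\|\mathcal Q_d(v)\|_F$ with $v=\Sigma_p^{1/2}\delta$. This diffuseness estimate is the main obstacle. We would obtain it through the random-effects structure. Write $\delta=\lambda Aw_0-\frac1nAR^\top\varepsilon$. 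Conditionally on $R$ this is a Gaussian vector with covariance $K=\frac{s_p^2\lambda^2}{p}A^2+\frac{\sigma^2}{n}ASA$, whose trace is of order one while $\|K\|_{\mathrm{op}}=O(1/p)$.

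For a Gaussian vector with this covariance, the reshaped matrix $\mathcal Q_d(v)$ behaves like a $d\times d$ Wigner-type matrix with entry variance $O(1/p)$. We therefore expect $\|\mathcal Q_d(v)\|_{\mathrm{op}}=O_{\mathbb P}(\sqrt{d/p})=O_{\mathbb P}(d^{-1/2})$, using Gaussian matrix concentration (matrix Khintchine plus a $\log d$ factor, or a net argument), while $\|\mathcal Q_d(v)\|_F\asymp1$. Since $d\asymp\sqrt p\asymp n^{1/2}$, the normal-approximation error is $O_{\mathbb P}(n^{-1/4})$, possibly up to logarithms that we would try to remove. This explains the $n^{-1/4}$ rate in the statement.

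Next, $Q_n=\delta^\top\Sigma_p\delta$ concentrates around $\operatorname{tr}(\Sigma_pK)=T_n$ by Hanson--Wright, with error $O(\|\Sigma_p^{1/2}K\Sigma_p^{1/2}\|_F)=O(p^{-1/2})$. So the conditional law of $|{\rm residual}|$ is within $O_{\mathbb P}(n^{-1/4})$ in Kolmogorov distance of the law of $\sqrt{\sigma^2+T_n}\,|N(0,1)|$. Its $(1-\alpha)$ quantile is $z\sqrt{\sigma^2+T_n}$, and the density there is bounded below, so the empirical quantile from $m$ calibration scores (DKW) deviates by $O_{\mathbb P}(n^{-1/4}+m^{-1/2})$. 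Squaring gives the conformal display.

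\textbf{Difference and isotropic limits.} Subtracting the two displays and using $\operatorname{tr}(\Sigma_pA)-\operatorname{tr}(\Sigma_pASA)=\lambda\operatorname{tr}(\Sigma_pA^2)$, which holds because $A(S+\lambda I)=I$, yields the stated difference. For $\Sigma_p=I_p$, the concentration condition verified in the first step lets us invoke \cite{Yaskov2016}, so that $S$ has a Marchenko--Pastur limiting spectrum. The normalized traces $\frac1p\operatorname{tr}A$, $\frac1p\operatorname{tr}A^2$ and $\frac1p\operatorname{tr}ASA$ then converge to the same Stieltjes-transform expressions as in the Gaussian case. Substituting these limits, together with $s_p^2\to s^2$, reproduces the limits and phase relation of Corollary~\ref{cor:hd_phase}.
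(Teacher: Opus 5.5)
Your route follows the paper's in outline. You use conditional Gaussianity of $w_0-\widehat w$ given $R$ with covariance $C_e$, and a net bound giving $\|\mathcal Q_d(v)\|_{\mathrm{op}}=O_{\mathbb P}(\sqrt{d/n})=O_{\mathbb P}(n^{-1/4})$; a union bound over a $1/4$-net of size $9^d$ gives this with no logarithmic loss. The remaining steps are concentration of $Q_n$ around $T_n$, DKW with a locally Lipschitz quantile map, and Yaskov's theorem for the isotropic limits.

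\textbf{The Bayesian step rests on a false variance bound.} The claim $\operatorname{Var}\{\phi_d(G)^\top M\phi_d(G)\}=O(\|M\|_{\mathrm{op}}^2p)$ already fails for $M=I_p$. Since $\|\phi_d(G)\|^2=\frac12(\|G\|^4+d)$, its variance is $\frac14\operatorname{Var}(\|G\|^4)=2d(d+2)(d+3)\sim 2d^3\asymp p^{3/2}$. In chaos language, the second-chaos component of $\|\phi_d(G)\|^2$ is $(d+2)(\|G\|^2-d)$. This is the single-contraction (partial-trace) term of $M$, and its standard deviation is $(d+2)\sqrt{2d}\asymp d^{3/2}\|M\|_{\mathrm{op}}$, not $O(\sqrt p\,\|M\|_{\mathrm{op}})$.

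The same radial fluctuation persists for the matrix you actually need. In the isotropic case, rotation invariance shows that $\mathbb E[\phi_d(G)^\top A\phi_d(G)\mid\|G\|^2]$ fluctuates to leading order like $\frac{\operatorname{tr}(A)}{p}\|\phi_d(G)\|^2$, with $\operatorname{tr}(A)/p\to I_1>0$. Hence $\frac1n\{r_*^\top Ar_*-\operatorname{tr}(\Sigma_pA)\}$ is of exact order $d^{3/2}/n\asymp n^{-1/4}$. Your $O_{\mathbb P}(n^{-1/2})$ claim is false, and the $n^{-1/4}$ in the Bayesian display is sharp, not slack. It comes from this radial term, not only from diffuseness on the conformal side.

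The repair is Lemma~\ref{lem:quadratic_concentration}. Gaussian Poincar\'e with $\|D\phi_d(G)[u]\|^2\le(1+2\|G\|^2)\|u\|^2$ gives $\operatorname{Var}\{\phi_d(G)^\top M\phi_d(G)\}\le 72d^3\|M\|_{\mathrm{op}}^2$. This yields an error $O_{\mathbb P}(d^{3/2}/n)=O_{\mathbb P}(n^{-1/4})$, exactly the stated rate, and Yaskov's condition still holds because $d^3/p^2\to0$.

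\textbf{A smaller point on the conformal side.} You bound the normal approximation for the quadratic part alone, with denominator $\|\mathcal Q_d(v)\|_F$, and then add the Gaussian linear part. This has two problems. First, the hypotheses only give $\|\Sigma_p\|_{\mathrm{op}}\le L$, so $\|\mathcal Q_d(v)\|_F$ need not be bounded below; the quadratic block of $v$ can even vanish. Second, $a^\top G$ and $G^\top BG$ are uncorrelated but not independent, so the two pieces do not combine directly.

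Running Stein's method on the full residual $a^\top G+\{G^\top BG-\operatorname{tr}(B)\}/\sqrt2+\sigma Z$ handles both issues. Normalize by $V=\sigma^2+\|a\|^2+\operatorname{tr}(B^2)\ge\sigma^2$; the dependence then enters only through the cross term $\|Ba\|^2$. This gives $d_{\mathrm K}\le\frac{3}{\sqrt2\,\sigma}\|B\|_{\mathrm{op}}$, which is Lemma~\ref{lem:quadratic_gaussian_approx}. With these two corrections your argument coincides with the paper's proof.
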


The proof is given in Appendix~\ref{app:quadratic_feature_proofs}. The
$n^{-1/4}$ terms above describe approximation by the finite-sample trace
quantities and are not claimed as rates for convergence of those traces to
their Marchenko--Pastur limits.

Theorem~\ref{thm:quadratic_ngrc_width} provides the main link between the
generic high-dimensional ridge comparison and quadratic NGRC. Although the
features generated from a common delay vector are dependent, quadratic-form
concentration controls the spectral behavior of the design, while diffuseness
of the prediction direction controls the distribution of calibration and test
residuals. Together these properties recover the finite-trace Bayesian and
conformal width comparison for an explicit nonlinear NGRC feature model.

The random-effects assumption in
\eqref{eq:quadratic_random_effects} is a sufficient mechanism for obtaining a
diffuse fitted prediction direction, but randomness of the underlying NGRC
readout is not required. For a deterministic readout, define
\(B_n=\lambda^2w_0^\top A\Sigma_pAw_0\),
\(V_n=\frac{\sigma^2}{n}\operatorname{tr}(\Sigma_pASA),\)
and
\(
D_n
=
\mathcal Q_d
\left(
\lambda\Sigma_p^{1/2}Aw_0
\right).
\)

\begin{corollary}[Deterministic quadratic-NGRC readout]
\label{cor:quadratic_deterministic}
Suppose the feature, noise, proportional-growth, and regularization conditions
of Theorem~\ref{thm:quadratic_ngrc_width} hold, but let $w_0=w_{0,p}$ be
deterministic and satisfy $\sup_p\|w_{0,p}\|<\infty$. Assume that, for some
deterministic sequence $\rho_n\downarrow0$,
\begin{equation}
\|D_n\|_{\mathrm{op}}
=
O_{\mathbb P}(\rho_n).
\label{eq:deterministic_diffuseness}
\end{equation}
Then
\[
\frac{W_{B,n}(r_*)^2}{4z^2}=\sigma^2+P_n+O_{\mathbb P}(n^{-1/4}), \;\; \text{whereas} \;\; \frac{W_{C,n}^2}{4z^2}=\sigma^2+B_n+V_n+O_{\mathbb P}\!\left(n^{-1/4}+\rho_n+m^{-1/2}\right).
\]
Consequently,
\[
\frac{W_{B,n}(r_*)^2-W_{C,n}^2}{4z^2}
=
\frac{\sigma^2\lambda}{n}
\operatorname{tr}(\Sigma_pA^2)
-
B_n
+
O_{\mathbb P}
\left(
n^{-1/4}+\rho_n+m^{-1/2}
\right).
\]
In particular, if $\|D_n\|_{\mathrm{op}}=O_{\mathbb P}(n^{-1/4})$, then the
deterministic-readout extension has the same approximation order as the
random-effects result.
\end{corollary}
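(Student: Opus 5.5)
The plan follows the structure of the random-effects proof of Theorem~\ref{thm:quadratic_ngrc_width}. The Bayesian half is unchanged: $W_{B,n}(r_*)^2/(4z^2)=\sigma^2+\frac{\sigma^2}{n}r_*^\top Ar_*$ does not involve $w_0$. Conditional on $R$, the quadratic form $r_*^\top Ar_*=\phi_d(G_*)^\top\Sigma_p^{1/2}A\Sigma_p^{1/2}\phi_d(G_*)$ concentrates around $\operatorname{tr}(\Sigma_pA)$. This uses the quadratic-form concentration for $\phi_d$ established in Appendix~\ref{app:quadratic_feature_proofs}, together with $\|A\|_{\mathrm{op}}\le\lambda^{-1}$. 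It gives the $O_{\mathbb P}(n^{-1/4})$ term with no reference to the readout.

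For the conformal half, condition on the training sample and write the fresh residual as $e=\varepsilon-r^\top(\widehat w-w_0)$. Decompose
\[
\widehat w-w_0=-\lambda Aw_0+A R^\top\varepsilon/n ,
\]
so that
\[
e=\varepsilon+\lambda r^\top Aw_0-\xi ,\qquad \xi=r^\top AR^\top\varepsilon/n .
\]
The first step is to compute the conditional variance of $e$ given training data. Using isotropy of $\phi_d$, it equals $\sigma^2+Q_n$ with
\[
Q_n=\|\Sigma_p^{1/2}(\widehat w-w_0)\|^2 .
\]
Expanding, $Q_n=B_n+\|\Sigma_p^{1/2}AR^\top\varepsilon/n\|^2-2\lambda w_0^\top A\Sigma_pAR^\top\varepsilon/n$. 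The middle term concentrates on $V_n$ by Gaussian (Hanson--Wright) concentration in $\varepsilon$ given $R$. The cross term is Gaussian given $R$ with variance $O(1/n)$, since $\|w_0\|$ is bounded and $\|A\|,\|S A\|$ are bounded. Hence $Q_n=B_n+V_n+O_{\mathbb P}(n^{-1/2})$, exactly as in the generic argument behind Theorem~\ref{thm:finite_trace_widths}.

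The second step is to show that the conditional law of $e$ is close to $N(0,\sigma^2+Q_n)$, uniformly enough that its absolute $(1-\alpha)$-quantile equals $z\sqrt{\sigma^2+Q_n}$ up to the stated error. Write the linear part of $e$ as $v^\top\phi_d(G)$ with $v=\Sigma_p^{1/2}(w_0-\widehat w)$. By the representation preceding the theorem, $v^\top\phi_d(G)=a^\top G+\{G^\top\mathcal Q_d(v)G-\operatorname{tr}\mathcal Q_d(v)\}/\sqrt2$. A Gaussian chaos of this form is close to normal in Kolmogorov distance at rate controlled by $\|\mathcal Q_d(v)\|_{\mathrm{op}}/\|v\|$; one can use a second-order Poincaré inequality or the fourth-moment theorem, with fourth cumulant of order $\operatorname{tr}(\mathcal Q^4)\le\|\mathcal Q\|_{\mathrm{op}}^2\|\mathcal Q\|_F^2$. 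Now split
\[
\mathcal Q_d(v)=D_n+\mathcal Q_d\bigl(\Sigma_p^{1/2}AR^\top\varepsilon/n\bigr).
\]
The first piece is $O_{\mathbb P}(\rho_n)$ by assumption \eqref{eq:deterministic_diffuseness}. The second piece is the image of a Gaussian vector with covariance $\frac{\sigma^2}{n}\Sigma_p^{1/2}ASA\Sigma_p^{1/2}$. Its operator norm is $O_{\mathbb P}(n^{-1/4})$, or better, by the same argument used in the random-effects theorem: a Gaussian random symmetric matrix of dimension $d\asymp n^{1/2}$ with bounded total variance has operator norm of order $d^{-1/2}$ up to logs. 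Here $\sigma^2+Q_n$ stays bounded below by $\sigma^2$, so the Kolmogorov error is $O_{\mathbb P}(n^{-1/4}+\rho_n)$. Since the normal density at $z\sqrt{\sigma^2+Q_n}$ is bounded away from zero, this transfers to the population absolute-residual quantile at the same order.

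The final step handles calibration. Given training data, the $m$ calibration scores are i.i.d., so the DKW inequality together with the local density bound gives $\widehat q_{1-\alpha}$ within $O_{\mathbb P}(m^{-1/2})$ of the conditional quantile. The finite-sample index $k_m$ contributes only $O(1/m)$. Squaring and combining yields the stated $W_{C,n}$ expansion, and subtracting it from the Bayesian one with Lemma~\ref{lem:bvp_decomposition} gives the difference formula. The main obstacle is the second step: bounding $\|\mathcal Q_d\|_{\mathrm{op}}$ for the noise-driven piece conditionally on $R$, and making the Kolmogorov bound for the quadratic chaos explicit. I would first try to reuse the random-effects lemma verbatim, since that piece does not depend on $w_0$.
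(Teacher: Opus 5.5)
Your proposal is correct and follows essentially the same route as the paper's proof. It uses the same split $\mathcal Q_d(v)=D_n-\mathcal Q_d(\Sigma_p^{1/2}AR^\top\varepsilon/n)$, the $1/4$-net Gaussian union bound giving $O_{\mathbb P}(\sqrt{d/n})=O_{\mathbb P}(n^{-1/4})$ for the noise-driven piece, the Stein-type Kolmogorov bound $\tfrac{3}{\sqrt2\,\sigma}\|\mathcal Q_d(v)\|_{\mathrm{op}}$ of Lemma~\ref{lem:quadratic_gaussian_approx}, and then DKW, squaring, and the identity $P_n-V_n=\sigma^2\lambda\operatorname{tr}(\Sigma_pA^2)/n$.

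Two small points. First, the net argument relies on the $O(n^{-1})$ operator-norm bound on the covariance $\tfrac{\sigma^2}{n}\Sigma_p^{1/2}ASA\Sigma_p^{1/2}$; bounded total variance alone would not suffice. Second, the noise piece enters with a minus sign rather than the plus you wrote, which is harmless for the norm bound.
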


The proof is given in Appendix~\ref{app:quadratic_feature_proofs}. The key
point is that the random component of the fitted ridge error already has largest
quadratic direction of order $O_{\mathbb P}(n^{-1/4})$, while
\eqref{eq:deterministic_diffuseness} directly controls the deterministic
ridge-bias component. Lemma~\ref{lem:quadratic_gaussian_approx} then bounds the
Gaussian-approximation error by the operator norm of the combined quadratic
direction.

Corollary~\ref{cor:quadratic_deterministic} clarifies the role of the
random-effects model. Random effects are a sufficient mechanism for
diffuseness, but randomness of the underlying NGRC readout is not required.
The essential condition is that no individual quadratic direction carries a
non-negligible fraction of the fitted prediction error. Importantly,
\eqref{eq:deterministic_diffuseness} concerns the quadratic component of the
ridge-transformed bias direction, not the raw signal
$\mathcal Q_d(w_0)$. The sample resolvent $A$ and feature covariance
$\Sigma_p$ therefore enter the diffuseness requirement.

This is a population-level sufficient condition rather than a directly
observable diagnostic for a fitted NGRC model, since it depends on the unknown
readout $w_0$ and population feature covariance. Its purpose is to identify the
mechanism required by the deterministic-readout extension---diffuseness of the
ridge-transformed quadratic bias direction---rather than to provide a
data-driven model-selection criterion. We therefore interpret this result as an
analytical benchmark for NGRC uncertainty rather than a complete calibration
theory for deployed forecasting systems. The condition also does not require
the total squared ridge bias $B_n$ to vanish; the bias may remain first order
while being distributed across many individually small quadratic directions.

\paragraph{Why quadratic diffuseness is needed.}
Consider $f_d(G)=(G_1^2-1)/\sqrt{2}$. For every $d$,
$\operatorname{Var}\{f_d(G)\}=1$, but its distribution is a centered and
scaled $\chi_1^2$ distribution and does not approach a Gaussian distribution
as $d\to\infty$. The associated quadratic matrix is $B_d=e_1e_1^\top$, and
$\|B_d\|_{\mathrm{op}}=1$, so all quadratic variation remains concentrated in
a single direction.

By contrast, consider the diffuse quadratic functional

$$
\widetilde f_d(G)
=
\frac{1}{\sqrt d}
\sum_{j=1}^d
\frac{G_j^2-1}{\sqrt{2}}
=
\frac{\|G\|^2-d}{\sqrt{2d}}.
$$

Its associated quadratic matrix is $\widetilde B_d=I_d/\sqrt d$, so
$\|\widetilde B_d\|_{\mathrm{op}}=d^{-1/2}\longrightarrow0$, and
$\operatorname{tr}(\widetilde B_d^2)=1$. Thus its total variance remains of
constant order while its largest individual quadratic direction vanishes. The
central limit theorem for the centered chi-square statistic yields
$\widetilde f_d(G)\xrightarrow{d}N(0,1)$.

The two examples show what
\eqref{eq:deterministic_diffuseness} rules out. High dimensionality alone does
not imply Gaussian prediction errors if the signal remains concentrated in a
small number of quadratic directions.

\subsection{Signal-Dependent Width Ordering in Proportional Dimension}
\label{subsec:hd_phase}

The finite-trace formulas above are the more general comparison. They show that
Bayesian and conformal widths differ because posterior parameter uncertainty
need not equal the combination of frequentist estimation variance and ridge
prediction bias. This mechanism does not require isotropy and, for general
feature covariance, depends on covariance-weighted traces and on the alignment
of the signal with the feature eigenspaces.

The isotropic proportional-growth model gives an explicit asymptotic
specialization of these finite-trace quantities. In addition to the conditions
of Theorem~\ref{thm:finite_trace_widths}, suppose the rows of the training
design are i.i.d. $N(0,I_p)$ and are independent of the training noises,
calibration sample, and test pair. Assume
\(p/n\longrightarrow\gamma\in(0,\infty)\),
\(\|w_0\|^2\longrightarrow s^2,\)
and $\lambda>0$ is fixed.

Define

$$
I_1(\gamma,\lambda)
=
\int\frac{1}{x+\lambda}\,
dF_\gamma^{\mathrm{MP}}(x),
\qquad
I_2(\gamma,\lambda)
=
\int\frac{1}{(x+\lambda)^2}\,
dF_\gamma^{\mathrm{MP}}(x),
$$

and

$$
J(\gamma,\lambda)
=
I_1-\lambda I_2
=
\int
\frac{x}{(x+\lambda)^2}
\,dF_\gamma^{\mathrm{MP}}(x),
$$

where $F_\gamma^{\mathrm{MP}}$ denotes the Marchenko--Pastur law
\cite{MarchenkoPastur1967}. These resolvent quantities are standard in
high-dimensional ridge prediction theory \cite{DobribanWager2018}.

\begin{corollary}[Signal-dependent high-dimensional width ordering]
\label{cor:hd_phase}
Under the preceding conditions,

$$
\frac{W_{B,n}(r_*)^2}{4z^2}
\xrightarrow{\mathbb P}
\sigma^2(1+\gamma I_1),
\qquad
\frac{W_{C,n}^2}{4z^2}
\xrightarrow{\mathbb P}
\sigma^2(1+\gamma J)
+
s^2\lambda^2I_2.
$$

Therefore,
\begin{equation}
W_{B,n}(r_*)^2-W_{C,n}^2
\xrightarrow{\mathbb P}
4z^2\lambda I_2
\left(
\sigma^2\gamma-s^2\lambda
\right).
\label{eq:hd_phase}
\end{equation}
Consequently,

$$
\begin{cases}
s^2\lambda<\sigma^2\gamma,
& W_B>W_C,\\[1mm]
s^2\lambda=\sigma^2\gamma,
& W_B=W_C,\\[1mm]
s^2\lambda>\sigma^2\gamma,
& W_B<W_C
\end{cases}
\qquad
\text{asymptotically}.
$$

\end{corollary}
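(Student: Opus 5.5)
The plan is to specialize Theorem~\ref{thm:finite_trace_widths} to $\Sigma=I_p$ and then pass each finite-trace quantity to its Marchenko--Pastur limit. With $\Sigma=I_p$ we have
\[
P_n=\sigma^2\frac{p}{n}\cdot\frac1p\operatorname{tr}(A),\qquad
V_n=\sigma^2\frac{p}{n}\cdot\frac1p\operatorname{tr}(ASA),\qquad
B_n=\lambda^2 w_0^\top A^2 w_0 .
\]
Since $ASA=A-\lambda A^2$, we get $\frac1p\operatorname{tr}(ASA)=\frac1p\operatorname{tr}(A)-\lambda\frac1p\operatorname{tr}(A^2)$. The two normalized traces are integrals of the bounded continuous functions $x\mapsto(x+\lambda)^{-1}$ and $x\mapsto(x+\lambda)^{-2}$ on $[0,\infty)$, with $\lambda>0$ fixed. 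They are taken against the empirical spectral distribution of $S=R^\top R/n$.

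By the Marchenko--Pastur theorem for i.i.d.\ $N(0,I_p)$ rows with $p/n\to\gamma$, that spectral distribution converges weakly almost surely to $F^{\mathrm{MP}}_\gamma$; the point mass at zero when $\gamma>1$ is harmless because the integrands are bounded. Hence $\frac1p\operatorname{tr}(A)\to I_1$ and $\frac1p\operatorname{tr}(A^2)\to I_2$. This gives $P_n\to\sigma^2\gamma I_1$ and $V_n\to\sigma^2\gamma(I_1-\lambda I_2)=\sigma^2\gamma J$ in probability.

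The main obstacle is the bias term, which is a quadratic form in the deterministic vector $w_0$ and not a trace. I would use rotational invariance. Because the Gaussian design is invariant in law under $R\mapsto RO$ for orthogonal $O$, $A^2$ has the same law as $OA^2O^\top$ with $O$ Haar and independent. Therefore $w_0^\top A^2w_0$ has the law of $\|w_0\|^2\,u^\top A^2u$ with $u$ uniform on the sphere and independent of $A$.

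Conditionally on $A$, $u^\top A^2 u$ concentrates around $\frac1p\operatorname{tr}(A^2)$. Its conditional variance is $O(\|A^2\|_{\mathrm{op}}^2/p)$, and $\|A\|_{\mathrm{op}}\le\lambda^{-1}$. So $B_n=\lambda^2\|w_0\|^2\frac1p\operatorname{tr}(A^2)+o_{\mathbb P}(1)\to s^2\lambda^2 I_2$. Convergence in probability transfers through equality in law because the limit is a constant.

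Finally, I combine these limits with the $O_{\mathbb P}(n^{-1/2}+m^{-1/2})$ errors of Theorem~\ref{thm:finite_trace_widths}, which vanish since $m\to\infty$, to obtain
\[
\frac{W_{B,n}^2}{4z^2}\to\sigma^2(1+\gamma I_1),\qquad
\frac{W_{C,n}^2}{4z^2}\to\sigma^2(1+\gamma J)+s^2\lambda^2I_2 .
\]
Subtracting and using $I_1-J=\lambda I_2$ gives the limit $\sigma^2\gamma\lambda I_2-s^2\lambda^2I_2=\lambda I_2(\sigma^2\gamma-s^2\lambda)$, which is \eqref{eq:hd_phase} after multiplying by $4z^2$.

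For the trichotomy, note $I_2>0$ strictly, because the integrand $(x+\lambda)^{-2}$ is positive and $\lambda>0$. So the sign of the limit is the sign of $\sigma^2\gamma-s^2\lambda$. When this is nonzero, $W_B^2-W_C^2$ has that sign with probability tending to one; since both widths are positive, the same holds for $W_B-W_C$. In the boundary case the difference tends to $0$, which is the intended meaning of $W_B=W_C$ asymptotically.
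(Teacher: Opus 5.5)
Your proposal is correct and follows essentially the same route as the paper's proof. You specialize Theorem~\ref{thm:finite_trace_widths} to $\Sigma=I_p$, take the Marchenko--Pastur limits of the traces in $P_n$ and $V_n$, and control the bias $B_n=\lambda^2 w_0^\top A^2 w_0$ by orthogonal invariance, a uniform-sphere variance bound of order $\lambda^{-4}/p$, and Chebyshev's inequality. The differences are cosmetic: you get $V_n$'s limit from $ASA=A-\lambda A^2$ rather than a third Marchenko--Pastur integral, and you phrase the invariance step as $A\overset{d}{=}OAO^\top$ with an independent Haar $O$, instead of conditioning on the eigenvalues of $S$ and treating $U^\top w_0/\|w_0\|$ as uniform.
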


The phase boundary is therefore a closed-form specialization of the more
general finite-trace discrepancy rather than the source of the
high-dimensional phenomenon itself. In proportional dimension, posterior
uncertainty, estimation variance, and ridge bias all remain first order, and
their competition allows either procedure to produce the wider interval. Under
the isotropic benchmark, this comparison reduces to the scalar condition
$s^2\lambda=\sigma^2\gamma$.

The equality condition also has a Bayesian interpretation. Under the prior used
in Section~\ref{subsec:bayes_ridge},
\(w\sim N\left(0,\frac{\sigma^2}{n\lambda}I_p\right),\)
and
\(\mathbb E\|w\|^2=\frac{p\sigma^2}{n\lambda}
\longrightarrow\frac{\sigma^2\gamma}{\lambda}.\)
Thus $s^2\lambda=\sigma^2\gamma$ matches the limiting prior signal scale to
the limiting true signal strength. The centered benchmark $w_0=0$ corresponds
to only one side of the phase comparison and necessarily produces wider
Bayesian intervals. The prior-matching interpretation is specific to the
isotropic benchmark; the broader finite-trace comparison does not require
isotropy or this prior representation.

The same asymptotic quantities determine Bayesian marginal coverage. Let
\(v_B=\sigma^2(1+\gamma I_1)\) and
\(v_C=\sigma^2(1+\gamma J)+s^2\lambda^2I_2.\)
Then
\[
\mathbb P\{y_*\in C_{B,n}(r_*)\}
\longrightarrow
2\Phi\!\left(
z\sqrt{\frac{v_B}{v_C}}
\right)-1,
\]
whereas split conformal prediction retains its nominal marginal coverage in the
independent benchmark. A derivation is given in
Appendix~\ref{app:hd_coverage}. In particular, the phase relation also
describes the direction of Bayesian miscalibration. Hence,
\[
\begin{cases}
v_B>v_C,
& \displaystyle
\lim_{n\to\infty}
\mathbb P\!\left\{y_*\in C_{B,n}(r_*)\right\}
>
1-\alpha,\\[2mm]
v_B=v_C,
& \displaystyle
\lim_{n\to\infty}
\mathbb P\!\left\{y_*\in C_{B,n}(r_*)\right\}
=
1-\alpha,\\[2mm]
v_B<v_C,
& \displaystyle
\lim_{n\to\infty}
\mathbb P\!\left\{y_*\in C_{B,n}(r_*)\right\}
<
1-\alpha.
\end{cases}
\]

Equivalently, the weak-signal regime produces asymptotic Bayesian overcoverage,
the phase boundary produces nominal marginal coverage, and the strong-signal
regime produces asymptotic Bayesian undercoverage. Therefore, when the Bayesian
interval becomes narrower than the conformal interval in the strong-signal
regime, its smaller width reflects undercoverage rather than improved
efficiency at equal coverage. This fixed-$\lambda$ boundary is a specialization
of the general finite-trace discrepancy, not an intrinsic failure of Bayesian
inference in high dimension. Data-dependent regularization requires separate
analysis.

The phase boundary is also connected to optimal high-dimensional ridge
regularization. For the isotropic proportional benchmark, the limiting ridge
prediction risk is minimized at
\(
\lambda_*=\frac{\sigma^2\gamma}{s^2},
\)
which is the standard optimal-ridge scaling in this setting
\cite{DobribanWager2018}. At this value,
\(
s^2\lambda_*=\sigma^2\gamma,
\)
so the limiting Bayesian and conformal widths coincide. The corresponding
limiting-risk calculation is given in Appendix~\ref{app:optimal_ridge}. We use
this connection as an interpretation of the width comparison.

\paragraph{Non-isotropic feature covariances.}
The isotropic result gives the clearest closed-form phase boundary, but the
underlying finite-trace decomposition is more general. For non-isotropic
features, the relevant posterior and frequentist estimation-variance terms are
the covariance-weighted traces
$\operatorname{tr}(\Sigma A)/n$ and
$\operatorname{tr}(\Sigma ASA)/n$, whose difference is
$\lambda\operatorname{tr}(\Sigma A^2)/n$.

These quantities cannot generally be replaced by unweighted integrals against
the empirical spectral distribution of $S$. Moreover, the ridge-bias term
$w_0^\top A\Sigma Aw_0$ depends on the alignment of the signal with the
eigenspaces of $\Sigma$. Consequently, the feature spectrum alone does not
determine the Bayesian--conformal width ordering outside the isotropic setting.
The general finite-trace comparison, rather than the scalar boundary
$s^2\lambda=\sigma^2\gamma$, is therefore the appropriate statement for
anisotropic designs.

\paragraph{Relation to the fixed-dimensional regime.}
The proportional-growth result does not reduce automatically to
Theorem~\ref{thm:fixed_width_discrepancy} by taking $\gamma\downarrow0$. The
high-dimensional analysis holds the normalized ridge penalty $\lambda>0$
fixed. Under this scaling, a nonvanishing shrinkage bias may persist even as
the aspect ratio decreases. Recovering the fixed-dimensional result therefore
additionally requires $\lambda\longrightarrow0$ on the normalized covariance
scale, or more generally requires the resulting prediction bias to vanish. The
limits $p/n\to0$ and $\lambda\to0$ therefore represent distinct asymptotic
operations.

\paragraph{Noise-scale estimation.}
In fixed dimension, the fitted residual root mean square is consistent under
the conditions of Lemma~\ref{lem:projection_consistency}. In proportional
dimension, this need not hold because ridge shrinkage and high-dimensional
leverage remain non-negligible. A direct calculation of the proportional
training-residual scale is given in
Appendix~\ref{app:hd_residual_scale}. Accordingly, the Bayesian
high-dimensional results treat $\sigma$ as known, or require a separately
justified consistent estimator.

\paragraph{Scope of the dependent-window extension.}
The high-dimensional results above are stated first for independent training,
calibration, and test feature vectors. NGRC features constructed from a time
series are generally dependent across forecast origins.
Appendix~\ref{app:dependent_transfer} provides a sufficient transfer result
under absolute regularity. If a feature--response pair at origin $t$ depends on
observations in $[t-L_n,t+H]$, with $L_n=(k_n-1)s_n$, and retained forecast
origins $t_1<\cdots<t_N$ satisfy
$t_{j+1}-t_j\ge L_n+H+h_n$ with $h_n\ge1$, then
Proposition~\ref{prop:beta_transfer} couples the retained windows to
independent copies with the same marginal distribution, with probability of at
least one coupling failure bounded by $(N-1)\beta(h_n)$. Consequently, an
event with independent-window failure probability at most $\delta$ has
dependent-window failure probability at most
$\delta+(N-1)\beta(h_n)$.

For split conformal prediction, whenever the independent copies satisfy the
usual assumptions for the standard marginal coverage guarantee, this yields

$$
\mathbb P\{y_*\in C_C(r_*)\}
\ge
1-\alpha-(N-1)\beta(h_n).
$$

The separation condition applies to every pair of successive retained forecast
windows, not merely to the boundaries between fitting, calibration, and test
blocks. Thus, the proposition describes a thinned sequence of sufficiently
separated NGRC forecast origins. It does not provide an independence
approximation for the ordinary overlapping delay windows used at every
consecutive forecast origin.

The separation requirement can also be costly in the proportional
quadratic-NGRC regime. With a full quadratic lag map, $p\asymp k_n^2$. If
$p\asymp n$, then $k_n\asymp n^{1/2}$. For fixed lag spacing,
$L_n=(k_n-1)s_n\asymp n^{1/2}$. Retaining $N\asymp n$ windows while enforcing
$t_{j+1}-t_j\ge L_n+H+h_n$ therefore requires an underlying time series of
length at least order $NL_n\asymp n^{3/2}$, even before accounting for any
additional mixing gap $h_n$. The proposition should therefore be viewed as a
sufficient theoretical bridge for heavily separated windows rather than as a
direct asymptotic description of ordinary overlapping NGRC forecasts.

The result is also only a transfer principle. It transfers probability
statements from an independent-window model to sufficiently separated
dependent windows, but it does not establish Gaussian feature marginals, the
linear readout model, quadratic-feature concentration, or a
Marchenko--Pastur law for an arbitrary dependent NGRC process. Those
conditions must be verified separately. In particular, preventing overlap
between adjacent delay windows is not sufficient for approximate independence;
the additional separation $h_n$ must be large relative to the decay of the
temporal dependence. For example, if
$\beta(h)\le Ce^{-ch}$, choosing
$h_n\ge c^{-1}\log\{C(N-1)/\eta_n\}$ makes the additional dependence error at
most $\eta_n$.

\paragraph{Summary of theoretical results.}
The theoretical results identify distinct mechanisms in the low- and
high-dimensional regimes. In fixed dimension, the Gaussian readout interval is
governed asymptotically by the root mean square of the population projection
error, whereas split conformal prediction is governed by an absolute-error
quantile. Their limiting widths therefore agree only when these two summaries
of the prediction-error distribution are appropriately matched.

In proportional dimension, posterior uncertainty, frequentist estimation
variance, and ridge prediction bias all remain first order. The finite-trace
decomposition shows that the Bayesian--conformal discrepancy is determined by
the difference between posterior parameter uncertainty and the combination of
sampling variance and shrinkage bias, allowing either procedure to produce the
wider interval.

The quadratic-NGRC result is the main nonlinear extension of this comparison.
Quadratic NGRC coordinates generated from a common delay vector are dependent
even after normalization, so the standard independent-coordinate Gaussian
argument does not apply. Nevertheless, quadratic-form concentration and a
Gaussian approximation for sufficiently diffuse quadratic prediction
directions recover the same finite-trace uncertainty comparison. The
deterministic-readout extension further shows that a random-effects model for
the true readout is not essential. The relevant structural requirement is
diffuseness of the ridge-transformed quadratic bias direction, and the total
prediction bias itself need not vanish.

The isotropic proportional benchmark then converts the finite-trace quantities
into explicit Marchenko--Pastur limits. This yields the signal-dependent
relations $s^2\lambda<\sigma^2\gamma$,
$s^2\lambda=\sigma^2\gamma$, and
$s^2\lambda>\sigma^2\gamma$, which determine the asymptotic ordering of the
Bayesian and conformal widths and the corresponding direction of Bayesian
marginal miscalibration. Thus, the phase boundary is a consequence of the
broader posterior-variance, estimation-variance, and bias decomposition rather
than the source of the high-dimensional phenomenon itself.

\paragraph{Limitations.}
The high-dimensional theory is developed primarily under an independent-window
benchmark. The absolute-regularity result in
Appendix~\ref{app:dependent_transfer} transfers independent-window probability
statements to sufficiently separated dependent windows with an explicit
coupling error, but it does not establish the full quadratic-feature or
random-matrix theory for arbitrary dependent time series.

The quadratic-NGRC analysis also uses an explicit Gaussian latent delay vector
to make the dependence among polynomial coordinates analytically tractable.
The deterministic-readout extension removes the requirement that the
underlying forecasting signal itself be a random draw, but it still requires
the quadratic component of the ridge-transformed prediction-bias direction to
become asymptotically diffuse. Concentrated quadratic directions, such as the
single-coordinate example above, need not admit the Gaussian approximation
used to obtain the conformal width formula.

Finally, the proportional results assume a known noise scale or a separately
justified consistent estimator for the Bayesian interval. This assumption is
not interchangeable with using the training residual variance in high
dimension. Split conformal prediction avoids this particular noise-scale
estimation requirement by calibrating directly on held-out prediction
residuals.

We empirically evaluate these theoretical approximations in
Section~\ref{sec:simulations}, including settings with temporal dependence,
quadratic features, and non-negligible feature-to-sample ratios. The real-data
analyses do not assume that the independent-window theory holds exactly.
Calibration and interval efficiency are instead evaluated directly on
chronologically ordered test observations.
\section{Simulations}
\label{sec:simulations}

We conduct five experiments examining the theoretical predictions and
finite-sample behavior of the uncertainty-quantification procedures.
Experiment~1 studies the residual-shape mechanism in fixed dimension.
Experiment~2 examines the high-dimensional width and coverage phase transition.
Experiment~3 studies dependent quadratic NGRC features, coefficient
diffuseness, and temporally dependent windows. Experiment~4 considers
nonlinear-memory forecasting under constant observation-noise variance and an
observation-noise variance change. Experiment~5 studies the fitting--calibration
tradeoff under controlled data budgets.

Unless otherwise stated, the nominal miscoverage level is $\alpha=0.05$.
Within each configuration of Experiments~1--4, the uncertainty-quantification
procedures use the same fitted ridge point predictor. Experiment~5 explicitly
allows the Bayesian and split-conformal point predictors to use different
fitting samples because data allocation is the object of study. Error bars
represent empirical means plus or minus $1.96$ standard errors. Experiments~1
and~2 are finite-sample diagnostics of the asymptotic mechanisms rather than
estimates of asymptotic approximation rates.

\subsection{Experiment 1: Residual Shape and Fixed-Dimensional Agreement}
\label{subsec:sim_fixed_residual_shape}

Theorem~\ref{thm:fixed_width_discrepancy} shows that, in fixed dimension,
Bayesian--conformal agreement depends on the residual distribution through
\(
W_{B,n}-W_{C,n}
\xrightarrow{\mathbb P}
2(z\tau-q_e),
\)
where $\tau^2=\mathbb E[e^2]$, $z=z_{1-\alpha/2}$, and $q_e$ is the
$(1-\alpha)$ quantile of $|e|$. Agreement therefore requires $q_e=z\tau$,
which may depend on the nominal level $\alpha$. We test this mechanism by
varying both residual shape and miscoverage level.

For each replication, we generate
\(
r_i\overset{\mathrm{i.i.d.}}{\sim}N(0,I_p)
\)
and
\(
y_i=r_i^\top w_0+e_i,
\)
with $p=5$, $w_0=(-0.4,-0.2,0,0.2,0.4)^\top$, and $\tau^2=1$.
We compare Gaussian $N(0,1)$, Laplace with scale $1/\sqrt{2}$,
standardized Student-$t_5$ errors $\sqrt{3/5}\,T_5$, and centered
exponential errors $E-1$, where $E\sim\operatorname{Exp}(1)$.

We consider
\(
\alpha\in\{0.01,0.05,0.10,0.20\}.
\)
For each configuration, we use $n_{\mathrm{fit}}=800$,
$n_{\mathrm{cal}}=800$, $n_{\mathrm{test}}=3000$, and ridge penalty
$\lambda=0.01$. Results are averaged over 300 independent replications. We
record the Bayesian--conformal width difference and the marginal coverage of
both procedures, comparing them with the theoretical width and Bayesian
coverage limits from Theorem~\ref{thm:fixed_width_discrepancy} and the
marginal-coverage limit from Corollary~\ref{cor:fixed_coverage}.

Figure~\ref{fig:sim_fixed_residual_shape} confirms the predicted
residual-shape mechanism. Gaussian residuals produce near-zero width
differences and nominal Bayesian coverage, whereas non-Gaussian residuals
produce $\alpha$-dependent width differences and corresponding Bayesian over-
or undercoverage. Split-conformal coverage remains near nominal across the
residual distributions. Thus, fixed-dimensional disagreement is governed by
residual shape rather than persistent parameter-estimation uncertainty.

\begin{figure}[t]
    \centering
    \includegraphics[width=\textwidth]{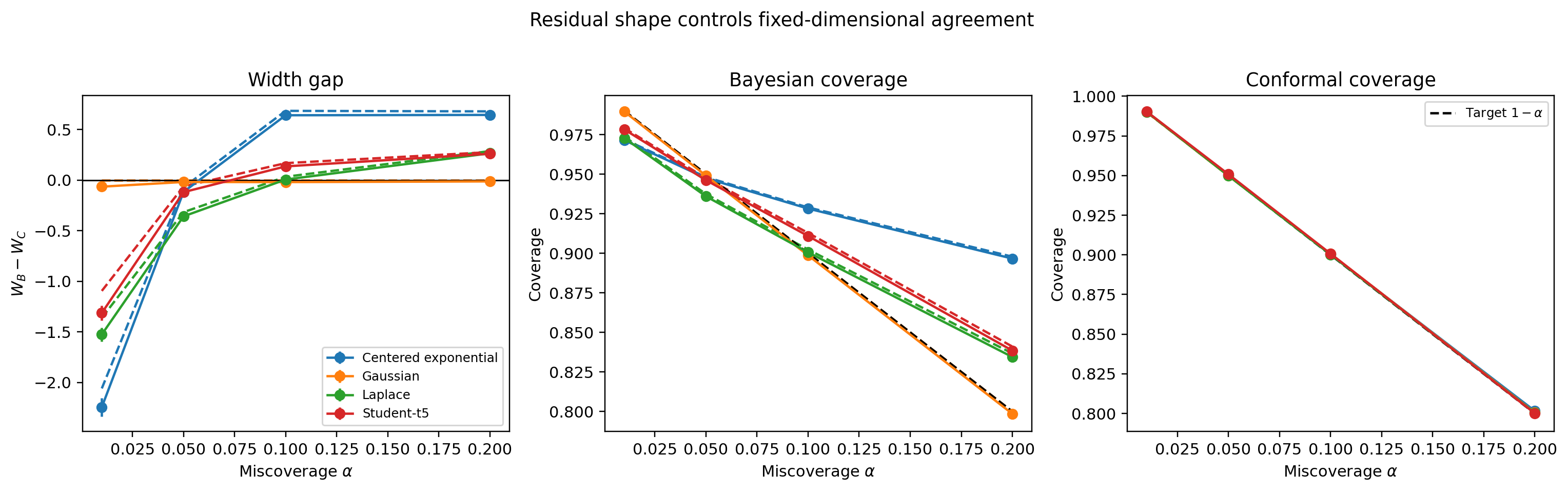}
    \caption{
    Fixed-dimensional residual-shape effects.
    Left: Bayesian--conformal width difference and theoretical limit.
    Middle: Bayesian coverage and its theoretical limit.
    Right: split-conformal coverage; dashed lines denote nominal coverage.
    }
    \label{fig:sim_fixed_residual_shape}
\end{figure}

\subsection{Experiment 2: High-Dimensional Phase Transition and Coverage}
\label{subsec:sim_hd_widths}

Corollary~\ref{cor:hd_phase} predicts a signal-dependent transition in the
Bayesian--conformal width ordering together with a corresponding change in
Bayesian marginal coverage. Unlike Experiment~1, the features and errors here
are Gaussian, so disagreement is driven by the interaction among signal
strength, regularization, and the feature-to-sample ratio.

We generate
\(
r_i\overset{\mathrm{i.i.d.}}{\sim}N(0,I_p),
\)
\(
y_i=r_i^\top w_0+\varepsilon_i,
\)
and
\(
\varepsilon_i\overset{\mathrm{i.i.d.}}{\sim}N(0,1).
\)
We take
\(
p=\operatorname{round}(\gamma n)
\)
with
\(
\gamma\in\{0.25,0.50,0.75,1.00,1.25,1.50\},
\)
and vary
\(
\lambda\in\{0.05,0.2,1\}
\)
and
\(
\mathrm{SNR}=s^2/\sigma^2\in\{0.25,1,4\},
\)
where $s^2=\|w_0\|^2$ and $\sigma^2=1$. Each configuration uses
$n=400$ fitting observations, $n_{\mathrm{cal}}=400$ independent calibration
observations, and $n_{\mathrm{test}}=1500$ independent test observations.
Results are averaged over 100 replications, and the Bayesian interval uses the
known noise variance. The realized ratio $p/n$ is used in the theoretical
calculations.

The predicted equal-width boundary is
\(
s^2/\sigma^2=\gamma/\lambda.
\)
We compare the empirical squared-width difference $W_B^2-W_C^2$ with its
finite-trace approximation and empirical Bayesian coverage with the
corresponding asymptotic limit.

Figure~\ref{fig:sim_hd_phase} shows the predicted high-dimensional mechanism.
The finite-trace approximation closely tracks the empirical squared-width
difference across aspect ratios, signal strengths, and ridge penalties,
including the predicted changes in width ordering. Bayesian coverage likewise
follows its theoretical limit. Configurations in which the Bayesian interval
becomes narrower lie on the undercoverage side of the phase boundary. Thus,
the strong-signal side of the transition represents reduced Bayesian width at
the cost of calibration rather than an efficiency gain at equal coverage. The
individual Bayesian and conformal finite-trace approximations are verified
separately in Appendix~\ref{app:sim_hd_width_verification}.

\begin{figure}[t]
    \centering
    \includegraphics[width=0.6\linewidth]{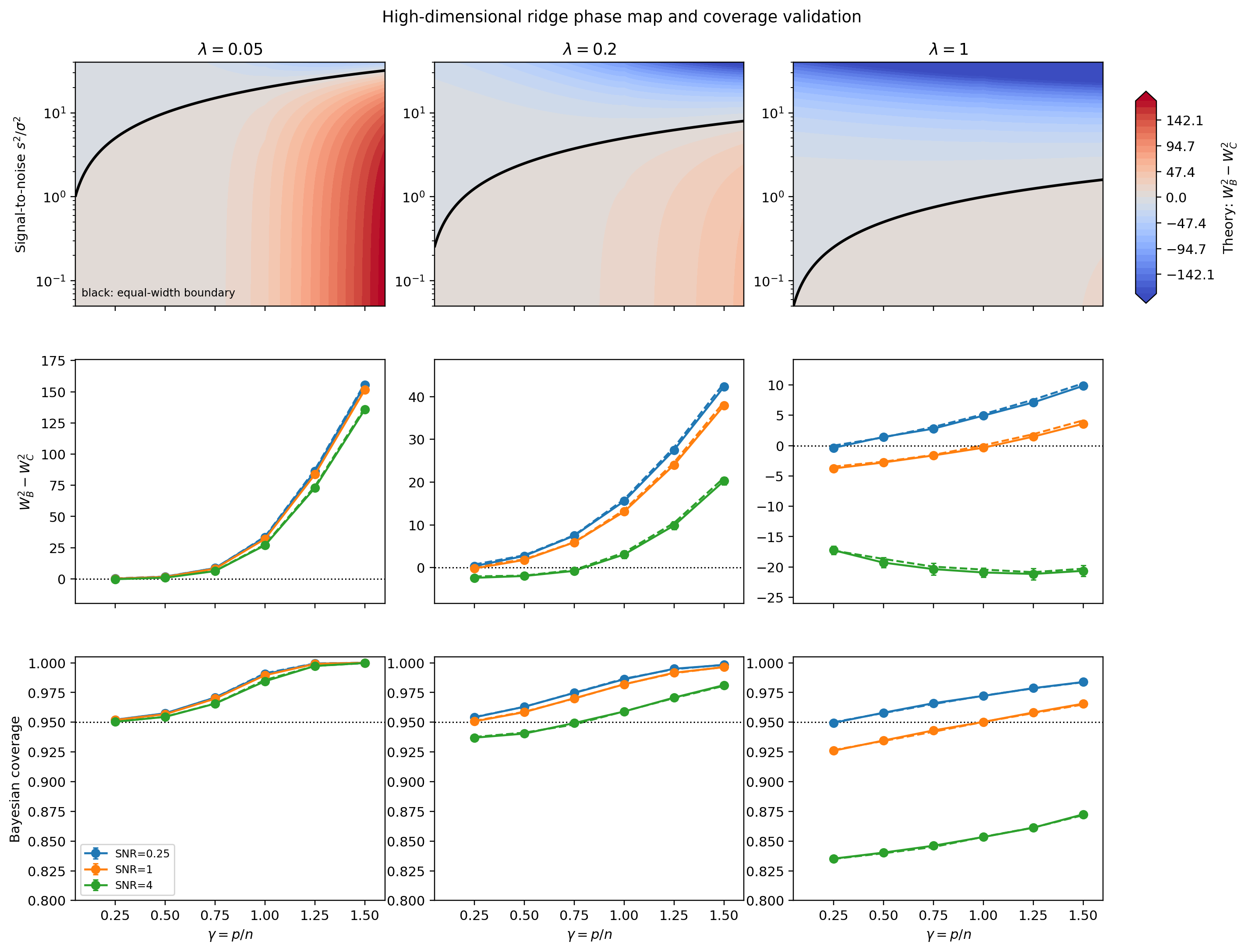}
    \caption{
    High-dimensional width and coverage phase transition.
    Top: Marchenko--Pastur limits for $W_B^2-W_C^2$ and the equal-width
    boundary. Middle: empirical squared-width differences and finite-trace
    approximations. Bottom: empirical and theoretical Bayesian coverage;
    dotted lines mark 95\% coverage.
    }
    \label{fig:sim_hd_phase}
\end{figure}

\subsection{Experiment 3: Quadratic NGRC Features, Diffuseness, and Temporal Transfer}
\label{subsec:sim_temporal_dependence}

Theorem~\ref{thm:quadratic_ngrc_width} extends the high-dimensional
Bayesian--conformal comparison to quadratic NGRC features with dependent
polynomial coordinates, while
Corollary~\ref{cor:quadratic_deterministic} identifies diffuseness of the
ridge-transformed quadratic bias direction as sufficient for the corresponding
Gaussian approximation. We examine these mechanisms and their extension to
temporally dependent feature windows.

For this experiment, we use the quadratic-only feature map
\[
\psi_d(G)
=
\operatorname{svec}
\left(
\frac{GG^\top-I_d}{\sqrt{2}}
\right)
\in\mathbb R^{p_q},
\qquad
p_q=\frac{d(d+1)}{2},
\qquad
G\sim N(0,I_d).
\]
The centered quadratic signal is generated as
$G^\top M_dG-\operatorname{tr}(M_d)$, with
\(
M_d^{\mathrm{conc}}=\sqrt{2}\,e_1e_1^\top
\)
and
\(
M_d^{\mathrm{diff}}=\sqrt{2/d}\,I_d.
\)

Under the normalized theoretical convention
\(
w_0^\top\psi_d(G)
=
\{G^\top\mathcal Q_d(w_0)G-\operatorname{tr}(\mathcal Q_d(w_0))\}/\sqrt{2},
\)
the corresponding coefficient matrices satisfy
\(
\mathcal Q_d(w_0)=\sqrt{2}\,M_d.
\)
Thus,
\(
\mathcal Q_d^{\mathrm{conc}}(w_0)=2e_1e_1^\top
\)
and
\(
\mathcal Q_d^{\mathrm{diff}}(w_0)=2I_d/\sqrt d.
\)
Consequently, $\|w_0\|^2=\|\mathcal Q_d(w_0)\|_F^2=4$ for both signal
configurations, while the generating matrices have equal squared Frobenius
norm $\|M_d\|_F^2=2$.

Experiment~3 uses the quadratic-only submap $\psi_d$ rather than the full
linear-plus-quadratic map in
Theorem~\ref{thm:quadratic_ngrc_width}. Thus, the simulated feature dimension is
$p_q=d(d+1)/2$, whereas the full theoretical map has dimension
$d(d+3)/2$. The same quadratic-form concentration and residual-approximation
arguments apply after the linear block is omitted. The concentrated--diffuse
deterministic construction is therefore intended primarily to diagnose the
mechanism in Corollary~\ref{cor:quadratic_deterministic}, rather than to
simulate the random-effects assumption in
Theorem~\ref{thm:quadratic_ngrc_width}.

We use $d\in\{8,12,16,20,24\}$,
$n_{\mathrm{fit}}=n_{\mathrm{cal}}=2p_q$, giving
$p_q/n_{\mathrm{fit}}=0.5$. Each configuration uses
$n_{\mathrm{test}}=2000$, $\lambda=0.5$, $\sigma=0.25$, fixed signal
magnitude $2$, and 150 independent replications at nominal 95\% coverage.

\paragraph{Diffuseness and finite-trace approximation.}
In each replication, we record
\(
\|D_n\|_{\mathrm{op}}
=
\left\|
\mathcal Q_d
\left(
\lambda\Sigma_p^{1/2}Aw_0
\right)
\right\|_{\mathrm{op}},
\)
the ridge-transformed diffuseness diagnostic appearing in
Corollary~\ref{cor:quadratic_deterministic}. We also record the KS distance
between standardized prediction residuals and a standard Gaussian distribution,
together with the error of the finite-trace conformal-width approximation.

As shown in the top row of Figure~\ref{fig:sim_quadratic_dependence}, the
ridge-transformed direction becomes increasingly diffuse under the diffuse
construction and its residual distribution becomes increasingly Gaussian as
$d$ grows. The concentrated construction retains a substantially larger
operator norm and a more pronounced residual-normality discrepancy. At the
same time, the finite-trace width approximation can remain accurate even when
the complete residual distribution is not close to Gaussian. These
finite-dimensional diagnostics are consistent with the mechanism in
Corollary~\ref{cor:quadratic_deterministic}; they are not presented as proof
that its asymptotic diffuseness condition holds.

\paragraph{Feature dependence and temporal transfer.}
At $d=24$, we compare independent Gaussian features, quadratic Gaussian
features, and quadratic lag features generated from a stationary Gaussian
AR(1) process with $\rho=0.8$. Concentration has little effect on residual
normality for Gaussian features but produces a much larger KS discrepancy for
quadratic features, while split-conformal coverage remains near nominal across
the designs. This shows that the relevant Gaussian-approximation issue arises
from the interaction of nonlinear feature dependence and concentration of the
quadratic prediction direction, rather than high dimension alone.

We next study how increasing forecast-origin separation moves the dependent
design toward its same-marginal independent-window benchmark. We use temporal
quadratic features with $\rho=0.8$, $d=20$, $p_q=210$, and
$n_{\mathrm{fit}}=n_{\mathrm{cal}}=420$, and vary forecast-origin separation
over $1,4,16,$ and $64$. For comparison, the independent-window benchmark uses
Gaussian lag windows with the same within-window AR(1) covariance, so the
marginal feature-response distribution is preserved while dependence across
forecast origins is removed.

The lower-right panel of Figure~\ref{fig:sim_quadratic_dependence} shows that
the KS discrepancy between the split-conformal width distribution and its
same-marginal independent-window benchmark decreases with separation, while
the marginal residual-normality diagnostic changes comparatively little.
Increasing separation therefore primarily reduces dependence across retained
windows rather than changing the nonlinear residual-shape mechanism within a
window. The smaller separations include overlapping windows and are empirical
stress tests rather than configurations covered by
Proposition~\ref{prop:beta_transfer}; the proposition becomes applicable only
once the retained windows are sufficiently separated. Detailed
absolute-regularity bounds and additional temporal-transfer diagnostics are
reported in Appendix~\ref{app:sim_beta_penalties}.

\begin{figure}[t]
    \centering
    \includegraphics[width=0.7\linewidth]{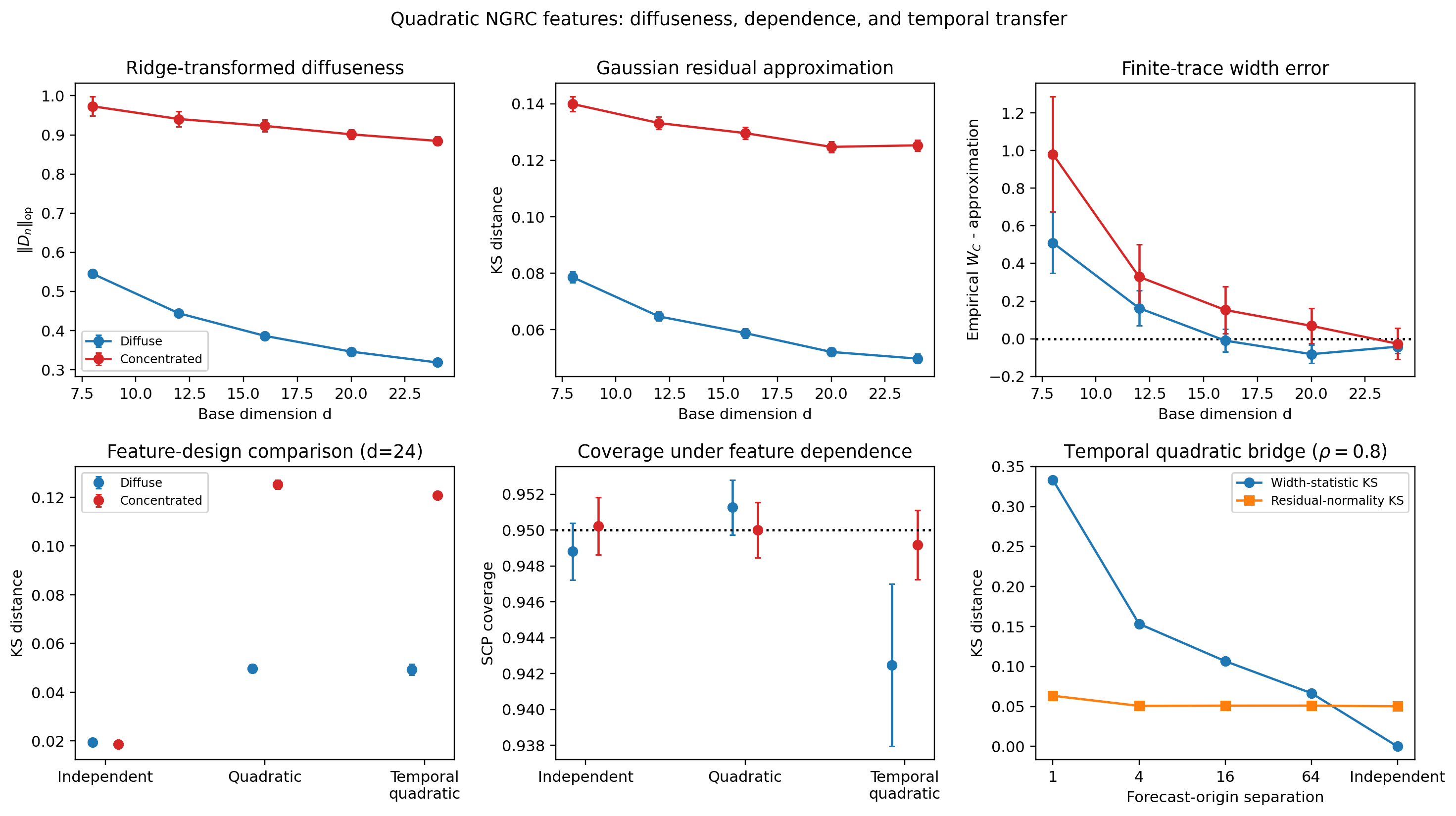}
    \caption{
    Quadratic NGRC diffuseness and temporal transfer.
    Top: ridge-transformed diffuseness, residual Gaussian approximation, and
    finite-trace width error.
    Bottom: feature-design comparisons and convergence toward the
    independent-window benchmark with increasing forecast-origin separation.
    }
    \label{fig:sim_quadratic_dependence}
\end{figure}

\subsection{Experiment 4: Online Robustness in a Nonlinear Volterra System}
\label{subsec:sim_nonlinear_volterra}

We study one-step-ahead forecasting in a nonlinear Volterra
integro-differential system with stochastic observations. The goal is to
compare how uncertainty procedures respond when the observation-noise variance
changes after deployment while the latent trajectory and fitted point
predictor remain fixed. The latent trajectory satisfies
\(
x'(t)
=
\sin(t)-0.1x(t)
+
0.5\tanh\{M(t)\},
\)
with
\(
x(0)=0,
\)
where
\[
M(t)
=
\int_0^t
e^{-0.2(t-s)}
\cos\{2\pi(t-s)\}
x(s)\,ds.
\]
The exponentially damped oscillatory kernel induces fading memory, and the
hyperbolic-tangent transformation makes the dependence on the memory integral
nonlinear. We integrate the equivalent three-dimensional state-space system
using forward Euler with internal step size $h=0.005$ and retain every 20th
simulated state, giving an observation spacing of $\Delta t=0.1$. This
internal step satisfies the Euler stability bound for the oscillatory memory
states; the stability calculation and complete recursion are given in
Appendix~\ref{app:volterra_numerics}. The first 500 retained states are
discarded as burn-in.

Observations satisfy
\(
y_i=x(t_i)+\varepsilon_i
\)
under the constant-variance condition,
\(
\varepsilon_i\sim N(0,0.15^2)
\)
throughout evaluation. Under the variance-change condition,
\[
\varepsilon_i\sim
\begin{cases}
N(0,0.15^2), & i<160,\\
N(0,0.35^2), & i\ge160,
\end{cases}
\]
so the observation-noise standard deviation increases from $0.15$ to $0.35$
at test step 160. We refer to test observations 160 through 799 in this
condition as the \emph{high-noise period}. Within each replication, the
constant-variance and variance-change settings use the same latent trajectory
and standardized noise innovations, differing only in the scale applied after
the change point.

We construct NGRC features consisting of an intercept, 20 linear lag
coordinates, and all upper-triangular quadratic products of those lags,
giving
\(
1+20+\frac{20(21)}{2}=231
\)
features in total. The ridge predictor minimizes
\(
\frac{1}{n}\|y-Xw\|^2+\lambda\|w\|^2,
\)
with $\lambda=0.01$, and remains fixed throughout testing. Each replication
uses 1600 fitting observations, 800 calibration observations, and 800 test
observations.

We compare frozen-scale Bayesian ridge, frozen-quantile split conformal
prediction (SCP), rolling SCP, an updated Gaussian interval, adaptive
conformal prediction, time-weighted conformal prediction, and two Reservoir
Conformal Prediction (ResCP) history policies. Rolling SCP and Updated Gaussian
use the 100 most recently observed forecast residuals; adaptive conformal uses
learning rate $\eta=0.01$; time-weighted conformal uses decay factor $0.98$;
and the common ResCP reservoir and sampling settings are tuned once on an
independent constant-variance pilot sequence and then fixed across evaluation
replications. The all-history ResCP retains every revealed residual, whereas
the shift-aware implementation maintains a fixed FIFO calibration set of
$N=800$ residuals and uses the same linear recency weighting as the
all-history implementation. All online procedures update only with residuals
whose outcomes have already been observed. Exact update rules for the
diagnostic baselines are given in
Appendix~\ref{app:online_uq_updates}.

Results are averaged over 100 paired replications.
Figure~\ref{fig:sim_volterra_variance_change} shows the response during the
high-noise period through rolling coverage and mean interval width.
Table~\ref{tab:sim_volterra_response} reports constant-regime coverage and
width, coverage over the first 50 high-noise observations, coverage and mean
interval width over the complete high-noise period, and recovery diagnostics.

Recovery delay is measured from the noise increase at test step 160. It is the
number of subsequent test observations until coverage over the trailing window
of 100 test observations first enters $[0.93,0.97]$ and remains in that
interval for every remaining test observation. Recovery fraction is the number
of replications satisfying this sustained-recovery criterion. This is a
stringent retrospective stability criterion rather than a direct estimate of
an intrinsic adaptation time: sampling variation can cause an otherwise
well-calibrated procedure to leave the band, and recovery delay is defined only
among replications that eventually satisfy the criterion. We therefore
interpret recovery delay jointly with the corresponding recovery fraction.

\begin{figure*}[t]
    \centering
    \includegraphics[width=0.6\textwidth]{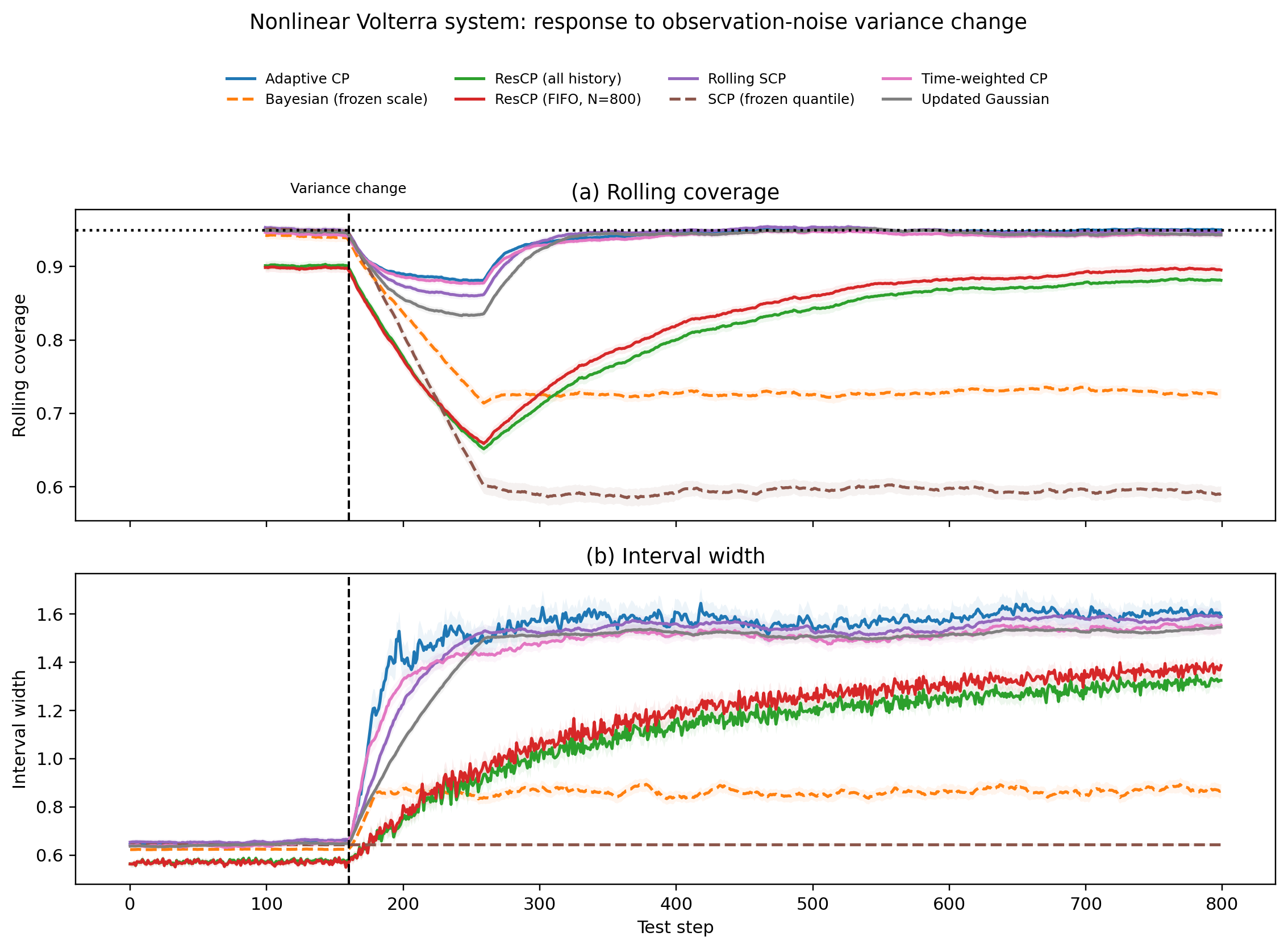}
    \caption{
    Online response to an observation-noise variance change in the nonlinear
    Volterra system. The vertical dashed line marks the increase in
    observation-noise standard deviation from $0.15$ to $0.35$ at test step
    160, and the horizontal dotted line denotes nominal 95\% coverage.
    Panel (a) reports trailing-window coverage and panel (b) reports mean
    interval width. Curves are averages over 100 paired replications and
    shaded regions are 95\% Monte Carlo confidence intervals.
    }
    \label{fig:sim_volterra_variance_change}
\end{figure*}

\begin{table*}[t]
\centering
\caption{
Performance under constant observation-noise variance and after the
noise increase in the nonlinear Volterra system. First-50 high-noise
coverage is calculated over test observations 160--209. Full high-noise
coverage and high-noise width use test observations 160--799. Recovery
fraction is the number of replications satisfying the sustained-recovery
criterion. Observations to recovery counts the number of test observations
from the noise increase until sustained recovery and is reported as mean
$\pm$ standard error among replications that recover. All other entries are
means $\pm$ standard errors over 100 paired replications.
}
\label{tab:sim_volterra_response}
\scriptsize
\setlength{\tabcolsep}{3pt}
\renewcommand{\arraystretch}{1.08}
\resizebox{\textwidth}{!}{%
\begin{tabular}{lccccccc}
\toprule
Method
& \shortstack{Constant\\coverage}
& \shortstack{Constant\\width}
& \shortstack{First-50 high-noise\\coverage}
& \shortstack{Full high-noise\\coverage}
& \shortstack{High-noise\\width}
& \shortstack{Recovery\\fraction}
& \shortstack{Observations from noise\\increase to recovery} \\
\midrule
Bayesian (frozen scale)
& $0.941 \pm 0.001$
& $0.624 \pm 0.001$
& $0.702 \pm 0.005$
& $0.726 \pm 0.001$
& $0.858 \pm 0.003$
& $0/100$
& -- \\
SCP (frozen quantile)
& $0.949 \pm 0.001$
& $0.645 \pm 0.002$
& $0.606 \pm 0.008$
& $0.595 \pm 0.003$
& $0.645 \pm 0.002$
& $0/100$
& -- \\
Rolling SCP
& $0.950 \pm 0.000$
& $0.656 \pm 0.002$
& $0.794 \pm 0.004$
& $0.935 \pm 0.001$
& $1.502 \pm 0.006$
& $62/100$
& $592.6 \pm 5.1$ \\
Updated Gaussian
& $0.947 \pm 0.001$
& $0.643 \pm 0.002$
& $0.756 \pm 0.005$
& $0.929 \pm 0.001$
& $1.458 \pm 0.005$
& $58/100$
& $587.2 \pm 5.5$ \\
Adaptive CP
& $0.949 \pm 0.000$
& $0.650 \pm 0.002$
& $0.832 \pm 0.003$
& $0.938 \pm 0.000$
& $1.544 \pm 0.007$
& $80/100$
& $546.6 \pm 9.8$ \\
Time-weighted CP
& $0.945 \pm 0.000$
& $0.642 \pm 0.002$
& $0.828 \pm 0.004$
& $0.933 \pm 0.000$
& $1.476 \pm 0.006$
& $76/100$
& $574.4 \pm 7.6$ \\
ResCP (all history)
& $0.911 \pm 0.001$
& $0.583 \pm 0.002$
& $0.602 \pm 0.006$
& $0.815 \pm 0.001$
& $1.127 \pm 0.005$
& $11/100$
& $610.3 \pm 13.7$ \\
ResCP (shift-aware FIFO, $N=800$)
& $0.901 \pm 0.001$
& $0.571 \pm 0.002$
& $0.602 \pm 0.006$
& $0.830 \pm 0.001$
& $1.177 \pm 0.005$
& $17/100$
& $603.3 \pm 9.8$ \\
\bottomrule
\end{tabular}
}
\end{table*}

Because the change occurs at test index 160, an absolute recovery index can be
obtained by adding 160 to the reported delay. For example, Adaptive CP's mean
delay of 546.6 corresponds to mean test index 706.6 among recovering
replications, while all-history ResCP's mean delay of 610.3 corresponds to
index 770.3.

Figure~\ref{fig:sim_volterra_variance_change} and
Table~\ref{tab:sim_volterra_response} show substantial differences in
robustness to the observation-noise increase. Under constant variance, most
methods remain close to nominal coverage, although both ResCP variants
undercover in this configuration.

After the variance increase, frozen SCP deteriorates most sharply. Its
calibration quantile remains fixed at the pre-change residual scale, so its
interval width does not respond to the larger forecast errors and full
high-noise coverage falls to $0.595$. Frozen-scale Bayesian ridge also
undercovers persistently. Its interval can vary through the predictive-leverage
term, but its fitted noise scale remains fixed at the pre-change value, so it
does not fully reflect the larger post-change observation variance. Neither
frozen method satisfies the sustained-recovery criterion in any replication.

Both ResCP variants also exhibit substantial initial undercoverage. FIFO
updating gives a modest improvement over the all-history implementation: full
high-noise coverage rises from $0.815$ to $0.830$, and the recovery fraction
rises from $11/100$ to $17/100$, at the cost of increasing high-noise width
from $1.127$ to $1.177$. Thus, restricting the residual history improves
adaptation to the variance change in this experiment, but the improvement is
partial.

In contrast, adaptive CP, rolling SCP, time-weighted CP, and the updated
Gaussian interval widen as recently observed forecast errors reflect the new
noise level and recover much closer to nominal coverage. Adaptive CP has the
largest recovery fraction and the shortest conditional mean recovery delay
among the methods considered, although these recovery statistics should be
interpreted jointly because the delay is defined only among replications that
satisfy the sustained-recovery criterion. The updated Gaussian interval is
particularly informative: once the Gaussian procedure is allowed to replace
its frozen fitting-period scale with a sequential estimate based on recent
forecast errors, its behavior becomes similar to that of the adaptive
conformal procedures. Thus, the principal mechanism in this experiment is not
conformality alone, but access to recently observed forecast errors and
sequential updating of the uncertainty estimate.

\subsection{Experiment 5: Fitting--Calibration Tradeoff Under Controlled Data Budgets}
\label{subsec:sim_data_budget}

Finally, we examine how reserving observations for conformal calibration
affects forecasting performance under a controlled data budget. We consider
\(
y_i=x_i^\top\beta_0+\varepsilon_i
\)
with
\(
x_i\sim N(0,I_p)
\)
and
\(
\varepsilon_i\sim N(0,\sigma^2).
\)
We use a total pre-test budget $B=400$, $\sigma^2=0.01$, and
$\|\beta_0\|^2=1$. We compare a low-dimensional setting with $p=10$
($p/B=0.025$) and a high-dimensional setting with $p=300$
($p/B=0.75$). Each replication uses an independent test set of 2,000
observations.

Throughout this experiment, ridge regression uses the same normalized convention as in the main analysis, with $\lambda=2.5\times10^{-4}$. The normalized penalty is held fixed as the fitting allocation changes, so differences across $\rho$ reflect the
fitting--calibration tradeoff rather than an implicit change in regularization strength.

\paragraph{Equal total budget.}
Bayesian ridge uses all $B=400$ observations for fitting, whereas split
conformal prediction allocates $\rho B$ observations to fitting and
$(1-\rho)B$ observations to calibration, with
\(
\rho\in\{0.5,0.6,0.7,0.8\}.
\)
Within each replication, all allocation fractions use the same pre-test sample
and test set. Results are averaged over 200 paired replications, and we report
coverage, average interval width, Winkler score, and test RMSE.

Figure~\ref{fig:sim_budget_allocation} shows that the allocation cost is small
in low dimension but becomes substantial when the fitting problem is high
dimensional. Increasing $\rho$ improves split-conformal prediction by
providing more observations for fitting, but across the considered allocation
range its interval width, Winkler score, and RMSE remain larger than those of
Bayesian ridge while both procedures maintain near-nominal coverage. Thus,
under the correctly specified stationary model used here, reserving
observations for conformal calibration can carry a meaningful efficiency cost
when the feature dimension is large relative to the available fitting sample.

\begin{figure*}[t]
    \centering
    \includegraphics[width=\textwidth]{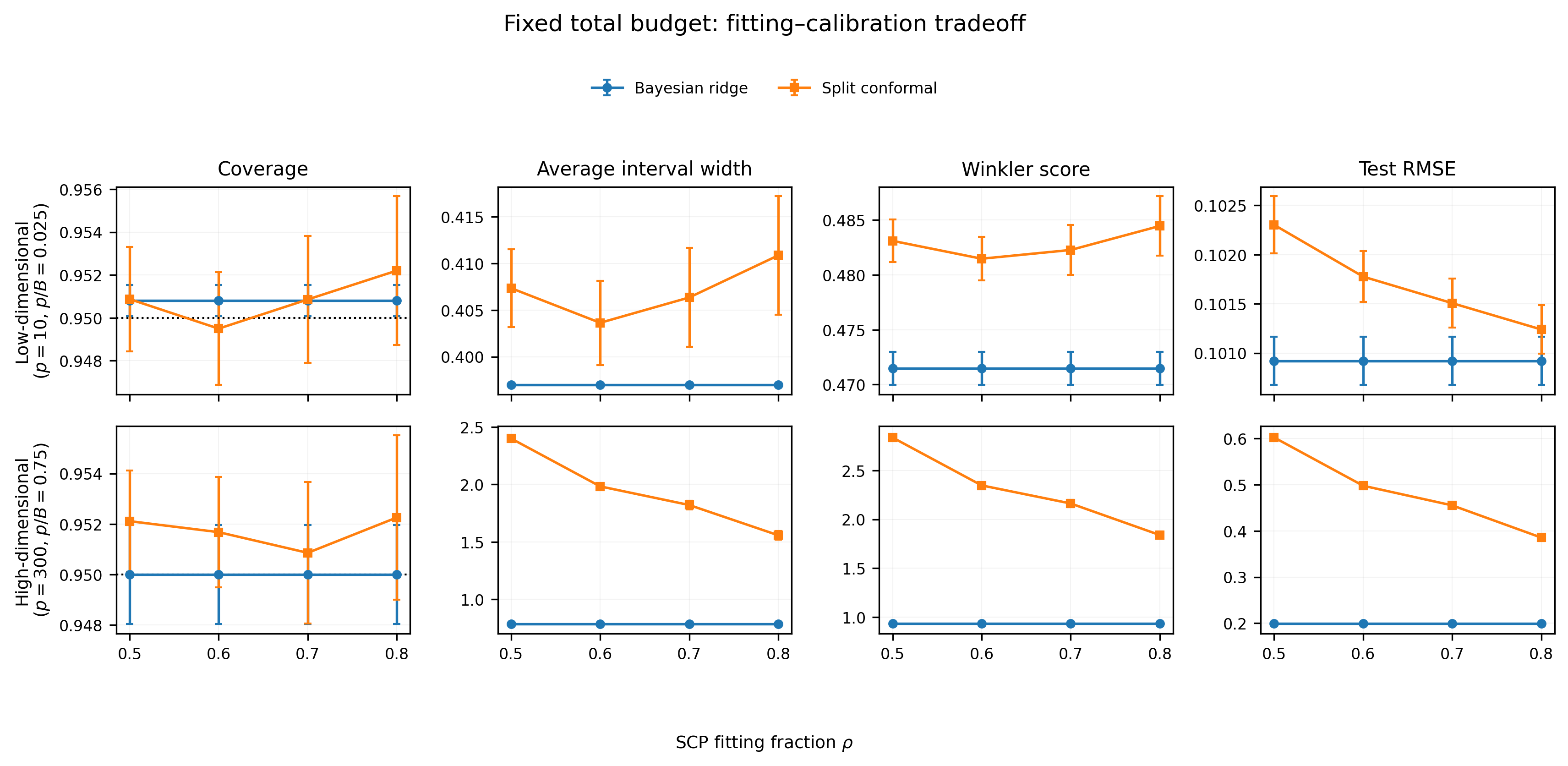}
    \caption{
    Fitting--calibration tradeoff under the fixed total budget $B=400$.
    Bayesian ridge uses all observations for fitting, while split conformal
    varies the fitting fraction $\rho$. Columns show coverage, interval width,
    Winkler score, and test RMSE for $p=10$ and $p=300$. Results are means
    over 200 paired replications with 95\% confidence intervals.}
    \label{fig:sim_budget_allocation}
\end{figure*}
\section{Real Data Analysis}
\label{sec:real_data}

This section illustrates how residual shape, finite-sample regularization,
temporal changes in forecast-error scale, and online adaptation interact in
practical forecasting. Across the four applications below,
$p/n_{\rm fit}$ is between 0.024 and 0.040 and
$d_{\rm eff}/n_{\rm fit}$ is between 0.004 and 0.029, placing these
applications much closer to a low- or effective-low-dimensional regime than
to the proportional-growth setting studied in Section~\ref{sec:theory}.

We study LA ozone, Solar power, Beijing PM$_{10}$, and Exchange returns. The
series contain 16,071 daily, 52,560 ten-minute, 35,064 hourly, and 7,587 daily
observations, respectively. Our primary uncertainty-quantification comparison
holds the NGRC point forecast fixed. A secondary, explicitly exploratory
comparison holds the uncertainty procedure fixed and varies the point
forecaster among NGRC, ARIMA, a GRU, and a Transformer.

The primary NGRC specification uses lagged observations and quadratic delay
features, with seasonal Fourier terms included for Solar and Beijing
PM$_{10}$. Dataset-specific preprocessing, lag choices, feature construction,
and tuning details are given in Appendix~\ref{app:real_details}.

\paragraph{Chronological protocol and leakage controls.}
Our primary fixed-origin analysis uses a chronological 40\% fitting,
40\% calibration, and 20\% test split. As a sensitivity analysis, we also
use three shifted 40\% fit/20\% calibration/20\% test windows with starting
points at 0\%, 10\%, and 20\% of the available feature rows.

Scaling parameters and ridge penalties are estimated using only the fitting
block of each evaluation window. Missing observations are filled only by
carrying the most recent past observation forward, and an unavailable leading
prefix is discarded. Fourier features are deterministic functions of the time
index and contain no parameters estimated from future observations. A row is
assigned to a split according to its target time, and a direct $H$-step
feature vector contains observations only through time $t-H$ for a target at
time $t$. For sequential procedures, a forecast residual enters the online
state only after its target has been observed, producing an $H$-step feedback
delay.

\subsection{Connection to Theoretical Results}
\label{subsec:real_diagnostics}

The empirical applications are closer to the fixed- or
effective-low-dimensional setting than to the proportional-growth regime.
They do not, however, satisfy the assumptions of
Theorem~\ref{thm:fixed_width_discrepancy} literally. In particular, the
empirical ridge penalties are selected from the fitting data and need not
follow the vanishing-regularization sequence assumed in the theorem.
Moreover, the fitting, calibration, and test periods exhibit changes in their
forecast-error distributions. We therefore use the fixed-dimensional theory
as a qualitative guide for interpreting residual shape, residual scale,
regularization, and interval behavior rather than as a literal asymptotic
model for these datasets.

The theoretical ridge calculations penalize every readout coordinate, whereas
the empirical NGRC fits leave the intercept unpenalized. Accordingly, define
\(D=\operatorname{diag}(0,1,\ldots,1)\) and
\(
d_{\rm eff}
=
\operatorname{tr}\!\left\{
S(S+\lambda D)^{-1}
\right\},
\)
with
\(S=R^\top R/n_{\rm fit}.\)
This quantity is the trace of the smoother corresponding to the actual
empirical ridge fit. For each dataset,
Table~\ref{tab:real_theory_diagnostics} reports $p/n_{\rm fit}$, the
regularization-adjusted ratio $d_{\rm eff}/n_{\rm fit}$, residual RMS scales
in the fitting, calibration, and test blocks, the calibration residual
quantile $\widehat q_{1-\alpha,\rm cal}$, and
\(
\rho_q=
\widehat q_{1-\alpha,\rm cal}/
\{z_{1-\alpha/2}\widehat\tau_{\rm fit}\}.
\)
The last quantity compares the empirical conformal residual radius with the
Gaussian radius based on the fitting-period RMS scale. Values near one
indicate similar radii, values above one favor a wider SCP residual radius,
and values below one favor a wider Bayesian residual radius. Predictive
leverage and finite-sample corrections still affect the final Bayesian
interval widths, so $\rho_q$ is a diagnostic rather than an identity for the
interval-width ratio.

\begin{table*}[t]
\centering
\small
\caption{Theory-facing diagnostics for the one-step, 95\% fixed-origin
analysis. RMS scales and residual quantiles are on the standardized-response
scale.}
\label{tab:real_theory_diagnostics}
\resizebox{\textwidth}{!}{%
\begin{tabular}{lrrrrrrrr}
\toprule
Dataset
& $n_{\rm fit}$
& $p/n_{\rm fit}$
& $d_{\rm eff}/n_{\rm fit}$
& $\widehat{\tau}_{\rm fit}$
& $\widehat{\tau}_{\rm cal}$
& $\widehat{\tau}_{\rm test}$
& $\widehat{q}_{.95,\rm cal}$
& $\widehat{q}/(z\widehat{\tau}_{\rm fit})$ \\
\midrule
LA ozone
& 6420
& 0.0394
& 0.0287
& 0.521
& 0.392
& 0.360
& 0.787
& 0.771 \\
Solar
& 21009
& 0.0338
& 0.0038
& 0.029
& 0.026
& 0.031
& 0.057
& 1.001 \\
Beijing PM$_{10}$
& 14016
& 0.0238
& 0.0116
& 0.231
& 0.248
& 0.182
& 0.388
& 0.858 \\
Exchange returns
& 3029
& 0.0396
& 0.0038
& 0.993
& 1.445
& 1.284
& 2.704
& 1.389 \\
\bottomrule
\end{tabular}%
}
\end{table*}

The diagnostic combines two effects. Exactly,
\(
\rho_q=
\{\widehat q_{1-\alpha,\rm cal}/
(z_{1-\alpha/2}\widehat\tau_{\rm cal})\}
\{\widehat\tau_{\rm cal}/\widehat\tau_{\rm fit}\},
\)
so the first factor describes the shape of the calibration residual
distribution relative to a Gaussian RMS benchmark, while the second measures a
fit-to-calibration change in residual scale. The corresponding values are
reported in Table~\ref{tab:real_diagnostic_decomposition}.

\begin{table}[t]
\centering
\small
\caption{Decomposition of the residual diagnostic at 95\% nominal coverage.}
\label{tab:real_diagnostic_decomposition}
\begin{tabular}{lrrr}
\toprule
Dataset & Shape factor & Scale factor & $\rho_q$ \\
\midrule
LA ozone & 1.024 & 0.752 & 0.771 \\
Solar & 1.119 & 0.897 & 1.001 \\
Beijing PM$_{10}$ & 0.798 & 1.074 & 0.858 \\
Exchange returns & 0.955 & 1.455 & 1.389 \\
\bottomrule
\end{tabular}
\end{table}

LA ozone and Beijing have $\rho_q=0.771$ and $0.858$, respectively, so the
fit-block Gaussian RMS scale is large relative to the calibration residual
quantile and the Bayesian intervals are correspondingly wider. Solar has
$\rho_q=1.001$, indicating nearly equal Gaussian and conformal residual radii,
while Exchange returns has $\rho_q=1.389$, for which SCP is wider. The
differences among $\widehat\tau_{\rm fit}$,
$\widehat\tau_{\rm cal}$, and $\widehat\tau_{\rm test}$ also provide direct
evidence of temporal changes in the forecast-error scale.

The decomposition sharpens this interpretation. For LA ozone, the
within-calibration shape factor is close to one while the calibration RMS is
substantially smaller than the fitting RMS, so the Bayesian--SCP difference is
driven mainly by a between-block scale change. For Exchange returns, the shape
factor is again close to one, whereas the calibration RMS is about 46\% larger
than the fitting RMS, making temporal scale change the dominant contribution
to $\rho_q>1$. Beijing PM$_{10}$ shows a more substantial within-block shape
effect, while Solar exhibits offsetting shape and scale contributions. Thus,
the empirical results are consistent with the broader residual-distribution
interpretation developed in Section~\ref{subsec:fixed_theory}, but they do not
constitute a clean empirical isolation of the stationary residual-shape
mechanism in Theorem~\ref{thm:fixed_width_discrepancy}.

\subsection{NGRC with Alternative Uncertainty Procedures}
\label{subsec:real_ngrc_uq}

We compare Bayesian ridge, symmetric split conformal prediction (SCP), an
equal-tail asymmetric SCP baseline based on signed residuals, adaptive CP,
time-weighted CP, and ResCP. All six procedures use the same fitted NGRC point
forecast within a dataset and evaluation origin. The asymmetric baseline
isolates the effect of signed-residual asymmetry without reservoir
localization, temporal weighting, or adaptive calibration.

\begin{table*}[t]
\centering
\small
\caption{Fixed-origin one-step NGRC uncertainty-quantification results for
the 40\% fit/40\% calibration/20\% test split at 95\% nominal coverage.
All methods within a dataset use the same NGRC point forecasts, so RMSE is
reported once per dataset. The signed coverage error is
$\Delta_{\rm cov}=100(\widehat{\mathrm{Cov}}-0.95)$ percentage points,
computed from unrounded empirical coverage.
UQ time is the wall-clock time for method-specific calibration, tuning,
sequential updating, and interval construction; it excludes the shared NGRC
fit. Boldface marks the coverage closest to 0.95 and the lowest Winkler score
within each dataset.}
\label{tab:real_main}
\resizebox{\textwidth}{!}{%
\begin{tabular}{llrrrrrr}
\toprule
Dataset & Method & RMSE & Coverage & $\Delta_{\rm cov}$ (pp)
& Width & Winkler & UQ time (s) \\
\midrule

\multirow{6}{*}{LA ozone}
& Bayesian ridge
& \multirow{6}{*}{0.360}
& 0.993 & 4.25 & 2.047 & 2.083 & 0.015 \\
& SCP
& & 0.963 & 1.29 & 1.573 & 1.795 & $<0.001$ \\
& Asymmetric SCP
& & 0.963 & 1.29 & 1.574 & 1.791 & 0.001 \\
& Adaptive CP
& & \textbf{0.954} & 0.36 & 1.485 & 1.784 & 6.523 \\
& Time-weighted CP
& & \textbf{0.954} & 0.36 & 1.477 & \textbf{1.782} & 8.677 \\
& ResCP
& & 0.929 & -2.13 & 1.369 & 1.797 & 18.831 \\
\midrule

\multirow{6}{*}{Solar}
& Bayesian ridge
& \multirow{6}{*}{0.031}
& 0.941 & -0.94 & 0.115 & 0.205 & 0.115 \\
& SCP
& & 0.940 & -0.98 & 0.115 & 0.205 & 0.001 \\
& Asymmetric SCP
& & 0.933 & -1.70 & 0.112 & 0.209 & 0.002 \\
& Adaptive CP
& & \textbf{0.943} & -0.66 & 0.109 & 0.167 & 73.168 \\
& Time-weighted CP
& & 0.857 & -9.31 & 0.113 & 0.176 & 99.132 \\
& ResCP
& & 0.961 & 1.08 & 0.058 & \textbf{0.075} & 256.402 \\
\midrule

\multirow{6}{*}{Beijing PM$_{10}$}
& Bayesian ridge
& \multirow{6}{*}{0.182}
& 0.971 & 2.10 & 0.909 & 1.174 & 0.025 \\
& SCP
& & 0.960 & 1.00 & 0.776 & 1.138 & 0.001 \\
& Asymmetric SCP
& & 0.960 & 1.05 & 0.775 & 1.138 & 0.001 \\
& Adaptive CP
& & \textbf{0.947} & -0.32 & 0.726 & 1.048 & 32.816 \\
& Time-weighted CP
& & 0.936 & -1.36 & 0.677 & 1.030 & 45.449 \\
& ResCP
& & 0.954 & 0.43 & 0.656 & \textbf{0.933} & 125.954 \\
\midrule

\multirow{6}{*}{Exchange returns}
& Bayesian ridge
& \multirow{6}{*}{1.284}
& 0.946 & -0.41 & 3.981 & 6.468 & 0.002 \\
& SCP
& & 0.983 & 3.35 & 5.407 & 6.997 & $<0.001$ \\
& Asymmetric SCP
& & 0.983 & 3.35 & 5.391 & 6.999 & $<0.001$ \\
& Adaptive CP
& & \textbf{0.954} & 0.38 & 4.015 & 6.396 & 1.420 \\
& Time-weighted CP
& & 0.939 & -1.14 & 4.281 & \textbf{6.242} & 1.846 \\
& ResCP
& & 0.934 & -1.60 & 3.793 & 6.640 & 4.890 \\
\bottomrule
\end{tabular}%
}
\end{table*}

Table~\ref{tab:real_main} shows substantial dataset dependence in both
calibration and interval efficiency. Adaptive CP remains comparatively close
to nominal coverage across all four applications. ResCP achieves particularly
low Winkler scores for Solar and Beijing PM$_{10}$, whereas its narrower ozone
and Exchange intervals are accompanied by coverage shortfalls. Symmetric and
asymmetric SCP are nearly indistinguishable, indicating that signed-residual
asymmetry alone does not explain the larger differences between SCP and ResCP.
The Bayesian--SCP width ordering also agrees qualitatively with the
residual-scale diagnostics in
Table~\ref{tab:real_theory_diagnostics}. These comparisons are descriptive:
because the empirical penalties are data-selected and residual distributions
change across chronological blocks, we do not interpret them as direct
validation of Theorem~\ref{thm:fixed_width_discrepancy}.

Computational costs differ substantially across procedures. Bayesian ridge,
ordinary SCP, and asymmetric SCP add little overhead after fitting the common
NGRC predictor. Adaptive CP, time-weighted CP, and ResCP require sequential
processing and, where applicable, pre-test hyperparameter selection and are
therefore substantially more expensive. The runtimes in
Table~\ref{tab:real_main} measure only the uncertainty-quantification stage.
They should not be compared directly with the end-to-end runtimes in
Table~\ref{tab:real_rescp}, which also include preprocessing, point-model
fitting, forecasting, ResCP tuning, and interval construction.

\paragraph{ResCP comparison.}
ResCP differs from ordinary SCP through its reservoir-state representation,
state-dependent localization and weighting of past residual information, and
sequential updating. The near agreement between symmetric and asymmetric SCP
suggests that signed-residual asymmetry alone does not explain the larger
differences observed for ResCP. This diagnostic does not separately identify
the contributions of reservoir localization, weighting, and sequential
adaptation.

\paragraph{Repeated chronological robustness.}
To assess sensitivity to the particular fixed-origin test period, we repeat
the one-step comparison over three shifted chronological evaluation windows.
Because these windows overlap and use a shorter calibration block than the
primary analysis, the resulting standard deviations quantify sensitivity to
the forecast period rather than sampling uncertainty and should not be
compared row-for-row with Table~\ref{tab:real_main}. Adaptive CP remains
comparatively stable across forecast periods, whereas the Exchange-return
results exhibit substantially larger between-period variation, highlighting
the practical importance of temporal shift. Complete method-by-dataset results
are reported in Appendix~\ref{app:real_rolling}.

\paragraph{Forecast-horizon robustness.}
We additionally evaluate direct horizons $H\in\{1,3,7\}$ using the same
availability-safe delayed-update protocol. A forecast residual is incorporated
into a sequential method only after the corresponding target has become
observable. Adaptive CP remains comparatively stable across horizons, while
Solar and Exchange returns show more pronounced deterioration for some methods
as the forecast horizon increases. Complete coverage, width, Winkler-score,
and runtime results are reported in Appendix~\ref{app:real_horizon}.

\paragraph{Calibration across nominal levels.}
We also evaluate empirical calibration at nominal coverage levels
$80\%$, $85\%$, $90\%$, $95\%$, and $97.5\%$. No procedure is uniformly
calibrated across all datasets and nominal levels; the complete calibration
curves are reported in Appendix~\ref{app:real_calibration}.

\paragraph{Joint coverage and efficiency.}
Figure~\ref{fig:real_efficiency} plots coverage error against both mean
interval width and Winkler score for the fixed-origin analysis. Points to the
left of zero undercover, so small interval width for such methods should not
be interpreted as greater efficiency at comparable coverage. In particular,
ResCP's relatively narrow ozone and Exchange intervals accompany coverage
shortfalls, whereas its Solar and Beijing intervals combine relatively small
width with coverage closer to the target. The Winkler score provides a
complementary summary by penalizing both interval width and misses.

\begin{figure*}[t]
  \centering
  \includegraphics[width=\textwidth]{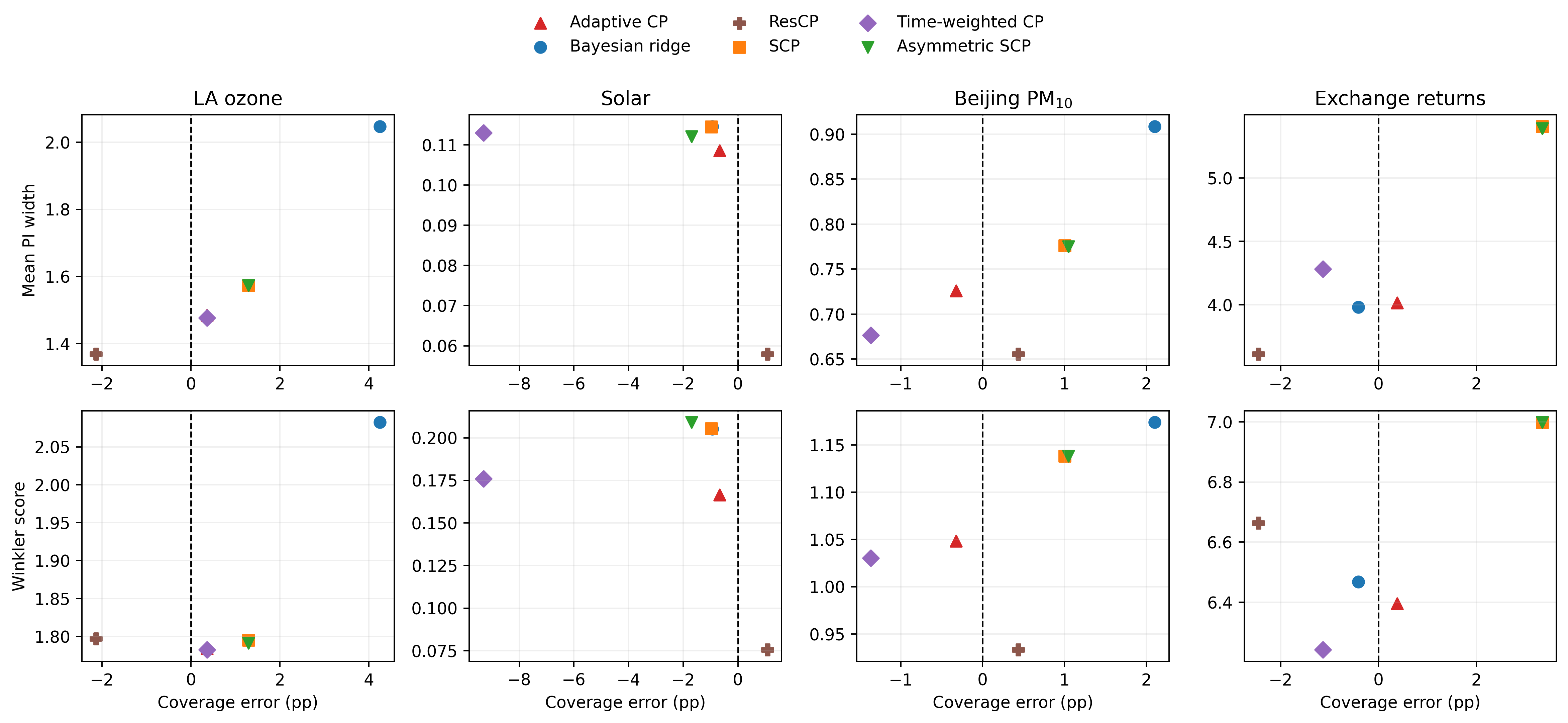}
  \caption{Coverage error versus mean interval width (top) and Winkler score
  (bottom) at 95\% nominal coverage. The vertical line marks exact marginal
  coverage. All methods use the same NGRC point forecasts.}
  \label{fig:real_efficiency}
\end{figure*}

\subsection{Forecasting Models with ResCP}
\label{subsec:real_rescp_models}

As a secondary exploratory analysis, we compare NGRC, ARIMA, GRU, and
Transformer point forecasters under a common ResCP uncertainty layer. The
models use the same target transformation, chronological evaluation blocks,
and lag or history budget where their architectures permit it. Because the
model classes and optimization problems cannot be made identical, this
comparison should be interpreted as an end-to-end pipeline comparison rather
than as an isolated test of forecasting architecture. In particular, the GRU
and Transformer follow the MAE training objective used in the ResCP benchmark
and the same chronological early-stopping budget, so RMSE is a descriptive
evaluation metric rather than an objective optimized equally by every model.

\begin{table*}[t]
\centering
\small
\caption{Comparison of point-forecasting architectures under the common
official ResCP uncertainty procedure for one-step-ahead forecasting at 95\%
nominal coverage. Runtime includes preprocessing, point-model fitting,
forecasting, ResCP tuning, and interval construction.}
\label{tab:real_rescp}
\resizebox{\textwidth}{!}{%
\begin{tabular}{llrrrrr}
\toprule
Dataset & Forecasting pipeline & RMSE & Coverage & PI width & Winkler
& Total runtime (s) \\
\midrule
\multirow{4}{*}{LA ozone}
& ARIMA + ResCP & 0.359 & 0.941 & 1.439 & 1.834 & 26.190 \\
& GRU + ResCP & 0.365 & 0.956 & 1.524 & 1.810 & 60.960 \\
& Transformer + ResCP & 0.364 & 0.935 & 1.391 & 1.841 & 141.942 \\
& NGRC + ResCP & 0.360 & 0.929 & 1.369 & 1.797 & 19.390 \\
\midrule
\multirow{4}{*}{Solar}
& ARIMA + ResCP & 0.021 & 0.904 & 0.038 & 0.069 & 156.337 \\
& GRU + ResCP & 0.020 & 0.894 & 0.041 & 0.070 & 442.007 \\
& Transformer + ResCP & 0.029 & 0.907 & 0.041 & 0.067 & 1102.612 \\
& NGRC + ResCP & 0.031 & 0.961 & 0.058 & 0.075 & 263.522 \\
\midrule
\multirow{4}{*}{Beijing PM$_{10}$}
& ARIMA + ResCP & 0.178 & 0.957 & 0.651 & 0.904 & 157.428 \\
& GRU + ResCP & 0.184 & 0.954 & 0.646 & 0.924 & 160.148 \\
& Transformer + ResCP & 0.204 & 0.956 & 0.671 & 0.919 & 351.799 \\
& NGRC + ResCP & 0.182 & 0.954 & 0.656 & 0.933 & 125.263 \\
\midrule
\multirow{4}{*}{Exchange returns}
& ARIMA + ResCP & 1.276 & 0.941 & 3.804 & 6.515 & 10.705 \\
& GRU + ResCP & 1.275 & 0.934 & 3.721 & 6.727 & 9.687 \\
& Transformer + ResCP & 1.276 & 0.921 & 3.625 & 6.684 & 24.113 \\
& NGRC + ResCP & 1.284 & 0.934 & 3.793 & 6.640 & 5.211 \\
\bottomrule
\end{tabular}%
}
\end{table*}

Table~\ref{tab:real_rescp} shows that changing the forecasting architecture
does not produce a uniform improvement in uncertainty performance under the
common ResCP procedure. For LA ozone and Exchange returns, RMSE is similar
across the four models, yet coverage and interval width differ, showing that
similar point-forecast accuracy need not imply similar uncertainty
performance. For Solar, ARIMA and GRU achieve lower RMSE than NGRC but
substantially lower coverage, whereas for Beijing PM$_{10}$ the four
architectures produce relatively similar coverage and interval widths.

Computational cost also differs substantially across architectures. The
Transformer pipelines are considerably more expensive on several datasets
without consistently improving coverage or Winkler score. We therefore do not
interpret these results as showing that NGRC is generally superior to
recurrent or Transformer models. Rather, point-forecast error, residual
behavior, local calibration, and computational cost interact, and the relative
performance of the complete pipelines depends on both the dataset and the
evaluation metric.

Overall, the real-data analyses illustrate several of the mechanisms
identified by the theory without assuming that the theoretical
independent-window models hold exactly. Residual shape and changes in residual
scale are informative for the Bayesian--SCP width ordering, while adaptive
recalibration can be useful when forecast-error behavior changes over time.
The forecasting-model comparison further shows that similar point-forecast
accuracy can lead to different uncertainty performance. At the same time,
narrow intervals must be evaluated jointly with their coverage and interval
score. Calibration and interval efficiency are therefore assessed directly on
chronologically ordered test observations rather than inferred from the
asymptotic theory.
\section{Conclusion}\label{sec:conclusion}

We present a statistical uncertainty quantification framework for next generation reservoir computing in nonlinear time series with memory. NGRC provides a computationally efficient forecasting backbone because the nonlinear delay representation is fixed and only the output readout is trained. Building uncertainty quantification on top of this readout makes it possible to compare model-based and conformal intervals within a common forecasting architecture.

Our theoretical results characterize when the Bayesian and conformal uncertainty intervals agree and what governs their differences. In the fixed-dimensional regime, the Gaussian readout interval and the split-conformal interval built on the same NGRC ridge readout converge to widths $2z\tau$ and $2q_e$, determined by the root mean square and the $(1-\alpha)$ quantile of the absolute population residual, without requiring the conditional mean to be linear in the NGRC features or the feature covariance to be invertible. The two procedures are asymptotically equivalent if and only if $q_e=z\tau$, a residual-distributional matching condition satisfied by Gaussian errors but not in general. Low dimensionality alone only makes the Bayesian parameter-uncertainty contribution vanish, leaving the comparison to the shape of the residual distribution. In proportional regimes, the bias--variance--posterior decomposition shows that the Bayesian width carries observation noise plus posterior parameter uncertainty, whereas the conformal width reflects observation noise, frequentist estimation variance, and ridge prediction bias. Because the bias term depends on the signal, there is no universal ordering between the two widths. For isotropic designs, the Bayesian interval is wider when $s^2\lambda<\sigma^2\gamma$, narrower when $s^2\lambda>\sigma^2\gamma$, and the two widths coincide on the boundary $s^2\lambda=\sigma^2\gamma$, where the prior scale matches the signal scale and the ridge penalty equals its risk-optimal value $\lambda^{*}=\sigma^2\gamma/s^2$. When the Bayesian interval is narrower, the reduced width reflects asymptotic undercoverage rather than improved efficiency at equal coverage. The quadratic-feature analysis extends these width approximations to an explicit nonlinear NGRC benchmark with dependent polynomial coordinates, with the deterministic extension requiring diffuseness of the ridge-transformed quadratic bias direction rather than the raw signal itself. The coupling argument of Appendix~\ref{app:dependent_transfer} further shows that independent-window conclusions can be transferred approximately to sufficiently separated forecast origins of an absolutely regular series; it is a sufficient transfer principle rather than an approximation for ordinary overlapping NGRC windows.

From a practical perspective, the feature-to-sample ratio helps determine when the choice of uncertainty procedure has first-order consequences, but it does not by itself determine which interval is wider or better calibrated. When $p/n$ is small, the two procedures differ asymptotically through the residual distribution. With approximately Gaussian forecast errors, they can be interchangeable to first order, whereas non-Gaussian residuals can leave a persistent width and coverage discrepancy. When $p/n$ is non-negligible, the comparison depends jointly on signal strength, noise level, and regularization. In the isotropic proportional model, Bayesian and conformal widths coincide at the prior-matched, risk-optimally regularized boundary. Away from that boundary, the Bayesian interval is conservative in the weak-signal regime and anticonservative in the strong-signal regime. It also requires a reliable noise scale, which the training residuals need not provide in proportional dimension. Split conformal prediction, by calibrating directly on held-out forecast errors, retains asymptotic nominal marginal coverage in the independent benchmark throughout the phase diagram without requiring a noise-scale estimate. Residual diagnostics in low dimension and the signal-to-regularization balance in high dimension therefore complement the ratio $p/n$ in determining when the two procedures can be treated interchangeably.

The simulation experiments support the theoretical conclusions while highlighting several practical tradeoffs. In fixed dimensions, Bayesian and conformal interval widths can differ because of the residual distribution, while in high dimensions their ordering depends on the balance between signal strength, regularization, and the feature-to-sample-size ratio. The quadratic-feature experiments show that decreasing ridge-transformed operator norm is accompanied by improved residual Gaussianity, while increasing forecast-origin separation moves the
dependent-window width distribution toward its same-marginal independent-window benchmark. The smaller separations, including overlapping windows, should be viewed as empirical stress tests rather than configurations covered by the transfer result. In the Volterra experiment, methods that update their uncertainty estimates using recently observed forecast errors are more robust to the variance shift than frozen Bayesian or split-conformal intervals. The similar behavior of the updated Gaussian baseline and the adaptive conformal procedures indicates that sequential uncertainty updating is an important mechanism in this setting. The data-budget experiment further shows that, under a correctly specified stationary model, split conformal calibration can incur an efficiency cost when holding the total pre-test sample size fixed because part of the sample must be reserved for calibration.

The real-data experiments show substantial dataset dependence in uncertainty performance. Adaptive conformal prediction is comparatively stable and often remains close to nominal coverage across the real-data evaluations. ResCP can achieve lower Winkler scores or narrower intervals in some datasets, but these gains are not uniform and can be accompanied by undercoverage. The Bayesian--SCP width ordering is qualitatively consistent with the residual diagnostics, although those diagnostics combine residual shape with fit-to-calibration changes in residual scale and therefore should not be interpreted as direct validation of the stationary fixed-dimensional theory. Holding ResCP fixed across NGRC, ARIMA, recurrent neural networks, and Transformers also shows that similar point-forecast accuracy can lead to different uncertainty performance. Computational cost differs substantially across procedures and forecasting models: Bayesian ridge and split conformal add little overhead once the NGRC predictor is fitted, whereas adaptive, time-weighted, and ResCP procedures require substantially more computation because of sequential processing and tuning. Overall, calibration, interval efficiency, point accuracy, temporal adaptation, and computational cost must be considered jointly when comparing forecasting pipelines.

Several directions remain open. Future work could extend the theory to multivariate and multi-horizon NGRC forecasting, where uncertainty must account for cross-coordinate dependence and horizon-dependent error propagation. Another direction is to develop sharper finite-sample coverage guarantees for conformal NGRC intervals under temporal dependence, nonstationarity, and distribution shift. Finally, the empirical results suggest the need for principled, training-only procedures to select lag length, nonlinear degree, seasonal features, and memory summaries. Overall, NGRC combined with principled uncertainty quantification provides a flexible and computationally efficient framework for nonlinear forecasting with memory. More broadly, our analysis identifies when Bayesian and conformal intervals are approximately interchangeable and when residual shape, regularization, dimensionality, signal strength, or temporal shift make the choice of uncertainty procedure consequential.

\newpage 

\bibliographystyle{abbrvnat}
\bibliography{bib}

@article{nextgeneration,
  author  = {Gauthier, Daniel J. and Bollt, Erik M. and Griffith, Aaron and Barbosa, Wendson A. S.},
  title   = {Next generation reservoir computing},
  journal = {Nature Communications},
  volume  = {12},
  pages   = {5564},
  year    = {2021},
  doi     = {10.1038/s41467-021-25801-2}
}

@article{Bollt_2021,
   title={On explaining the surprising success of reservoir computing forecaster of chaos? The universal machine learning dynamical system with contrast to VAR and DMD},
   volume={31},
   ISSN={1089-7682},
   url={http://dx.doi.org/10.1063/5.0024890},
   DOI={10.1063/5.0024890},
   number={1},
   journal={Chaos: An Interdisciplinary Journal of Nonlinear Science},
   publisher={AIP Publishing},
   author={Bollt, Erik},
   year={2021},
   month=jan }

@article{xu2023conformal,
  title={Conformal prediction for time series},
  author={Xu, Chen and Xie, Yao},
  journal={IEEE transactions on pattern analysis and machine intelligence},
  volume={45},
  number={10},
  pages={11575--11587},
  year={2023},
  publisher={IEEE}
}

@article{shafer2008tutorial,
  title={A tutorial on conformal prediction.},
  author={Shafer, Glenn and Vovk, Vladimir},
  journal={Journal of Machine Learning Research},
  volume={9},
  number={3},
  year={2008}
}

@inproceedings{zaffran2022adaptive,
  title={Adaptive conformal predictions for time series},
  author={Zaffran, Margaux and F{\'e}ron, Olivier and Goude, Yannig and Josse, Julie and Dieuleveut, Aymeric},
  booktitle={International Conference on Machine Learning},
  pages={25834--25866},
  year={2022},
  organization={PMLR}
}

@article{gibbs2021adaptive,
  title={Adaptive conformal inference under distribution shift},
  author={Gibbs, Isaac and Candes, Emmanuel},
  journal={Advances in Neural Information Processing Systems},
  volume={34},
  pages={1660--1672},
  year={2021}
}

@article{Aram2015IntracorticalConnectivity,
  author  = {Aram, P. and Freestone, D. R. and Cook, M. J. and Kadirkamanathan, V. and Grayden, D. B.},
  title   = {Model-based estimation of intra-cortical connectivity using electrophysiological data},
  journal = {NeuroImage},
  volume  = {118},
  pages   = {563--575},
  year    = {2015},
  month   = sep,
  doi     = {10.1016/j.neuroimage.2015.06.048}
}

@article{Pakkanen2023IncidencePrevalence,
  author  = {Pakkanen, Mikko S. and Miscouridou, Xenia and Penn, Matthew J.
             and Whittaker, Charles and Berah, Tresnia and Mishra, Swapnil
             and Mellan, Thomas A. and Bhatt, Samir},
  title   = {Unifying incidence and prevalence under a time-varying general
             branching process},
  journal = {Journal of Mathematical Biology},
  volume  = {87},
  number  = {2},
  pages   = {35},
  year    = {2023},
  doi     = {10.1007/s00285-023-01958-w}
}

@article{Patlashenko2001VolterraTimestepping,
  author  = {Patlashenko, Igor and Givoli, Dan and Barbone, Paul},
  title   = {Time-stepping schemes for systems of Volterra integro-differential equations},
  journal = {Computer Methods in Applied Mechanics and Engineering},
  volume  = {190},
  number  = {43--44},
  pages   = {5691--5718},
  year    = {2001},
  doi     = {10.1016/S0045-7825(01)00192-X}
}

@article{HanWong2021VolterraHeston,
  author  = {Han, Bingyan and Wong, Hoi Ying},
  title   = {Merton's portfolio problem under Volterra Heston model},
  journal = {Finance Research Letters},
  volume  = {39},
  year    = {2021},
  pages   = {101580},
  doi     = {10.1016/j.frl.2020.101580}
}

@inproceedings{Neglia2026ResCP,
  author    = {Roberto Neglia and Andrea Cini and Michael M. Bronstein and Filippo Maria Bianchi},
  title     = {ResCP: Reservoir Conformal Prediction for Time Series Forecasting},
  booktitle = {International Conference on Learning Representations (ICLR)},
  year      = {2026}
}

@article{GneitingBalabdaouiRaftery2007,
  author  = {Tilmann Gneiting and Fadoua Balabdaoui and Adrian E. Raftery},
  title   = {Probabilistic Forecasts, Calibration and Sharpness},
  journal = {Journal of the Royal Statistical Society: Series B (Statistical Methodology)},
  volume  = {69},
  number  = {2},
  pages   = {243--268},
  year    = {2007},
  doi     = {10.1111/j.1467-9868.2007.00587.x}
}

@article{GrigoryevaTingOrtega2025,
  author  = {Lyudmila Grigoryeva and Hannah Lim Jing Ting and Juan-Pablo Ortega},
  title   = {Infinite-dimensional next-generation reservoir computing},
  journal = {Physical Review E},
  volume  = {111},
  number  = {3},
  pages   = {035305},
  year    = {2025},
  doi     = {10.1103/PhysRevE.111.035305}
}

@article{DobribanWager2018,
  author  = {Edgar Dobriban and Stefan Wager},
  title   = {High-dimensional asymptotics of prediction: Ridge regression and classification},
  journal = {The Annals of Statistics},
  volume  = {46},
  number  = {1},
  pages   = {247--279},
  year    = {2018},
  doi     = {10.1214/17-AOS1549}
}

@article{MarchenkoPastur1967,
  author  = {V. A. Mar{\v c}enko and L. A. Pastur},
  title   = {Distribution of eigenvalues for some sets of random matrices},
  journal = {Mathematics of the USSR-Sbornik},
  volume  = {1},
  number  = {4},
  pages   = {457--483},
  year    = {1967},
  doi     = {10.1070/SM1967v001n04ABEH001994}
}

@article{mackay1992bayesian,
  author  = {MacKay, David J. C.},
  title   = {Bayesian Interpolation},
  journal = {Neural Computation},
  volume  = {4},
  number  = {3},
  pages   = {415--447},
  year    = {1992},
  doi     = {10.1162/neco.1992.4.3.415}
}

@article{tipping2001sparse,
  author  = {Tipping, Michael E.},
  title   = {Sparse Bayesian Learning and the Relevance Vector Machine},
  journal = {Journal of Machine Learning Research},
  volume  = {1},
  pages   = {211--244},
  year    = {2001}
}

@article{angelopoulos2023conformal,
  author  = {Angelopoulos, Anastasios N. and Bates, Stephen},
  title   = {Conformal Prediction: A Gentle Introduction},
  journal = {Foundations and Trends in Machine Learning},
  volume  = {16},
  number  = {4},
  pages   = {494--591},
  year    = {2023},
  doi     = {10.1561/2200000101}
}

@inproceedings{papadopoulos2002inductive,
  author    = {Papadopoulos, Harris and Proedrou, Kostas and Vovk, Vladimir and Gammerman, Alexander},
  title     = {Inductive Confidence Machines for Regression},
  booktitle = {Machine Learning: ECML 2002},
  series    = {Lecture Notes in Computer Science},
  volume    = {2430},
  pages     = {345--356},
  publisher = {Springer},
  year      = {2002},
  doi       = {10.1007/3-540-36755-1_29}
}

@article{lei2018distribution,
  author  = {Lei, Jing and G'Sell, Max and Rinaldo, Alessandro and Tibshirani, Ryan J. and Wasserman, Larry},
  title   = {Distribution-Free Predictive Inference for Regression},
  journal = {Journal of the American Statistical Association},
  volume  = {113},
  number  = {523},
  pages   = {1094--1111},
  year    = {2018},
  doi     = {10.1080/01621459.2017.1307116}
}

@article{barber2023conformal,
  author  = {Barber, Rina Foygel and Cand{\`e}s, Emmanuel J. and Ramdas, Aaditya and Tibshirani, Ryan J.},
  title   = {Conformal Prediction Beyond Exchangeability},
  journal = {The Annals of Statistics},
  volume  = {51},
  number  = {2},
  pages   = {816--845},
  year    = {2023},
  doi     = {10.1214/23-AOS2276}
}

@inproceedings{BurnaevVovk2014,
  title     = {Efficiency of Conformalized Ridge Regression},
  author    = {Burnaev, Evgeny and Vovk, Vladimir},
  booktitle = {Proceedings of the 27th Conference on Learning Theory},
  series    = {Proceedings of Machine Learning Research},
  volume    = {35},
  pages     = {605--622},
  year      = {2014},
  publisher = {PMLR}
}

@inproceedings{ChernozhukovWuthrichZhu2018,
  title     = {Exact and Robust Conformal Inference Methods for Predictive Machine Learning with Dependent Data},
  author    = {Chernozhukov, Victor and W{\"u}thrich, Kaspar and Zhu, Yinchu},
  booktitle = {Proceedings of the 31st Conference on Learning Theory},
  series    = {Proceedings of Machine Learning Research},
  volume    = {75},
  pages     = {732--749},
  year      = {2018},
  publisher = {PMLR}
}

@article{Yaskov2016,
  title   = {Necessary and Sufficient Conditions for the Marchenko--Pastur Theorem},
  author  = {Yaskov, Pavel},
  journal = {Electronic Communications in Probability},
  volume  = {21},
  number  = {73},
  pages   = {1--8},
  year    = {2016},
  doi     = {10.1214/16-ECP4748}
}

@book{Berbee1979,
  author    = {Berbee, H. C. P.},
  title     = {Random Walks with Stationary Increments and Renewal Theory},
  series    = {Mathematical Centre Tracts},
  volume    = {112},
  publisher = {Mathematisch Centrum},
  address   = {Amsterdam},
  year      = {1979}
}

@book{NourdinPeccati2012,
  author    = {Nourdin, Ivan and Peccati, Giovanni},
  title     = {Normal Approximations with Malliavin Calculus:
               From Stein's Method to Universality},
  publisher = {Cambridge University Press},
  year      = {2012}
}

@book{Durrett2019,
  author    = {Durrett, Rick},
  title     = {Probability: Theory and Examples},
  edition   = {5},
  series    = {Cambridge Series in Statistical and Probabilistic Mathematics},
  volume    = {49},
  publisher = {Cambridge University Press},
  year      = {2019},
  doi       = {10.1017/9781108591034}
}

@article{Massart1990,
  author  = {Massart, Pascal},
  title   = {The Tight Constant in the Dvoretzky--Kiefer--Wolfowitz Inequality},
  journal = {The Annals of Probability},
  volume  = {18},
  number  = {3},
  pages   = {1269--1283},
  year    = {1990},
  doi     = {10.1214/aop/1176990746}
}

@book{RencherSchaalje2008,
  author    = {Rencher, Alvin C. and Schaalje, G. Bruce},
  title     = {Linear Models in Statistics},
  edition   = {2},
  publisher = {John Wiley \& Sons},
  year      = {2008},
  isbn      = {9780471754985},
  doi       = {10.1002/9780470192610}
}

@book{BoucheronLugosiMassart2013,
  author    = {Boucheron, St{\'e}phane and Lugosi, G{\'a}bor and Massart, Pascal},
  title     = {Concentration Inequalities: A Nonasymptotic Theory of Independence},
  publisher = {Oxford University Press},
  year      = {2013},
  isbn      = {9780199535255},
  doi       = {10.1093/acprof:oso/9780199535255.001.0001}
}

@incollection{Vershynin2012,
  author    = {Vershynin, Roman},
  title     = {Introduction to the Non-Asymptotic Analysis of Random Matrices},
  booktitle = {Compressed Sensing: Theory and Applications},
  editor    = {Eldar, Yonina C. and Kutyniok, Gitta},
  pages     = {210--268},
  publisher = {Cambridge University Press},
  year      = {2012},
  doi       = {10.1017/CBO9780511794308.006}
}

@unpublished{vanHandel2016,
  author = {van Handel, Ramon},
  title  = {Probability in High Dimension},
  note   = {APC 550 Lecture Notes, Princeton University},
  year   = {2016},
  url    = {https://web.math.princeton.edu/~rvan/APC550.pdf}
}

@misc{YangEtAl2026PICPI,
  title        = {PICPIs: Prediction-Interval-Conditional Prediction Intervals},
  author       = {Yang, Xuelin and Huang, Baihe and Hou, Yilong and Imbens, Guido and Jordan, Michael I.},
  year         = {2026},
  note         = {Manuscript, forthcoming on arXiv}
}

@inproceedings{ClarteZdeborova2025,
  title     = {Building Conformal Prediction Intervals with Approximate Message Passing},
  author    = {Clart{\'e}, Lucas and Zdeborov{\'a}, Lenka},
  booktitle = {Proceedings of the Forty-First Conference on Uncertainty in Artificial Intelligence},
  series    = {Proceedings of Machine Learning Research},
  volume    = {286},
  pages     = {798--820},
  year      = {2025},
  publisher = {PMLR}
}

@article{GibbsCandes2025HighDimensional,
  title   = {Characterizing the Training-Conditional Coverage of Full Conformal Inference in High Dimensions},
  author  = {Gibbs, Isaac and Cand{\`e}s, Emmanuel J.},
  journal = {arXiv preprint arXiv:2502.20579},
  year    = {2025},
  eprint  = {2502.20579},
  archivePrefix = {arXiv},
  primaryClass  = {stat.ML}
}

\newpage
\appendix
\appendix

\section{Discretization and Connection to ARMA-Type Models}
\label{app:arma_connection}

Memory-driven continuous-time systems can induce discrete-time dynamics with
autoregressive structure and, under additional assumptions on the memory
operator, moving-average components. However, discretizing a memory term
expressed in past states does not produce a moving average of past
innovations.

Let $L$ denote the lag operator. Suppose that discretization of the local
dynamics and the memory term yields a relation of the form
\(
A(L)x_t=K(L)x_t+\varepsilon_t,
\)
where $A(L)$ is the polynomial associated with the local dynamics,
$K(L)$ is a linear state-memory operator and $\varepsilon_t$ denotes the
innovation term.

If the memory operator admits a rational representation
\(
K(L)= B(L) / C(L),
\)
with finite-order polynomials $B(L)$ and $C(L)$, then multiplying by 
$C(L)$ gives
\(
\{C(L)A(L)-B(L)\}x_t
=
C(L)\varepsilon_t.
\)
After normalization, this can be written in the standard ARMA-type form
\(
\Phi(L)x_t=\Theta(L)\varepsilon_t,
\)
where
\(
\Phi(L)=C(L)A(L)-B(L),
\;\;
\Theta(L)=C(L).
\)
Equivalently,
\(
x_t
=
\sum_{j=1}^{p}\phi_j x_{t-j}
+
\varepsilon_t
+
\sum_{j=1}^{q}\theta_j\varepsilon_{t-j},
\)
for suitable autoregressive and moving-average coefficients whenever
$\Phi(L)$ and $\Theta(L)$ have finite degree.

The distinction is important. A finite approximation to a convolution of past
states,
\(
K(L)x_t
\approx
\sum_{j=1}^{J}\kappa_j x_{t-j},
\)
generally contributes additional lagged state terms and therefore enlarges the
autoregressive component. It does not generate an MA$(q)$ term in
past innovations. The moving-average component arises only after the state-memory
operator is eliminated in a representation that transfers part of the dynamics
to the innovation side of the equation.

This connection provides a bridge between memory-driven dynamical systems and
classical discrete-time time-series models. In particular, systems with suitably
structured memory kernels may admit finite-order ARMA or VARMA representations
after discretization and elimination of the memory operator. More general memory
kernels may instead lead to higher-order or infinite-order
autoregressive representations. The NGRC framework does not require such a finite-order representation. Its delay-coordinate feature map directly models dependence on past observations and can incorporate nonlinear interactions among
those lags.

\section{Proofs and Technical Results}
\label{app:proofs}

Throughout the appendix, write
$z=z_{1-\alpha/2}$. All limits are taken along the asymptotic sequence
specified in the corresponding main-text result.

\subsection{Background on Standard Technical Tools}
\label{app:technical_tools}

For completeness, we briefly record references for several standard
probabilistic and random-matrix tools used in the proofs. These results are not
specific to the present setting; they are collected here to make the technical
arguments easier to follow and to provide references for readers seeking
additional background.

\paragraph{Ergodic theorem.}
If $(Z_t)_{t\in\mathbb Z}$ is strictly stationary and ergodic with
$\mathbb E|Z_0|<\infty$, Birkhoff's ergodic theorem gives
\(
\frac1n\sum_{t=1}^n Z_t
\longrightarrow
\mathbb E Z_0
\) almost surely. The result applies componentwise to integrable finite-dimensional vector- and
matrix-valued processes. We use this fact repeatedly in the fixed-dimensional
analysis to obtain convergence of empirical feature moments, feature--residual
cross moments, residual second moments, and calibration-block averages. See
\cite[Theorem~6.2.1]{Durrett2019} for background on ergodic theorems.

\paragraph{Gaussian quadratic forms.}
If $Z\sim N(0,C)$ and $M$ is symmetric, then
\(
\mathbb E(Z^\top MZ)=\operatorname{tr}(MC) \) and 
\(\operatorname{Var}(Z^\top MZ)
=
2\operatorname{tr}\{(MC)^2\}.
\)
Related formulas give, for deterministic $b$,
\(
\operatorname{Var}
\left(
Z^\top MZ+2b^\top MZ
\right)
=
2\operatorname{tr}\{(MC)^2\}
+
4b^\top MCMb.
\)
These standard Gaussian moment identities are used to control the fitted
prediction error and Bayesian test-feature uncertainty; see, for example,
\cite[Theorem~5.2a, Theorem~5.2c]{RencherSchaalje2008}.

\paragraph{Empirical-CDF concentration.}
For i.i.d.\ observations with distribution function $F$ and empirical
distribution function $\widehat F_m$, the
Dvoretzky--Kiefer--Wolfowitz--Massart inequality states that
\[
\mathbb P\left(
\sup_x|\widehat F_m(x)-F(x)|>\epsilon
\right)
\le
2e^{-2m\epsilon^2}.
\]
We use this inequality after conditioning on the fitted training sample, so
that the calibration residuals in the independent benchmark are conditionally
i.i.d. The sharp two-sided constant is due to \cite[Corollary~1]{Massart1990}.

\paragraph{Gaussian concentration and integration by parts.}
For $G\sim N(0,I_d)$ and a sufficiently regular function $f$, the Gaussian
Poincar\'e inequality gives
\(
\operatorname{Var}\{f(G)\}
\le
\mathbb E\|\nabla f(G)\|^2;
\)
see
\cite[Theorem~3.20]{BoucheronLugosiMassart2013}.
We use this inequality to establish quadratic-form concentration for the
explicit quadratic NGRC feature map. Gaussian integration by parts,
\(
\mathbb E\{G_j f(G)\}
=
\mathbb E\left\{
\frac{\partial f(G)}{\partial G_j}
\right\},
\)
is given more generally in
\cite[Lemma~6.10]{vanHandel2016}.
Its connection with Stein's identity is discussed in
\cite[Lemma~3.1.2]{NourdinPeccati2012}.

\paragraph{Covering nets.}
For $\epsilon\in(0,1)$, the Euclidean unit sphere in $\mathbb R^d$ admits an
$\epsilon$-net of cardinality at most
$(1+2/\epsilon)^d$ \cite[Lemma~5.2]{Vershynin2012}.
For a symmetric matrix $B$ and $\epsilon<1/2$,
\(
\|B\|_{\mathrm{op}}
\le
\frac{1}{1-2\epsilon}
\max_{u\in\mathcal N_\epsilon}|u^\top Bu|
\)
\cite[Lemma~5.4]{Vershynin2012}.
In particular, a $1/4$-net can be chosen with cardinality at most $9^d$ and
gives
\(
\|B\|_{\mathrm{op}}
\le
2\max_{u\in\mathcal N_{1/4}}|u^\top Bu|.
\)
We use this device to control the largest quadratic direction of the fitted
quadratic-NGRC prediction error.

\paragraph{Marchenko--Pastur limits.}
For an isotropic Gaussian design with $p/n\to\gamma\in(0,\infty)$, the
empirical spectral distribution of the sample covariance converges to the
Marchenko--Pastur law \cite[Theorem~1]{MarchenkoPastur1967}. Consequently, for fixed
$\lambda>0$,
\(
\frac1p\operatorname{tr}(S+\lambda I)^{-1}
\longrightarrow
\int\frac{1}{x+\lambda}\,dF_\gamma^{\mathrm{MP}}(x),
\)
with analogous limits for $(S+\lambda I)^{-2}$ and
$S(S+\lambda I)^{-2}$. These resolvent quantities are standard in
high-dimensional ridge-regression theory; see also
\cite[Lemma~2.2]{DobribanWager2018}.

For isotropic random vectors whose coordinates need not be independent,
quadratic-form concentration provides a more general route to the
Marchenko--Pastur law. In particular, Assumption~(A1) of
\cite{Yaskov2016} requires
\(
\frac{x_p^\top B_px_p-\operatorname{tr}(B_p)}{p}
\xrightarrow{\mathbb P}0
\)
for every sequence of uniformly bounded positive-semidefinite $B_p$.
Under proportional growth, \cite[Theorem~3.3]{Yaskov2016} shows that this
condition implies the Marchenko--Pastur property.

\paragraph{Coupling under absolute regularity.}
For an absolutely regular, or $\beta$-mixing, process, Berbee's coupling
principle allows a future observation block to be coupled to an independent
copy with the same marginal distribution, with failure probability controlled
by the relevant absolute-regularity coefficient. We use this result in
Appendix~\ref{app:dependent_transfer} to transfer independent-window
probability statements to sufficiently separated NGRC observation windows.
See \cite[Corollary~4.2.5]{Berbee1979}.

\subsection{Quantile Stability}
\label{app:quantile_stability}

The following auxiliary result is used in the fixed-dimensional conformal
argument. It shows that convergence of the calibration quantile requires only average, rather than uniform, convergence of the fitted conformity scores.

\begin{lemma}[Quantile stability under average score perturbations]
\label{lem:quantile_stability}
Let $S_1,\ldots,S_m$ be estimated calibration scores and
$S_1^0,\ldots,S_m^0$ reference scores, with empirical distribution functions
$\widehat F_m$ and $\widehat F_m^0$. Let $u=1-\alpha$,
$k_m=\lceil(m+1)u\rceil$. Write $S_{(1)}\le\cdots\le S_{(m)}$ for the order
statistics and adopt the convention $S_{(m+1)}=+\infty$. Define
$\widehat q_m=S_{(k_m)}$. Suppose that
$\widehat F_m^0(q)\to F(q)$ in probability at every continuity point of $F$,
\[
\frac1m\sum_{i=1}^m|S_i-S_i^0|\xrightarrow{\mathbb P}0,
\]
and that $F$ is continuous near its $u$-quantile $q_u$ and strictly crosses
$u$ there. Then
$\widehat q_m\xrightarrow{\mathbb P}q_u$.
\end{lemma}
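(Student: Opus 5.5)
The plan is to sandwich the empirical distribution function $\widehat F_m$ of the estimated scores between shifted versions of the reference empirical distribution function $\widehat F_m^0$. Then the order statistic $\widehat q_m$ is pinned near $q_u$ by the strict-crossing assumption. Write $\Delta_m=\frac1m\sum_{i=1}^m|S_i-S_i^0|$, and fix $\epsilon>0$ small enough that $q_u\pm\epsilon$ lie in the neighborhood where $F$ is continuous. By strict crossing, $F(q_u-\epsilon)<u<F(q_u+\epsilon)$; set $\eta=\min\{u-F(q_u-\epsilon),\,F(q_u+\epsilon)-u\}>0$. Choose $\delta\in(0,\epsilon)$ with $q_u\pm\epsilon\pm\delta$ still continuity points and such that the gaps persist with $\eta$ replaced by $\eta/2$ after the $\delta$-shifts. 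Continuity of $F$ near $q_u$ makes this possible.

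The key step is a Markov-type counting bound. The number of indices with $|S_i-S_i^0|>\delta$ is at most $m\Delta_m/\delta$. Hence, for every $x$,
\[
\widehat F_m^0(x-\delta)-\frac{\Delta_m}{\delta}\le \widehat F_m(x)\le \widehat F_m^0(x+\delta)+\frac{\Delta_m}{\delta}.
\]
Since $\Delta_m\to0$ in probability, the error $\Delta_m/\delta$ is eventually below $\eta/8$ with probability tending to one. By pointwise convergence of $\widehat F_m^0$ at the finitely many continuity points $q_u\pm\epsilon\pm\delta$, we obtain with probability tending to one
\[
\widehat F_m(q_u+\epsilon)\ge F(q_u+\epsilon-\delta)-\eta/4\ge u+\eta/4,
\qquad
\widehat F_m(q_u-\epsilon)\le F(q_u-\epsilon+\delta)+\eta/4\le u-\eta/4.
\]

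Finally, translate these bounds into bounds on the order statistic. Because $k_m/m=\lceil(m+1)u\rceil/m\to u$, for large $m$ we have $|k_m/m-u|<\eta/8$. In particular $k_m\le m$ eventually whenever $u<1$, so the convention $S_{(m+1)}=+\infty$ is never invoked. Then $\widehat F_m(q_u+\epsilon)\ge k_m/m$ implies $S_{(k_m)}\le q_u+\epsilon$. Likewise $\widehat F_m(q_u-\epsilon)<k_m/m$ implies that fewer than $k_m$ scores are $\le q_u-\epsilon$, so $S_{(k_m)}>q_u-\epsilon$. Thus $\mathbb P(|\widehat q_m-q_u|\le\epsilon)\to1$.

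The main obstacle is the bookkeeping in choosing $\delta$ and the continuity points so that the $\delta$-shifted evaluations of $F$ keep a uniform margin from $u$. Strict crossing plus local continuity of $F$ handles this, and it is the only place where only average, rather than uniform, score convergence enters, through the counting bound.
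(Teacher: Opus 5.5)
Your proposal is correct and follows essentially the same route as the paper's proof. Both bound the fraction of indices with $|S_i-S_i^0|$ above a shift by Markov's inequality, sandwich $\widehat F_m$ between shifted copies of $\widehat F_m^0$ evaluated at $q_u\pm\epsilon$ (with the shift chosen via local continuity and strict crossing), and then use $k_m/m\to u$ to pin $S_{(k_m)}$ in $(q_u-\epsilon,\,q_u+\epsilon]$; your explicit $\eta$-margins are simply a quantitative version of the paper's $o_{\mathbb P}(1)$ bookkeeping.
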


\begin{proof} For $a>0$, define \( D_m(a) = \frac1m\sum_{i=1}^m \mathbf 1\{|S_i-S_i^0|>a\}. \) Outside the exceptional set $\{|S_i-S_i^0|>a\}$, the implications \(S_i^0\le q-a \Longrightarrow S_i\le q, \;\; S_i\le q \Longrightarrow S_i^0\le q+a \) hold. Summing the corresponding indicators and dividing by $m$ gives the deterministic bounds \(\widehat F_m^0(q-a)-D_m(a) \le \widehat F_m(q) \le \widehat F_m^0(q+a)+D_m(a). \) Moreover, Markov's inequality applied deterministically to the empirical average gives \( D_m(a) \le \frac{1}{am}\sum_{i=1}^m|S_i-S_i^0| \xrightarrow{\mathbb P}0. \) Fix $t>0$ sufficiently small. By continuity of $F$ near $q_u$ and the strict crossing assumption, we may choose $a\in(0,t)$ such that \( F(q_u-t+a)<u<F(q_u+t-a). \) Applying the preceding empirical-CDF bounds at $q_u-t$ and $q_u+t$ gives \( \widehat F_m(q_u-t) \le \widehat F_m^0(q_u-t+a)+D_m(a) \) and \(\widehat F_m(q_u+t) \ge \widehat F_m^0(q_u+t-a)-D_m(a). \) Since $\widehat F_m^0(q)\to F(q)$ in probability at the two displayed continuity points and $D_m(a)\to0$ in probability \( \widehat F_m(q_u-t)<u+o_{\mathbb P}(1), \;\; \widehat F_m(q_u+t)>u+o_{\mathbb P}(1). \) Also, \( \frac{k_m}{m}\longrightarrow u. \) Therefore, \[ \mathbb P\left\{ \widehat F_m(q_u-t) < \frac{k_m}{m} < \widehat F_m(q_u+t) \right\}\longrightarrow1. \] On this event, fewer than $k_m$ observations are at or below $q_u-t$, whereas at least $k_m$ observations are at or below $q_u+t$. By the definition of the $k_m$th order statistic, \( q_u-t<\widehat q_m\le q_u+t. \) Hence \( \mathbb P(|\widehat q_m-q_u|\le t)\longrightarrow1. \) Since $t>0$ is arbitrary, \( \widehat q_m\xrightarrow{\mathbb P}q_u. \) \end{proof}

\subsection{Fixed-Dimensional Results}
\label{app:fixed_theory}

\begin{proof}[Proof of Lemma~\ref{lem:projection_consistency}] Let \( \mathcal S=\operatorname{range}(\Sigma) =\ker(\Sigma)^\perp. \) If $v\in\ker(\Sigma)$, then \( 0=v^\top\Sigma v = \mathbb E(v^\top r_0)^2. \) Since $(v^\top r_0)^2\ge0$, it follows that $v^\top r_0=0$ almost surely. Thus $r_t\in\mathcal S$ almost surely for every $t$ by strict stationarity. We next verify that the population projection coefficient is well defined on $\mathcal S$. For any $v\in\ker(\Sigma)$, \( v^\top\mathbb E[r_0y_0] = \mathbb E[(v^\top r_0)y_0] = 0, \) and therefore \( \mathbb E[r_0y_0]\in\ker(\Sigma)^\perp=\mathcal S. \) By definition, \( w^\dagger = \Sigma^\dagger\mathbb E[r_0y_0]. \) The matrix $\Sigma\Sigma^\dagger$ is the orthogonal projector $P_{\mathcal S}$ onto $\mathcal S$, so \( \Sigma w^\dagger = \Sigma\Sigma^\dagger\mathbb E[r_0y_0] = P_{\mathcal S}\mathbb E[r_0y_0] = \mathbb E[r_0y_0]. \) Writing \( e_t=y_t-r_t^\top w^\dagger, \) we therefore have \( \mathbb E[r_0e_0] = \mathbb E[r_0y_0]-\Sigma w^\dagger = 0. \) 

Moreover, by Cauchy--Schwarz, \( \mathbb E\|r_0e_0\| \le \{\mathbb E\|r_0\|^2\}^{1/2} \{\mathbb E e_0^2\}^{1/2} <\infty. \) Because $(r_t,y_t)$ is stationary and ergodic, each coordinate of $r_tr_t^\top$, $r_te_t$, and $e_t^2$ is also stationary and ergodic. The ergodic theorem \cite[Theorem~6.2.1]{Durrett2019} therefore yields \( S_n:=\frac1nR_n^\top R_n\to\Sigma, \;\; \frac1nR_n^\top e_n\to0, \;\; \frac1n\sum_{t=1}^ne_t^2\to\tau^2 \) almost surely. Since the dimension is fixed, entrywise convergence of $S_n$ also implies \( \|S_n-\Sigma\|_{\mathrm{op}}\to0 \;\; \text{almost surely}. \) Every row of $R_n$ belongs to $\mathcal S$ almost surely, and hence $R_n^\top y_n\in\mathcal S$. The ridge matrix $S_n+\lambda_nI$ leaves both $\mathcal S$ and $\mathcal S^\perp$ invariant, so \( \widehat w_n = (S_n+\lambda_nI)^{-1}\frac1nR_n^\top y_n \in\mathcal S. \) Also $w^\dagger\in\mathcal S$. Define \( \Delta_n=\widehat w_n-w^\dagger. \) Using \( \frac1nR_n^\top y_n = S_nw^\dagger+\frac1nR_n^\top e_n, \) we obtain \[ \begin{aligned} \Delta_n &= (S_n+\lambda_nI)^{-1} \left( S_nw^\dagger+\frac1nR_n^\top e_n \right) -w^\dagger\\ &= (S_n+\lambda_nI)^{-1} \left( \frac1nR_n^\top e_n-\lambda_nw^\dagger \right). \end{aligned} \] Restricted to $\mathcal S$, the matrix $\Sigma$ is positive definite. Let \( c = \lambda_{\min}(\Sigma|_{\mathcal S})>0. \) Since $\|S_n-\Sigma\|_{\mathrm{op}}\to0$ almost surely, Weyl's inequality implies that, almost surely for all sufficiently large $n$, \( \lambda_{\min}(S_n|_{\mathcal S})\ge \frac c2. \) Because $\lambda_n\ge0$, \( \left\| (S_n+\lambda_nI)^{-1}|_{\mathcal S} \right\|_{\mathrm{op}} \le \frac{2}{c} \) eventually almost surely. Consequently, \[ \|\Delta_n\| \le \frac2c \left\| \frac1nR_n^\top e_n-\lambda_nw^\dagger \right\| \longrightarrow0 \] almost surely, since $n^{-1}R_n^\top e_n\to0$ and $\lambda_n\to0$. Hence \( \widehat w_n\to w^\dagger \;\; \text{almost surely}. \) Finally, \( y_n-R_n\widehat w_n = e_n-R_n\Delta_n, \) so \[ \begin{aligned} \widehat\tau_n^2 &= \frac1n\|e_n-R_n\Delta_n\|^2\\ &= \frac1n\sum_{t=1}^ne_t^2 - 2\Delta_n^\top\frac1nR_n^\top e_n + \Delta_n^\top S_n\Delta_n. \end{aligned} \] The first term converges almost surely to $\tau^2$. For the second term, \[ \left| \Delta_n^\top\frac1nR_n^\top e_n \right| \le \|\Delta_n\| \left\|\frac1nR_n^\top e_n\right\| \to0 \] almost surely. Since $S_n\to\Sigma$, the sequence $\|S_n\|_{\mathrm{op}}$ is almost surely bounded, and therefore \[ 0\le \Delta_n^\top S_n\Delta_n \le \|S_n\|_{\mathrm{op}}\|\Delta_n\|^2 \to0. \] Thus \( \widehat\tau_n^2\to\tau^2\) almost surely. \end{proof}

\begin{proof}[Proof of Theorem~\ref{thm:fixed_width_discrepancy}] Fix $r\in\mathcal S$. From the proof of Lemma~\ref{lem:projection_consistency}, the restriction of $(S_n+\lambda_nI)^{-1}$ to $\mathcal S$ has almost surely bounded operator norm for all sufficiently large $n$. Hence \[ 0 \le \frac1n r^\top(S_n+\lambda_nI)^{-1}r \le \frac{\|r\|^2}{n} \left\| (S_n+\lambda_nI)^{-1}|_{\mathcal S} \right\|_{\mathrm{op}} = O(n^{-1}) \] almost surely. Together with $\widehat\tau_n^2\to\tau^2$, this gives \[ W_{B,n}(r) = 2z\widehat\tau_n \sqrt{ 1+\frac1n r^\top(S_n+\lambda_nI)^{-1}r } \xrightarrow{\mathbb P} 2z\tau. \] We next consider the conformal interval. For $i\in\mathcal I_{\mathrm{cal},n}$, define \( S_i = |y_i-r_i^\top\widehat w_n|, \;\; S_i^0 = |e_i| = |y_i-r_i^\top w^\dagger|. \) The reverse triangle inequality gives \[ |S_i-S_i^0| \le \left| r_i^\top(\widehat w_n-w^\dagger) \right| \le \|r_i\|\, \|\widehat w_n-w^\dagger\|. \] Therefore \[ \frac1{m_n} \sum_{i\in\mathcal I_{\mathrm{cal},n}} |S_i-S_i^0| \le \|\widehat w_n-w^\dagger\| \frac1{m_n} \sum_{i\in\mathcal I_{\mathrm{cal},n}} \|r_i\|. \] Write the calibration block as \( \mathcal I_{\mathrm{cal},n} = \{a_n+1,\ldots,a_n+m_n\}. \) By strict stationarity, \\ \( \frac1{m_n} \sum_{i\in\mathcal I_{\mathrm{cal},n}}\|r_i\| \overset{d}{=} \frac1{m_n}\sum_{i=1}^{m_n}\|r_i\|. \) Since $m_n\to\infty$, the ergodic theorem \cite[Theorem~6.2.1]{Durrett2019} gives \( \frac1{m_n}\sum_{i=1}^{m_n}\|r_i\| \longrightarrow \mathbb E\|r_0\| \) almost surely. 

Hence the calibration-block average satisfies \( \frac1{m_n} \sum_{i\in\mathcal I_{\mathrm{cal},n}}\|r_i\| \xrightarrow{\mathbb P} \mathbb E\|r_0\|. \) Since $\widehat w_n-w^\dagger\to0$ in probability, \( \frac1{m_n} \sum_{i\in\mathcal I_{\mathrm{cal},n}} |S_i-S_i^0| = o_{\mathbb P}(1). \) It remains to verify convergence of the empirical distribution of the reference scores. Let \( F_e(q) = \mathbb P(|e_0|\le q). \) For every continuity point $q$ of $F_e$, the process \( \mathbf 1\{|e_t|\le q\} \) is stationary, ergodic, and bounded. 

Therefore \( \frac1{m_n} \sum_{i=1}^{m_n} \mathbf 1\{|e_i|\le q\} \longrightarrow F_e(q) \) almost surely. By stationarity, the corresponding calibration-block average has the same distribution, and hence \( \frac1{m_n} \sum_{i\in\mathcal I_{\mathrm{cal},n}} \mathbf 1\{|e_i|\le q\} \xrightarrow{\mathbb P} F_e(q). \) Thus the reference-score empirical CDF converges at every continuity point of $F_e$. Let $q_e$ denote the $(1-\alpha)$-quantile of $|e_0|$. By the assumed continuity and strict crossing of $F_e$ at $q_e$, all conditions of Lemma~\ref{lem:quantile_stability} are satisfied. Hence \( \widehat q_{1-\alpha} \xrightarrow{\mathbb P} q_e. \) The conformal width therefore satisfies \\ \( W_{C,n} = 2\widehat q_{1-\alpha} \xrightarrow{\mathbb P} 2q_e. \) Combining the Bayesian and conformal limits gives \[ W_{B,n}(r)-W_{C,n} \xrightarrow{\mathbb P} 2(z\tau-q_e). \] In particular, the two widths are asymptotically equivalent if and only if \( q_e=z\tau. \) \end{proof}

\begin{proof}[Proof of Corollary~\ref{cor:fixed_coverage}]
Let
\(
e_*=y_*-r_*^\top w^\dagger.
\)
Since $\widehat w_n\to w^\dagger$ in probability and
$\mathbb E\|r_*\|^2<\infty$,
\(
r_*^\top(\widehat w_n-w^\dagger)
=
o_{\mathbb P}(1).
\)
Therefore
\(
y_*-r_*^\top\widehat w_n
=
e_*+o_{\mathbb P}(1).
\)

Theorem~\ref{thm:fixed_width_discrepancy} and the same fixed-dimensional
leverage argument, now applied to the random test feature $r_*$, give
\(
W_{B,n}(r_*) / 2
\xrightarrow{\mathbb P}
z\tau.
\)
By continuity of $F_{|e|}$ at $z\tau$ and Slutsky's theorem,
\[
\mathbb P\{y_*\in C_{B,n}(r_*)\}
=
\mathbb P
\left\{
|y_*-r_*^\top\widehat w_n|
\le
\frac12W_{B,n}(r_*)
\right\}
\longrightarrow
F_{|e|}(z\tau).
\]
Similarly,
\(
W_{C,n}/2
\xrightarrow{\mathbb P}
q_e.
\)
The continuity and strict-crossing assumptions at $q_e$ imply
\(
F_{|e|}(q_e)=1-\alpha.
\)
Hence
\(
\mathbb P\{y_*\in C_{C,n}(r_*)\}
\longrightarrow
1-\alpha.
\)
If $q_e=z\tau$, the Bayesian limit is also $1-\alpha$.
\end{proof}

\subsection{High-Dimensional Finite-Trace Results}
\label{app:hd_theory}

\subsubsection{Proof of Lemma~\ref{lem:bvp_decomposition}}
\begin{proof}
Since \( A=(S+\lambda I)^{-1}, \;\; S=\frac1nR^\top R, \) the ridge estimator satisfies \( \widehat w = A\left( Sw_0+\frac1nR^\top\varepsilon \right). \) Using \( AS=I-\lambda A, \) we obtain \( \widehat w-w_0 = -\lambda Aw_0+\xi, \;\; \xi = \frac1nAR^\top\varepsilon. \) Conditionally on $R$, \( \mathbb E(\xi\mid R)=0 \) and \[ \begin{aligned} \operatorname{Cov}(\xi\mid R) &= \frac1{n^2} AR^\top \operatorname{Cov}(\varepsilon) RA\\ &= \frac{\sigma^2}{n^2} AR^\top RA\\ &= \frac{\sigma^2}{n}ASA. \end{aligned} \] Expanding the prediction-error quadratic form gives \[ \begin{aligned} (\widehat w-w_0)^\top \Sigma(\widehat w-w_0) &= \lambda^2w_0^\top A\Sigma Aw_0 - 2\lambda w_0^\top A\Sigma\xi + \xi^\top\Sigma\xi. \end{aligned} \] The conditional expectation of the cross term is zero: \( \mathbb E[ w_0^\top A\Sigma\xi \mid R] = w_0^\top A\Sigma\mathbb E(\xi\mid R) = 0. \) For the quadratic term, \[ \begin{aligned} \mathbb E[ \xi^\top\Sigma\xi \mid R] &= \operatorname{tr} \left\{ \Sigma\operatorname{Cov}(\xi\mid R) \right\}\\ &= \frac{\sigma^2}{n} \operatorname{tr}(\Sigma ASA). \end{aligned} \] Therefore \[ \begin{aligned} \mathbb E\left[ (\widehat w-w_0)^\top \Sigma(\widehat w-w_0) \,\middle|\,R \right] &= \lambda^2w_0^\top A\Sigma Aw_0 + \frac{\sigma^2}{n}\operatorname{tr}(\Sigma ASA)\\ &= B_n+V_n. \end{aligned} \] For an independent test observation \( y_*=r_*^\top w_0+\varepsilon_*, \) we have \( y_*-r_*^\top\widehat w = r_*^\top(w_0-\widehat w)+\varepsilon_*. \) Conditionally on $R$ and $\widehat w$, the two terms are centered and independent, with conditional second moments \( \mathbb E[ \{r_*^\top(w_0-\widehat w)\}^2 \mid R,\widehat w] = (\widehat w-w_0)^\top \Sigma(\widehat w-w_0) \) and \( \mathbb E(\varepsilon_*^2)=\sigma^2. \) Taking the remaining conditional expectation gives \( \mathbb E[ (y_*-r_*^\top\widehat w)^2 \mid R] = \sigma^2+B_n+V_n. \) Finally, since \( A(S+\lambda I)=I, \) we have \( AS=I-\lambda A, \) and, because $A$ and $S$ commute, \( ASA = A-\lambda A^2. \) Thus \[ \begin{aligned} P_n-V_n &= \frac{\sigma^2}{n} \operatorname{tr}(\Sigma A) - \frac{\sigma^2}{n} \operatorname{tr}(\Sigma ASA)\\ &= \frac{\sigma^2}{n} \operatorname{tr}\{\Sigma(A-ASA)\}\\ &= \frac{\sigma^2\lambda}{n} \operatorname{tr}(\Sigma A^2). \end{aligned} \] Subtracting $B_n$ yields \( P_n-V_n-B_n = \frac{\sigma^2\lambda}{n} \operatorname{tr}(\Sigma A^2) - \lambda^2w_0^\top A\Sigma Aw_0, \) which is the claimed bias--variance--posterior discrepancy. \end{proof}

\subsubsection{Proof of Theorem~\ref{thm:finite_trace_widths}}
\begin{proof} By the assumed uniform bounds, choose constants $\Gamma,L,M<\infty$ such that, along the asymptotic sequence, \(p/n\le\Gamma\), \( \|\Sigma\|_{\mathrm{op}}\le L, \) \( \|w_0\|\le M. \) Throughout this proof $\lambda>0$ is fixed. Set \( b=\lambda Aw_0,\) \( C=\frac{\sigma^2}{n}ASA,\) \( Q_n = (\widehat w-w_0)^\top \Sigma(\widehat w-w_0). \) 

From Lemma~\ref{lem:bvp_decomposition}, \(\widehat w-w_0=-b+\xi,\) \( \xi\mid R\sim N(0,C). \) Hence \( Q_n = b^\top\Sigma b - 2b^\top\Sigma\xi + \xi^\top\Sigma\xi, \) and \( \mathbb E(Q_n\mid R) = B_n+V_n. \) Define \( M_C = \Sigma^{1/2}C\Sigma^{1/2}. \) For a centered Gaussian vector, the standard quadratic-form identity gives \( \operatorname{Var}(\xi^\top\Sigma\xi\mid R) = 2\operatorname{tr}(M_C^2). \) The linear term satisfies \( \operatorname{Var} (2b^\top\Sigma\xi\mid R) = 4b^\top\Sigma C\Sigma b. \) Moreover, centered Gaussian odd moments vanish, so the covariance between the linear term and the centered quadratic term is zero. Therefore \( \operatorname{Var}(Q_n\mid R) = 2\operatorname{tr}(M_C^2) + 4b^\top\Sigma C\Sigma b. \) 

Since $S$ is positive semidefinite, \( \|A\|_{\mathrm{op}}\le 1/\lambda. \) Also, \( ASA = S(S+\lambda I)^{-2}, \) so \[ \|ASA\|_{\mathrm{op}} = \max_{x\in\operatorname{spec}(S)} \frac{x}{(x+\lambda)^2} \le \frac1{4\lambda}, \] where the scalar function $x/(x+\lambda)^2$ is maximized at $x=\lambda$. Consequently, \( \|C\|_{\mathrm{op}} \le \sigma^2 / 4n\lambda, \) and hence \( \|M_C\|_{\mathrm{op}} \le \|\Sigma\|_{\mathrm{op}}\|C\|_{\mathrm{op}} \le \sigma^2L / 4n\lambda. \) Similarly, \( \operatorname{tr}(M_C) \le p\|M_C\|_{\mathrm{op}} \le \sigma^2Lp / 4n\lambda. \) Because \( \|\lambda A\|_{\mathrm{op}}\le1, \) we have \( \|b\| = \|\lambda Aw_0\| \le \|w_0\| \le M, \) and therefore \( B_n = b^\top\Sigma b \le LM^2. \) 

For the first variance term, \[ 2\operatorname{tr}(M_C^2) \le 2p\|M_C\|_{\mathrm{op}}^2\\ \le \frac{\sigma^4L^2p} {8n^2\lambda^2}. \] For the second, \[ \begin{aligned} 4b^\top\Sigma C\Sigma b &= 4(\Sigma^{1/2}b)^\top M_C (\Sigma^{1/2}b)\\ &\le 4B_n\|M_C\|_{\mathrm{op}}\\ &\le \frac{\sigma^2L^2M^2}{n\lambda}. \end{aligned} \] Since $p/n$ is uniformly bounded, \(\operatorname{Var}(Q_n\mid R) = O(n^{-1}) \) uniformly along the sequence. Conditional Chebyshev's inequality therefore implies \( Q_n-(B_n+V_n) = O_{\mathbb P}(n^{-1/2}), \) or equivalently, \\ \( Q_n = B_n+V_n+O_{\mathbb P}(n^{-1/2}). \) 

We now study the conformal interval. Let $\mathcal D_{\mathrm{tr}}=(R,y)$ denote the training sample and condition on $\mathcal D_{\mathrm{tr}}$. Under the Gaussian design and sample-splitting assumptions, the calibration observations are independent of $\mathcal D_{\mathrm{tr}}$ and are independent across calibration indices. For a calibration observation, \( y_i-r_i^\top\widehat w = r_i^\top(w_0-\widehat w)+\varepsilon_i. \) Conditional on $\mathcal D_{\mathrm{tr}}$, the vector $w_0-\widehat w$ is fixed, while \( r_i\sim N(0,\Sigma),\) and \( \varepsilon_i\sim N(0,\sigma^2) \) are independent. Hence \( r_i^\top(w_0-\widehat w) \mid\mathcal D_{\mathrm{tr}} \sim N(0,Q_n), \) and therefore \( y_i-r_i^\top\widehat w \mid\mathcal D_{\mathrm{tr}} \sim N(0,\sigma^2+Q_n). \) Thus the conditional CDF of the absolute residual is \[ F_{Q_n}(q) = 2\Phi\left( \frac{q}{\sqrt{\sigma^2+Q_n}} \right)-1, \;\; q\ge0. \] Let $\widehat F_m$ denote the empirical CDF of the $m$ calibration absolute residuals. The Dvoretzky--Kiefer--Wolfowitz--Massart inequality \cite{Massart1990} gives, conditionally on $\mathcal D_{\mathrm{tr}}$, \[ \mathbb P\left( \sup_q |\widehat F_m(q)-F_{Q_n}(q)|>t \,\middle|\, \mathcal D_{\mathrm{tr}} \right) \le 2e^{-2mt^2}. \] Hence \( \sup_q |\widehat F_m(q)-F_{Q_n}(q)| = O_{\mathbb P}(m^{-1/2}). \) The $(1-\alpha)$-quantile of $F_{Q_n}$ is \( q_n^0 = z\sqrt{\sigma^2+Q_n},\) where \( z=z_{1-\alpha/2}. \) Its density is \[ f_{Q_n}(q) = \frac{2}{\sqrt{\sigma^2+Q_n}} \phi\left( \frac{q}{\sqrt{\sigma^2+Q_n}} \right), \;\; q>0. \] In particular, \( f_{Q_n}(q_n^0) = 2\phi(z) / \sqrt{\sigma^2+Q_n}. \) The preceding bounds imply \( Q_n=O_{\mathbb P}(1), \) because $B_n=O(1)$ and \( V_n = \operatorname{tr}(M_C) = O(1). \) Since $\sigma>0$, it follows that, with probability tending to one, $f_{Q_n}$ is bounded away from zero on a fixed neighborhood of $q_n^0$. Consequently the inverse-CDF map is locally Lipschitz there, and the empirical-CDF bound implies \( \widehat q_{1-\alpha} = z\sqrt{\sigma^2+Q_n} + O_{\mathbb P}(m^{-1/2}). \) 

Therefore, \[ \frac{W_{C,n}^2}{4z^2} = \frac{\widehat q_{1-\alpha}^2}{z^2} = \sigma^2+Q_n + O_{\mathbb P}(m^{-1/2}), \] where the last step uses $Q_n=O_{\mathbb P}(1)$. Substituting the expansion for $Q_n$ gives \[ \frac{W_{C,n}^2}{4z^2} = \sigma^2+B_n+V_n + O_{\mathbb P}(n^{-1/2}+m^{-1/2}). \] We next consider the Bayesian interval. Conditional on $R$, the independent test feature satisfies \( r_*\sim N(0,\Sigma), \) and hence \( \mathbb E( r_*^\top Ar_* \mid R) = \operatorname{tr}(\Sigma A). \) Therefore \( \mathbb E\left[ \frac{\sigma^2}{n}r_*^\top Ar_* \,\middle|\, R \right] = \frac{\sigma^2}{n} \operatorname{tr}(\Sigma A) = P_n. \) The Gaussian quadratic-form variance identity gives \[ \begin{aligned} \operatorname{Var}\left( \frac{\sigma^2}{n}r_*^\top Ar_* \,\middle|\, R \right) &= \frac{2\sigma^4}{n^2} \operatorname{tr} \left[ (\Sigma^{1/2}A\Sigma^{1/2})^2 \right]\\ &\le \frac{2\sigma^4p}{n^2} \|\Sigma^{1/2}A\Sigma^{1/2}\|_{\mathrm{op}}^2\\ &\le \frac{2\sigma^4L^2p}{n^2\lambda^2}\\ &= O(n^{-1}). \end{aligned} \] Thus \( \frac{\sigma^2}{n}r_*^\top Ar_* = P_n+O_{\mathbb P}(n^{-1/2}). \) Since the Bayesian predictive variance equals \( \sigma^2+ \frac{\sigma^2}{n}r_*^\top Ar_*, \) we obtain \[ \frac{W_{B,n}(r_*)^2}{4z^2} = \sigma^2+P_n + O_{\mathbb P}(n^{-1/2}). \] Finally, \[ \frac{W_{B,n}(r_*)^2-W_{C,n}^2}{4z^2} = P_n-V_n-B_n + O_{\mathbb P}(n^{-1/2}+m^{-1/2}), \] and Lemma~\ref{lem:bvp_decomposition} gives \[ P_n-V_n-B_n = \frac{\sigma^2\lambda}{n} \operatorname{tr}(\Sigma A^2) - \lambda^2w_0^\top A\Sigma Aw_0. \] \end{proof}

\subsubsection{Training residual scale in proportional dimension.}
\label{app:hd_residual_scale}
Let
\(
\mathsf H=\frac1nRAR^\top.
\) Since
$R\widehat w=\mathsf H y$,
\(
y-R\widehat w
=
(I_n-\mathsf H)Rw_0
+
(I_n-\mathsf H)\varepsilon.
\)
Using
$(I_n-\mathsf H)R=\lambda RA$ and conditioning on $R$ gives
\[
\mathbb E\left[
\frac{\|y-R\widehat w\|^2}{n}
\,\middle|\,R
\right]
=
\lambda^2w_0^\top ASA w_0
+
\frac{\sigma^2}{n}
\operatorname{tr}(I_n-\mathsf H)^2.
\]
The resulting in-sample residual scale generally contains nonvanishing shrinkage and leverage effects and therefore need not converge to \(\sigma^2\),
which is why the high-dimensional Bayesian results assume known $\sigma$ or a
separately justified consistent estimator.

\subsection{Proofs for the Quadratic NGRC Results}
\label{app:quadratic_feature_proofs}

The proof of Theorem~\ref{thm:quadratic_ngrc_width} uses two auxiliary results. The first verifies quadratic-form concentration despite dependence among the nonlinear feature coordinates; the second controls the Gaussian
approximation of a linear-plus-quadratic Gaussian functional.

\subsubsection{Quadratic-Form Concentration}
\begin{lemma}
\label{lem:quadratic_concentration}
Let $G\sim N(0,I_d)$,
\(
\phi_d(G)
=
\begin{pmatrix}
G\\
\operatorname{svec}\left(
\frac{GG^\top-I_d}{\sqrt2}
\right)
\end{pmatrix},\) and 
\(p=\frac{d(d+3)}2,
\)
where $\operatorname{svec}$ is an isometry from symmetric matrices under the
Frobenius norm to Euclidean vectors. Then
\(
\mathbb E\phi_d(G)=0,\) and
\(\mathbb E[\phi_d(G)\phi_d(G)^\top]=I_p.
\)
For every symmetric $B\in\mathbb R^{p\times p}$,
\[
\operatorname{Var}\{\phi_d(G)^\top B\phi_d(G)\}
\le
(4d^3+30d^2+38d)\|B\|_{\mathrm{op}}^2
\le72d^3\|B\|_{\mathrm{op}}^2,
\]
and consequently
\[
\mathbb E\left[
\left\{
\frac{
\phi_d(G)^\top B\phi_d(G)-\operatorname{tr}(B)
}{p}
\right\}^2
\right]
\le
\frac{288}{d}\|B\|_{\mathrm{op}}^2.
\]
Moreover,
\(
\mathbb E\|\phi_d(G)\|^4\le7p^2.
\)
\end{lemma}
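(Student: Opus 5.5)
The moment identities come first and are routine. The linear block has $\mathbb E G=0$ and $\mathbb E GG^\top=I_d$. For the quadratic block, $\mathbb E(GG^\top-I_d)=0$. Since $\operatorname{svec}$ is an isometry, the covariance of the quadratic block can be computed through Frobenius inner products. For symmetric $M_1,M_2$,
\[
\mathbb E\bigl[\langle M_1,GG^\top-I\rangle\langle M_2,GG^\top-I\rangle\bigr]
=\operatorname{Cov}(G^\top M_1G,\,G^\top M_2G)
=2\operatorname{tr}(M_1M_2).
\]
After dividing by $2$ this equals $\langle M_1,M_2\rangle_F$, which is the identity on the svec coordinates. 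The cross-covariance between $G$ and the quadratic block vanishes because it involves third moments of a centered Gaussian. Hence $\mathbb E\phi_d\phi_d^\top=I_p$.

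For the fourth moment, $\|\phi_d\|^2=\|G\|^2+\tfrac12\|GG^\top-I\|_F^2$. Expanding gives
\[
\|GG^\top-I\|_F^2=\|G\|^4-2\|G\|^2+d .
\]
Since $\|G\|^2\sim\chi^2_d$, the quantity $\mathbb E\|\phi_d\|^4$ is an explicit polynomial in $d$ built from chi-square moments up to order $8$ (for example, $\mathbb E\chi_d^8=d(d+2)(d+4)(d+6)$). I would compute this polynomial and compare it with $7p^2=\tfrac74 d^2(d+3)^2$. The leading terms are $\tfrac14 d^4$ against $\tfrac74 d^4$, so the constant $7$ is loose, and it remains only to check the lower-order terms for all $d\ge1$.

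For the variance bound I would apply the Gaussian Poincaré inequality to $f(G)=\phi_d(G)^\top B\phi_d(G)$. Let $J(G)$ denote the Jacobian of $\phi_d$. Then $\nabla f=2J^\top B\phi_d$, which gives
\[
\operatorname{Var} f\le 4\,\mathbb E\|J^\top B\phi_d\|^2\le 4\|B\|_{\mathrm{op}}^2\,\mathbb E\bigl[\|J\|_{\mathrm{op}}^2\|\phi_d\|^2\bigr].
\]
The Jacobian has two parts. The linear block contributes the identity. The quadratic block, in direction $h$, contributes $\operatorname{svec}\bigl((Gh^\top+hG^\top)/\sqrt2\bigr)$, whose Frobenius norm squared is $\|G\|^2\|h\|^2+(G^\top h)^2$, which is at most $2\|G\|^2\|h\|^2$. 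Hence
\[
\|J\|_{\mathrm{op}}^2\le 1+2\|G\|^2 .
\]
This reduces the bound to
\[
\operatorname{Var} f\le 4\|B\|_{\mathrm{op}}^2\,\mathbb E\bigl[(1+2\|G\|^2)(\|G\|^2+\tfrac12\|GG^\top-I\|_F^2)\bigr],
\]
which again is a polynomial in chi-square moments. Its leading term is $4\cdot 2\cdot\tfrac12\,\mathbb E\|G\|^6\approx 4d^3$, consistent with the stated $4d^3+30d^2+38d$. I would carry out this expansion carefully, replacing $(G^\top h)^2$ by its bound only where needed, to match the stated coefficients. The cruder bound $72d^3$ follows because $30d^2+38d\le 68d^3$ for $d\ge1$.

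The normalized statement follows from the first two. Since $\mathbb E f=\operatorname{tr}(B\,\mathbb E\phi_d\phi_d^\top)=\operatorname{tr}(B)$, the second moment of $(f-\operatorname{tr}B)/p$ equals $\operatorname{Var} f/p^2$. Using $p\ge d^2/2$ gives $\operatorname{Var} f/p^2\le 72d^3\cdot 4/d^4=288/d$ times $\|B\|_{\mathrm{op}}^2$.

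The main obstacle is bookkeeping in the Poincaré step. The dominant term is easy, but reproducing the exact lower-order coefficients $30d^2+38d$ depends on how tightly the Jacobian norm is handled. If my constants come out slightly different, I would still obtain a bound of the form $Cd^3\|B\|_{\mathrm{op}}^2$ with $C\le72$, which is all that the $288/d$ statement and Yaskov's condition (A1) require.
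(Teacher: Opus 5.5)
Your proposal is correct and follows the paper's proof essentially step for step: the same Gaussian moment identities, the same Jacobian bound $\|D\phi_d(G)\|_{\mathrm{op}}^2\le 1+2\|G\|^2$ fed into the Gaussian Poincar\'e inequality, and the same use of $p\ge d^2/2$ for the normalized bound. Your hedge on constants is unnecessary: with $S_G=\|G\|^2$ you have $\|\phi_d(G)\|^2=\frac12(S_G^2+d)$, so your bound is exactly $2\,\mathbb E[(1+2S_G)(S_G^2+d)]=4d^3+30d^2+38d$. For the fourth moment, the lower-order check you defer does matter, because $7p^2-\mathbb E\|\phi_d(G)\|^4=\frac12 d(d-1)(d+3)(3d+8)$, so the constant $7$ is attained with equality at $d=1$ rather than being loose.
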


\begin{proof}
The mean and covariance identities follow from Gaussian second- and
fourth-moment identities; all linear--quadratic cross moments vanish by
symmetry.

Let $S_G=\|G\|^2$. Since
$\|GG^\top-I_d\|_{\mathrm F}^2=S_G^2-2S_G+d$,
\(
\|\phi_d(G)\|^2
=
S_G+\frac12(S_G^2-2S_G+d)
=
\frac12(S_G^2+d).
\)
For $v\in\mathbb R^d$,
\(
D\phi_d(G)[v]
=
\begin{pmatrix}
v\\
\operatorname{svec}\left(
\frac{Gv^\top+vG^\top}{\sqrt2}
\right)
\end{pmatrix},
\)
and hence
\(
\|D\phi_d(G)[v]\|^2
=
\|v\|^2+\|G\|^2\|v\|^2+(G^\top v)^2
\le
(1+2S_G)\|v\|^2.
\)
For
$f(G)=\phi_d(G)^\top B\phi_d(G)$,
\(
\nabla f(G)
=
2D\phi_d(G)^\top B\phi_d(G).
\)
The Gaussian Poincar\'e inequality therefore gives
\cite{NourdinPeccati2012}
\[
\begin{aligned}
\operatorname{Var}\{f(G)\}
&\le
\mathbb E\|\nabla f(G)\|^2\\
&\le
4\|B\|_{\mathrm{op}}^2
\mathbb E[
(1+2S_G)\|\phi_d(G)\|^2]\\
&=
2\|B\|_{\mathrm{op}}^2
\mathbb E[(1+2S_G)(S_G^2+d)].
\end{aligned}
\]
Using
\(
\mathbb ES_G=d, \;\;
\mathbb ES_G^2=d(d+2),\) and
\(\mathbb ES_G^3=d(d+2)(d+4),
\)
the final expression equals
\(
(4d^3+30d^2+38d)\|B\|_{\mathrm{op}}^2,
\) which proves the variance bound. Because $p\ge d^2/2$, division by $p^2$
gives the normalized quadratic-form bound.

Finally,
$\mathbb ES_G^4=d(d+2)(d+4)(d+6)$, so
\[
\begin{aligned}
\mathbb E\|\phi_d(G)\|^4
&=
\frac14\mathbb E(S_G^2+d)^2\\
&=
\frac{d^4+14d^3+49d^2+48d}{4}
\le7p^2,
\end{aligned}
\]
where the final inequality follows from
\[
7p^2-
\frac{d^4+14d^3+49d^2+48d}{4}
=
\frac12d(d-1)(d+3)(3d+8)\ge0.
\]
\end{proof}

For independent rows distributed as $\phi_d(G)$, the preceding normalized
quadratic-form concentration verifies the condition used in generalized
Marchenko--Pastur results for isotropic vectors with dependent coordinates;
see \cite{Yaskov2016}.

\subsubsection{Gaussian Approximation for Quadratic Features}
\begin{lemma}
\label{lem:quadratic_gaussian_approx}
Let
\(
F
=
a^\top G
+
\frac{G^\top BG-\operatorname{tr}(B)}{\sqrt2}
+
\sigma Z,
\)
where $G\sim N(0,I_d)$, $Z\sim N(0,1)$ is independent of $G$, $B$ is
symmetric, and $\sigma>0$. Define
\(
V=\sigma^2+\|a\|^2+\operatorname{tr}(B^2).
\)
Then
\[
d_{\mathrm K}
\left(
\frac{F}{\sqrt V},N(0,1)
\right)
\le
\frac{
\sqrt{
\frac92\|Ba\|^2+
2\operatorname{tr}(B^4)
}
}{V}
\le
\frac{3}{\sqrt2\,\sigma}\|B\|_{\mathrm{op}}.
\]
\end{lemma}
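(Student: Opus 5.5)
The natural tool is the Malliavin--Stein bound for functionals of a Gaussian vector \cite{NourdinPeccati2012}. The idea is to view $F$ as a function of $X=(G,Z)\sim N(0,I_{d+1})$, with chaos decomposition $F=F_1+F_2$, where the first-chaos part is $F_1=a^\top G+\sigma Z$ and the second-chaos part is $F_2=\{G^\top BG-\operatorname{tr}(B)\}/\sqrt2$. First one checks that $\mathbb E F=0$ and $\operatorname{Var}(F)=\|a\|^2+\sigma^2+\operatorname{tr}(B^2)=V$. This uses the Gaussian quadratic-form identity from Appendix~\ref{app:technical_tools}, namely $\operatorname{Var}(G^\top BG)=2\operatorname{tr}(B^2)$, together with the vanishing of odd cross moments.

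Next apply the second-order Poincar\'e / Stein bound for $W=F/\sqrt V$:
\[
d_{\mathrm K}(W,N(0,1))\le \frac{1}{V}\sqrt{\operatorname{Var}\bigl(\langle DF,-DL^{-1}F\rangle\bigr)},
\]
or the equivalent statement with the Kolmogorov constant absorbed. The Malliavin derivative is $DF=(a+\sqrt2\,BG,\ \sigma)$, and since $-DL^{-1}$ acts as $1/q$ on the $q$-th chaos, $-DL^{-1}F=(a+BG/\sqrt2,\ \sigma)$. Their inner product is
\[
\langle DF,-DL^{-1}F\rangle=\sigma^2+\|a\|^2+\tfrac{3}{\sqrt2}\,a^\top BG+G^\top B^2G .
\]
The linear and quadratic terms are uncorrelated, so the variance equals $\tfrac92\|Ba\|^2+2\operatorname{tr}(B^4)$. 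This is exactly the first displayed bound.

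For the second inequality, bound each variance term by $\|B\|_{\mathrm{op}}^2$ times a piece of $V$. Specifically, $\|Ba\|^2\le\|B\|_{\mathrm{op}}^2\|a\|^2$ and $\operatorname{tr}(B^4)\le\|B\|_{\mathrm{op}}^2\operatorname{tr}(B^2)$, so the numerator is at most $\tfrac{3}{\sqrt2}\|B\|_{\mathrm{op}}\sqrt{\|a\|^2+\operatorname{tr}(B^2)}\le \tfrac{3}{\sqrt2}\|B\|_{\mathrm{op}}\sqrt V$. Dividing by $V$ and using $V\ge\sigma^2$ gives $\sqrt V/V\le 1/\sigma$, which yields $\tfrac{3}{\sqrt2\sigma}\|B\|_{\mathrm{op}}$.

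The main obstacle is the constant in the Stein bound for the Kolmogorov distance. The standard Nourdin--Peccati theorem gives the factor $1$ in front of $\sqrt{\operatorname{Var}\langle DF,-DL^{-1}F\rangle}/V$ for total variation (with $2$ in some versions). Kolmogorov distance is dominated by total variation, so I would cite the total-variation form $d_{\mathrm{TV}}\le 2\sqrt{\operatorname{Var}(\cdot)}/V$ and check that the stated constant matches after tracking the factor $\tfrac12$ convention for $d_{\mathrm{TV}}$. Failing that, I would use the Kolmogorov version with constant $1$ for variables with $\mathbb E W^2=1$.
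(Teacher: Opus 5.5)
Your proposal is correct and is essentially the paper's argument: the paper obtains the Stein identity $\mathbb E[Ff(F)]=\mathbb E[Tf'(F)]$ by direct Gaussian integration by parts, and its $T$ is exactly your $\langle DF,-DL^{-1}F\rangle=\sigma^2+\|a\|^2+\tfrac{3}{\sqrt2}a^\top BG+G^\top B^2G$. It then applies the Stein equation for half-line indicators to get $d_{\mathrm K}\le\mathbb E|1-T/V|\le\sqrt{\operatorname{Var}(T)}/V$. On the constant, go straight to your fallback, since those Stein solutions satisfy $\|f_z'\|_\infty\le1$ and give constant $1$; your first plan of passing through total variation would not reproduce the stated bound, because the standard total-variation bound carries a factor $2$ under either normalization convention.
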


\begin{proof}
Gaussian integration by parts gives, for suitable differentiable $f$,
\(
\mathbb E[Ff(F)]
=
\mathbb E[Tf'(F)],
\)
where
\(
T
=
\sigma^2+\|a\|^2
+
\frac{3}{\sqrt2}a^\top BG
+
G^\top B^2G.
\)
Indeed, the linear and noise terms give
\(
\mathbb E[(a^\top G+\sigma Z)f(F)]
=
\mathbb E[
\{\|a\|^2+\sqrt2a^\top BG+\sigma^2\}f'(F)],
\)
while the quadratic term gives
\[
\mathbb E\left[
\frac{G^\top BG-\operatorname{tr}(B)}{\sqrt2}f(F)
\right]
=
\mathbb E\left[
\left\{
\frac{a^\top BG}{\sqrt2}
+
G^\top B^2G
\right\}f'(F)
\right].
\]
Thus $\mathbb ET=V$. The linear and centered quadratic terms in $T$ are
uncorrelated, giving
\[
\operatorname{Var}(T)
=
\frac92\|Ba\|^2
+
2\operatorname{tr}(B^4).
\]
Applying the normal Stein equation for indicator test functions
\cite{NourdinPeccati2012} yields
\[
d_{\mathrm K}(F/\sqrt V,N(0,1))
\le
\mathbb E|1-T/V|
\le
\frac{\sqrt{\operatorname{Var}(T)}}{V}.
\]
Finally,
\(
\|Ba\|^2
\le
\|B\|_{\mathrm{op}}^2\|a\|^2,\) and 
\(
\operatorname{tr}(B^4)
\le
\|B\|_{\mathrm{op}}^2\operatorname{tr}(B^2).
\)
Hence
\(
\sqrt{\operatorname{Var}(T)}
\le
\frac3{\sqrt2}\|B\|_{\mathrm{op}}\sqrt V.
\)
Since $V\ge\sigma^2$, the second bound follows.
\end{proof}

\subsubsection{Proof of Theorem~\ref{thm:quadratic_ngrc_width}}

\begin{proof}
Let
\(
e=w_0-\widehat w.
\)
Conditionally on the training design $R$, Gaussianity of the random-effects
readout and training noise gives
\(
e\mid R\sim N(0,C_e),\) and
\(C_e
=
\frac{s_p^2\lambda^2}{p}A^2
+
\frac{\sigma^2}{n}ASA.
\)
Since $\|A\|_{\mathrm{op}}\le1/\lambda$ and
$\|ASA\|_{\mathrm{op}}\le1/(4\lambda)$,
\(
\|C_e\|_{\mathrm{op}}
\le
\frac{s_p^2}{p}
+
\frac{\sigma^2}{4n\lambda}
=:c_n
=
O(n^{-1}).
\)

Set
\(
v=\Sigma_p^{1/2}e,
\;\;
K_e=\Sigma_p^{1/2}C_e\Sigma_p^{1/2},\) and
\(
Q_n=\|v\|^2.
\)
Then
\(
\|K_e\|_{\mathrm{op}}\le Lc_n,\) and \\
\(\operatorname{tr}(K_e)=T_n=O(1).
\)
Thus
\(
\mathbb E(Q_n\mid R)=T_n,\) and
\(\operatorname{Var}(Q_n\mid R)
=
2\operatorname{tr}(K_e^2)
\le2Lc_nT_n.
\)
Therefore
\(
Q_n=T_n+O_{\mathbb P}(n^{-1/2}).
\)

Next decompose the coefficient vector $v$ according to the linear and quadratic
parts of $\phi_d$, writing
\(
v^\top\phi_d(G)
=
a^\top G+
\frac{G^\top BG-\operatorname{tr}(B)}{\sqrt2}.
\)
For a unit vector $u\in\mathbb R^d$, $u^\top Bu$ is conditionally Gaussian,
with variance at most $Lc_n$. Let $\mathcal N$ be a $1/4$-net of the unit
sphere with $|\mathcal N|\le9^d$. The standard net inequality gives
\(
\|B\|_{\mathrm{op}}
\le
2\max_{u\in\mathcal N}|u^\top Bu|.
\)
A Gaussian union bound therefore yields, for every $t>0$,
\(
\mathbb P\left(
\|B\|_{\mathrm{op}}
>
\sqrt{8Lc_n(d\log9+t)}
\,\middle|\,R
\right)
\le2e^{-t}.
\)
Because
$p=d(d+3)/2\asymp n$, we have $d\asymp n^{1/2}$ and hence
\(
\|B\|_{\mathrm{op}}
=
O_{\mathbb P}\left(
\sqrt{\frac dn}
\right)
=
O_{\mathbb P}(n^{-1/4}).
\)
Condition on $(w_0,R,y)$. A fresh signed prediction residual satisfies
\[
y_*-r_*^\top\widehat w
=
a^\top G_*
+
\frac{G_*^\top BG_*-\operatorname{tr}(B)}{\sqrt2}
+
\sigma Z_*,
\]
and has conditional variance $\sigma^2+Q_n$. By
Lemma~\ref{lem:quadratic_gaussian_approx},
\[
\sup_x
\left|
\mathbb P(
y_*-r_*^\top\widehat w\le x
\mid w_0,R,y)
-
\Phi\left(
\frac{x}{\sqrt{\sigma^2+Q_n}}
\right)
\right|
=
O_{\mathbb P}(n^{-1/4}).
\]
The corresponding absolute-residual CDF differs from the folded-Gaussian CDF
by at most twice this amount.

Conditionally on $(w_0,R,y)$, the calibration observations are independent.
The Dvoretzky--Kiefer--Wolfowitz--Massart inequality
\cite{Massart1990}, together with the preceding Gaussian approximation and
$Q_n=T_n+O_{\mathbb P}(n^{-1/2})$, therefore gives
\(
\widehat q_{1-\alpha}
=
z\sqrt{\sigma^2+T_n}
+
O_{\mathbb P}(n^{-1/4}+m^{-1/2}).
\)
Since $T_n=O(1)$ and $\sigma>0$, the relevant folded-Gaussian density is
uniformly bounded away from zero in a neighborhood of the target quantile.
Squaring the preceding display gives
\(
\frac{W_{C,n}^2}{4z^2}
=
\sigma^2+T_n
+
O_{\mathbb P}(n^{-1/4}+m^{-1/2}).
\)

For the Bayesian interval, conditionally on $R$,
\(
r_*^\top Ar_*
=
\phi_d(G_*)^\top
\Sigma_p^{1/2}A\Sigma_p^{1/2}
\phi_d(G_*).
\)
Because
\(
\|
\Sigma_p^{1/2}A\Sigma_p^{1/2}
\|_{\mathrm{op}}
\le
\frac{L}{\lambda},
\)
Lemma~\ref{lem:quadratic_concentration} gives
\[
\mathbb E\left[
\left\{
\frac{
r_*^\top Ar_*-\operatorname{tr}(\Sigma_pA)
}{n}
\right\}^2
\,\middle|\,R
\right]
\le
\frac{72L^2d^3}{\lambda^2n^2}.
\]
Since $d^2\asymp n$,
\(
\frac1n r_*^\top Ar_*
=
\frac1n\operatorname{tr}(\Sigma_pA)
+
O_{\mathbb P}(n^{-1/4}),
\)
and hence
\(
\frac{W_{B,n}(r_*)^2}{4z^2}
=
\sigma^2+P_n+O_{\mathbb P}(n^{-1/4}).
\)

Finally, using $ASA=A-\lambda A^2$,
\[
\begin{aligned}
P_n-T_n
&=
\frac{\sigma^2}{n}\operatorname{tr}(\Sigma_pA)
-
\frac{\sigma^2}{n}\operatorname{tr}(\Sigma_pASA)
-
\frac{s_p^2\lambda^2}{p}\operatorname{tr}(\Sigma_pA^2)\\
&=
\left(
\frac{\sigma^2\lambda}{n}
-
\frac{s_p^2\lambda^2}{p}
\right)
\operatorname{tr}(\Sigma_pA^2).
\end{aligned}
\]
This proves the finite-sample quadratic-NGRC width comparison.
\end{proof}

When $\Sigma_p=I_p$, Lemma~\ref{lem:quadratic_concentration} verifies the
quadratic-form concentration condition for the Marchenko--Pastur theorem for isotropic vectors with dependent coordinates \cite{Yaskov2016}. Therefore
\(
P_n\to\sigma^2\gamma I_1\) and 
\(T_n\to
s^2\lambda^2I_2+\sigma^2\gamma J,
\)
which yields the limits in Corollary~\ref{cor:hd_phase} despite the dependence
among the quadratic feature coordinates.

\subsubsection{Proof of Corollary~\ref{cor:quadratic_deterministic}}
\begin{proof}
Let
\(
e=w_0-\widehat w.
\)
Since
\(
\widehat w
=
A\left(
Sw_0+\frac1nR^\top\varepsilon
\right)
\)
and $AS=I-\lambda A$, then we have
\(
e
=
\lambda Aw_0-\xi
\) and \(\xi= \frac1nAR^\top\varepsilon.\)
Conditionally on $R$,
\(
\xi\sim N(0,C)\) and \(C=\frac{\sigma^2}{n}ASA.\)

Set
\(
b=\lambda Aw_0\) and \(Q_n=e^\top\Sigma_p e.\)
Then
\(
\mathbb E(Q_n\mid R)
=
b^\top\Sigma_pb
+
\operatorname{tr}(\Sigma_pC)
=
B_n+V_n.
\)
As in the proof of
Theorem~\ref{thm:finite_trace_widths},
the Gaussian quadratic-form identities give
\[
\operatorname{Var}(Q_n\mid R)
=
2\operatorname{tr}
\left[
\left(
\Sigma_p^{1/2}C\Sigma_p^{1/2}
\right)^2
\right]
+
4b^\top\Sigma_pC\Sigma_pb.
\]
Because
\(
\|A\|_{\mathrm{op}}\le\lambda^{-1}\), \(
\|ASA\|_{\mathrm{op}}\le 1 / 4\lambda,
\) and $\|\Sigma_p\|_{\mathrm{op}}$ and $\|w_0\|$ are uniformly bounded,
the right-hand side is $O(n^{-1})$. Hence
\[
Q_n
=
B_n+V_n+O_{\mathbb P}(n^{-1/2}).
\]
It remains to control the distribution of a new prediction residual.
Define
\(
v=\Sigma_p^{1/2}e=v_b-v_\xi,
\)
where
\(
v_b=\lambda\Sigma_p^{1/2}Aw_0
\)
and
\(
v_\xi=\Sigma_p^{1/2}\xi.
\)
Let
\(
D_n
=
\mathcal Q_d(v_b)
=
\mathcal Q_d\!\left(
\lambda\Sigma_p^{1/2}Aw_0
\right).
\)
By linearity of the map $\mathcal Q_d$,
\(
\mathcal Q_d(v)
=
D_n-\mathcal Q_d(v_\xi).
\)
By assumption,
\(
\|D_n\|_{\mathrm{op}}
=
O_{\mathbb P}(\rho_n).
\)
The first term satisfies
\(
\|\mathcal Q_d(v_b)\|_{\mathrm{op}}
=
O_{\mathbb P}(\rho_n)
\)
by assumption.

Conditionally on $R$, $v_\xi$ is Gaussian with covariance
\(
K_\xi
=
(\sigma^2 / n)
\Sigma_p^{1/2}ASA\Sigma_p^{1/2},
\)
and therefore
\(
\|K_\xi\|_{\mathrm{op}}
=
O(n^{-1}).
\)
For each fixed unit vector $u\in\mathbb R^d$,
\(
u^\top\mathcal Q_d(v_\xi)u
\)
is a centered Gaussian linear functional of $v_\xi$ with variance
$O(n^{-1})$. Applying the same $1/4$-net argument used in the
random-effects proof gives
\(
\|\mathcal Q_d(v_\xi)\|_{\mathrm{op}}
=
O_{\mathbb P}
\left(
\sqrt{d / n}
\right).
\)
Since $p=d(d+3)/2\asymp n$, we have $d\asymp n^{1/2}$ and hence
\(
\|\mathcal Q_d(v_\xi)\|_{\mathrm{op}}
=
O_{\mathbb P}(n^{-1/4}).
\)
Consequently,
\(
\|\mathcal Q_d(v)\|_{\mathrm{op}}
=
O_{\mathbb P}(\rho_n+n^{-1/4}).
\)

Conditionally on the fitted training sample, a fresh signed prediction
residual can be written as
\[
y_*-r_*^\top\widehat w
=
a^\top G_*
+
\frac{
G_*^\top\mathcal Q_d(v)G_*
-
\operatorname{tr}\{\mathcal Q_d(v)\}
}{\sqrt2}
+
\sigma Z_*,
\]
and its conditional variance is
\(
\sigma^2+\|v\|^2
=
\sigma^2+Q_n.
\)
Lemma~\ref{lem:quadratic_gaussian_approx} therefore yields
\[
\sup_x
\left|
\mathbb P(
y_*-r_*^\top\widehat w\le x
\mid R,y)
-
\Phi\left(
\frac{x}{\sqrt{\sigma^2+Q_n}}
\right)
\right|
=
O_{\mathbb P}(\rho_n+n^{-1/4}).
\]
The same bound, up to a constant factor, holds for the corresponding
absolute-residual distribution.

Conditionally on the training sample, the calibration observations are
independent. The Dvoretzky--Kiefer--Wolfowitz--Massart inequality and the
preceding Gaussian approximation therefore give
\[
\widehat q_{1-\alpha}
=
z\sqrt{\sigma^2+Q_n}
+
O_{\mathbb P}
\left(
\rho_n+n^{-1/4}+m^{-1/2}
\right).
\]
Since $Q_n=O_{\mathbb P}(1)$ and $\sigma>0$, squaring and using
\(
Q_n
=
B_n+V_n+O_{\mathbb P}(n^{-1/2})
\)
gives
\[
\frac{W_{C,n}^2}{4z^2}
=
\sigma^2+B_n+V_n
+
O_{\mathbb P}
\left(
\rho_n+n^{-1/4}+m^{-1/2}
\right).
\]

The Bayesian calculation does not require randomness of $w_0$. The
quadratic-form concentration argument already used in the random-effects
case gives
\(r_*^\top Ar_* / n
=
\operatorname{tr}(\Sigma_pA) / n
+
O_{\mathbb P}(n^{-1/4}),
\)
and hence
\(
W_{B,n}(r_*)^2 / 4z^2
=
\sigma^2+P_n+O_{\mathbb P}(n^{-1/4}).
\)
Finally,
\(
P_n-V_n
=
\sigma^2\lambda 
\operatorname{tr}(\Sigma_pA^2) / n,
\)
so
\[
\frac{W_{B,n}(r_*)^2-W_{C,n}^2}{4z^2}
=
\frac{\sigma^2\lambda}{n}
\operatorname{tr}(\Sigma_pA^2)
-
B_n
+
O_{\mathbb P}
\left(
\rho_n+n^{-1/4}+m^{-1/2}
\right),
\]
which gives the stated width-difference expansion. Finally, if
\(
\rho_n=O(n^{-1/4}),
\)
then
\(
\rho_n+n^{-1/4}=O(n^{-1/4}),
\)
so the deterministic-readout extension has the same approximation order as
the random-effects result.
\end{proof}

\subsection{Isotropic Marchenko-Pastur Specialization}
\label{app:mp_specialization}

\subsubsection{Proof of Corollary~\ref{cor:hd_phase}}
\begin{proof}
Under the isotropic Gaussian design, $\Sigma=I_p$. Write
\(
S=U\Lambda U^\top\) and \(A=(S+\lambda I_p)^{-1}
=
U(\Lambda+\lambda I_p)^{-1}U^\top.
\)
The Marchenko--Pastur theorem
\cite{MarchenkoPastur1967,DobribanWager2018}, together with bounded
continuity of the functions
\(
x\mapsto 1/(x+\lambda),\)
\(
x\mapsto 1 / (x+\lambda)^2,
\) and
\(x\mapsto x / (x+\lambda)^2,
\)
gives
\(
\operatorname{tr}(A)/p
\xrightarrow{\mathbb P}
I_1,
\)
\(\operatorname{tr}(A^2)/p
\xrightarrow{\mathbb P}
I_2,
\)
and
\(
\operatorname{tr}(ASA)/p
\xrightarrow{\mathbb P}
J.
\)
Since $p/n\to\gamma$,
\(
P_n
=
\sigma^2 / n \operatorname{tr}(A)
=
\sigma^2 / n
\operatorname{tr}(A)
\xrightarrow{\mathbb P}
\sigma^2\gamma I_1,
\)
and similarly,
\(
V_n
=
\sigma^2 / n \operatorname{tr}(ASA)
\xrightarrow{\mathbb P}
\sigma^2\gamma J.
\)

It remains to identify the limit of the bias term
\(
B_n
=
\lambda^2w_0^\top A^2w_0.
\)
Let
\(
D=(\Lambda+\lambda I_p)^{-2},
\)
so that
\(
A^2=UDU^\top.
\)
Because the isotropic Gaussian design is orthogonally invariant, conditional
on the eigenvalues $\Lambda$, the direction
\(
u= U^\top w_0/ \|w_0\|
\)
is uniformly distributed on the unit sphere in $\mathbb R^p$. Hence
\(
w_0^\top A^2w_0
=
\|w_0\|^2u^\top Du.
\)

For $u$ uniformly distributed on the unit sphere,
\(
\mathbb E(u_i^2)= 1/p,\)
\(\mathbb E(u_i^4)= 3 / p(p+2),
\) and
\( E(u_i^2u_j^2)= 1 / p(p+2) \) for \(i\neq j.
\)
Therefore, conditional on $\Lambda$,
\(
\mathbb E\left[
w_0^\top A^2w_0
\,\middle|\,
\Lambda
\right]
=
(\|w_0\|^2 / p)
\operatorname{tr}(A^2).
\)
The same sphere-moment identities give
\[
\operatorname{Var}\left(
w_0^\top A^2w_0
\,\middle|\,
\Lambda
\right)
=
\frac{2\|w_0\|^4}{p(p+2)}
\left\{
\operatorname{tr}(A^4)
-
\frac{\operatorname{tr}(A^2)^2}{p}
\right\}
\le
\frac{2\|w_0\|^4}{p(p+2)}
\operatorname{tr}(A^4).
\]
Since
\(
\|A\|_{\mathrm{op}}\le\lambda^{-1},
\)
we have
\(
\operatorname{tr}(A^4)
\le
p\|A\|_{\mathrm{op}}^4
\le
p / \lambda^4,
\)
and hence \\
\(
\operatorname{Var}\left(
w_0^\top A^2w_0
\,\middle|\,
\Lambda
\right)
\le
2\|w_0\|^4 / (p+2)\lambda^4.
\)
Because $\|w_0\|^2\to s^2<\infty$, the right-hand side converges to zero.
Conditional Chebyshev's inequality therefore yields
\[
w_0^\top A^2w_0
-
\frac{\|w_0\|^2}{p}\operatorname{tr}(A^2)
\xrightarrow{\mathbb P}
0.
\]
Together with
\(
\|w_0\|^2\to s^2
\)
and
\(
\operatorname{tr}(A^2) / p
\xrightarrow{\mathbb P}
I_2,
\)
this gives
\(
B_n
=
\lambda^2w_0^\top A^2w_0
\xrightarrow{\mathbb P}
s^2\lambda^2I_2.
\)

Theorem~\ref{thm:finite_trace_widths} now implies

\[
\frac{W_{B,n}(r_*)^2}{4z^2}\xrightarrow{\mathbb P}\sigma^2(1+\gamma I_1),
\qquad
\frac{W_{C,n}^2}{4z^2}\xrightarrow{\mathbb P}\sigma^2(1+\gamma J)+s^2\lambda^2I_2.
\]
Finally,
\(
ASA=A-\lambda A^2,
\)
so
\(
J=I_1-\lambda I_2.
\)
Therefore
\[
\begin{aligned}
&\sigma^2(1+\gamma I_1)
-
\left\{
\sigma^2(1+\gamma J)
+s^2\lambda^2I_2
\right\}\\
&\qquad
=
\sigma^2\gamma(I_1-J)
-s^2\lambda^2I_2\\
&\qquad
=
\sigma^2\gamma\lambda I_2
-s^2\lambda^2I_2\\
&\qquad
=
\lambda I_2
(\sigma^2\gamma-s^2\lambda).
\end{aligned}
\]
Multiplying by $4z^2$ proves
\eqref{eq:hd_phase}. Since $\lambda>0$ and $I_2>0$, the sign of the limiting
squared-width difference is determined by
$\sigma^2\gamma-s^2\lambda$, giving the three stated asymptotic ordering
regimes.
\end{proof}

\subsubsection{Bayesian coverage in the proportional regime.}
\label{app:hd_coverage}
Let \( v_B = \sigma^2(1+\gamma I_1), \) and \\ \( v_C = \sigma^2(1+\gamma J)+s^2\lambda^2I_2. \) The preceding proof shows that the Bayesian squared radius satisfies \(W_{B,n}^2/4 \xrightarrow{\mathbb P} z^2v_B, \) and hence \( W_{B,n} / 2 \xrightarrow{\mathbb P} z\sqrt{v_B}. \) 

To identify the limiting distribution of the prediction error, define \( E_n = y_*-r_*^\top\widehat w. \) Conditional on the fitted training sample $\mathcal D_{\mathrm{tr}}$, the Gaussian design gives \( E_n \mid\mathcal D_{\mathrm{tr}} \sim N(0,\sigma^2+Q_n), \) where \( Q_n = (\widehat w-w_0)^\top \Sigma (\widehat w-w_0). \) The proof of Theorem~\ref{thm:finite_trace_widths} and the preceding Marchenko--Pastur limits give \(\sigma^2+Q_n \xrightarrow{\mathbb P} v_C. \) For each fixed $t\in\mathbb R$, \[ \mathbb E[ e^{itE_n} \mid\mathcal D_{\mathrm{tr}}] = \exp\left\{ -\frac{t^2}{2}(\sigma^2+Q_n) \right\} \xrightarrow{\mathbb P} \exp\left(-\frac{t^2v_C}{2}\right). \] The conditional characteristic functions are bounded in absolute value by one. Hence bounded convergence along subsequences, or equivalently dominated convergence together with convergence in probability, gives \( \mathbb E e^{itE_n} \longrightarrow \exp\left(-t^2v_C/ 2 \right). \) By L\'evy's continuity theorem, \( E_n \xrightarrow{d} N(0,v_C). \) Since \(W_{B,n} / 2 \xrightarrow{\mathbb P} z\sqrt{v_B}, \) Slutsky's theorem gives \[ \mathbb P\left( |E_n| \le \frac{W_{B,n}}2 \right) \longrightarrow \mathbb P\left( |N(0,v_C)| \le z\sqrt{v_B} \right). \] Therefore \[ \mathbb P\{y_*\in C_B(r_*)\} \longrightarrow 2\Phi\left( z\sqrt{\frac{v_B}{v_C}} \right)-1. \]

\subsubsection{Optimal Ridge Penalty and the Equality Boundary}
\label{app:optimal_ridge}
For $s^2>0$, the limiting prediction risk is
\[
\mathcal E(\lambda)
=
\sigma^2+
\int
\frac{s^2\lambda^2+\sigma^2\gamma x}
{(x+\lambda)^2}
\,dF_\gamma^{\mathrm{MP}}(x).
\]
Differentiating under the integral gives
\(
\mathcal E'(\lambda)
=
2(s^2\lambda-\sigma^2\gamma)
\int
\frac{x}{(x+\lambda)^3}
\,dF_\gamma^{\mathrm{MP}}(x).
\)
The integral is positive, so the unique minimizer is
\(
\lambda_*=\frac{\sigma^2\gamma}{s^2}.
\)
This is the standard optimal-ridge scaling in the isotropic proportional
regime; see \cite{DobribanWager2018}. At this value,
$s^2\lambda_*=\sigma^2\gamma$, which is exactly the equality boundary in
Corollary~\ref{cor:hd_phase}.

\subsubsection{Relation to the Fixed-Dimensional Regime}
\label{app:hd_fixed_relation}

The proportional-growth result does not reduce automatically to
Theorem~\ref{thm:fixed_width_discrepancy} by taking $\gamma\downarrow0$.
The high-dimensional analysis holds the normalized ridge penalty $\lambda>0$
fixed. Under this scaling, a nonvanishing shrinkage bias may persist even as
the aspect ratio decreases. Recovering the fixed-dimensional result therefore
additionally requires $\lambda\to0$ on the normalized covariance scale, or
more generally requires the resulting prediction bias to vanish. The limits
$p/n\to0$ and $\lambda\to0$ therefore represent distinct asymptotic
operations.

\subsection{Extension to Temporally Dependent Observations}
\label{app:dependent_transfer}

The preceding high-dimensional results use an independent-sample benchmark.
We now give one sufficient condition under which probability statements for
independent NGRC observation windows can be transferred to a dependent time
series.

Suppose the underlying process is strictly stationary and absolutely regular
with coefficient $\beta(h)$, using the total-variation convention described in
Appendix~\ref{app:technical_tools}. A feature--response pair at forecast origin
$t$ depends on observations in the window
\(
[t-L_n,t+H],
\;\;
L_n=(k_n-1)s_n.
\)
Let $t_1<\cdots<t_N$ denote retained forecast origins and suppose
\begin{equation}
t_{j+1}-t_j
\ge
L_n+H+h_n,
\qquad
h_n\ge1.
\label{eq:window_separation}
\end{equation}

\begin{proposition}[Transfer under absolute regularity]
\label{prop:beta_transfer}
Under \eqref{eq:window_separation}, the $N$ retained observation windows can be
coupled to mutually independent copies with the same marginal window
distribution such that the probability of at least one coupling failure is at
most
\(
(N-1)\beta(h_n).
\)
Consequently, if an event has failure probability at most $\delta$ under the
corresponding independent-window model, then its failure probability under the
dependent process is at most
\(
\delta+(N-1)\beta(h_n).
\)
In particular, whenever the independent copies satisfy the assumptions required
for the standard split-conformal guarantee,
\(
\mathbb P\{y_*\in C_C(r_*)\}
\ge
1-\alpha-(N-1)\beta(h_n).
\)
\end{proposition}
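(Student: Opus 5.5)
The plan is to build the coupling one window at a time using Berbee's lemma, and then transfer probabilities by a union bound over coupling failures. Write $W_j$ for the observation window $[t_j-L_n,\,t_j+H]$. The feature--response pair at origin $t_j$ is a measurable function of $W_j$. By \eqref{eq:window_separation}, the last time index of $W_j$ is $t_j+H$ and the first index of $W_{j+1}$ is $t_{j+1}-L_n\ge t_j+H+h_n$. Hence the sigma-field generated by $(W_1,\ldots,W_j)$ lies in the past sigma-field up to time $t_j+H$, while $W_{j+1}$ lies in the future sigma-field from time $t_j+H+h_n$ on. The absolute-regularity coefficient of the gap between them is therefore at most $\beta(h_n)$, using stationarity and monotonicity of $\beta$ in the gap (with the total-variation convention fixed in Appendix~\ref{app:technical_tools}).

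The construction proceeds inductively, enlarging the probability space with independent uniform variables as Berbee's lemma requires. Set $W_1^*=W_1$. Suppose $W_1^*,\ldots,W_j^*$ have been constructed as mutually independent copies with the correct marginals, each a measurable function of $(W_1,\ldots,W_j)$ and auxiliary uniforms. Apply \cite[Corollary~4.2.5]{Berbee1979} to the pair consisting of the sigma-field of $(W_1,\ldots,W_j,W_1^*,\ldots,W_j^*)$ and the variable $W_{j+1}$. This yields $W_{j+1}^*$ with the following properties:
\begin{itemize}
\item $W_{j+1}^*$ has the same law as $W_{j+1}$;
\item it is independent of that sigma-field, and hence of $W_1^*,\ldots,W_j^*$;
\item $\mathbb P(W_{j+1}^*\neq W_{j+1})\le\beta(h_n)$.
\end{itemize}
The main obstacle is making this step rigorous. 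The conditioning sigma-field now contains the auxiliary uniforms used to build the earlier copies, and one must check that the relevant dependence coefficient is still at most $\beta(h_n)$. I would handle this by noting that the auxiliary randomization is independent of the whole process, so the $\beta$-coefficient between the enlarged past and $W_{j+1}$ equals the one between $\sigma(W_1,\ldots,W_j)$ and $W_{j+1}$. That coefficient is in turn bounded by the process coefficient at lag $h_n$, as argued above. The induction then produces $N$ mutually independent copies with the correct marginals.

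A union bound over $j=1,\ldots,N-1$ bounds the probability of at least one coupling failure by $(N-1)\beta(h_n)$. For any event $E$ defined through the windows, $E$ and its independent-window counterpart $E^*$ agree on the no-failure event. Hence $\mathbb P(E)\le\mathbb P(E^*)+\mathbb P(\text{some failure})\le\delta+(N-1)\beta(h_n)$. For the conformal statement, take the windows to be the fit, calibration, and test windows, and let $E=\{y_*\notin C_C(r_*)\}$. The standard split-conformal guarantee applied to the independent copies gives $\mathbb P(E^*)\le\alpha$, and the displayed coverage bound follows.
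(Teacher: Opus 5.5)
Your proposal is correct and takes the same route as the paper. Both arguments apply Berbee's coupling successively across the $N-1$ gaps of length at least $h_n$, bound the total failure probability by $(N-1)\beta(h_n)$ with a union bound, and transfer the split-conformal coverage bound using agreement of all statistics on the no-failure event. Your explicit treatment of the enlarged conditioning $\sigma$-field adds detail that the paper's terse proof leaves implicit: you couple $W_{j+1}$ against all earlier windows, copies, and auxiliary uniforms, and the uniforms' independence from the process keeps the $\beta$-coefficient at most $\beta(h_n)$. That step is exactly what makes the copies mutually independent rather than only independent of their immediate predecessor.
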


\begin{proof}
The latest observation in the window associated with $t_j$ is $t_j+H$, while
the earliest observation in the next retained window is $t_{j+1}-L_n$.
Therefore \eqref{eq:window_separation} implies
\(
(t_{j+1}-L_n)-(t_j+H)\ge h_n.
\)
By Berbee's coupling principle \cite{Berbee1979}, the $(j+1)$st window can be
coupled to an independent copy having the same marginal distribution, with
failure probability at most $\beta(h_n)$. Applying the coupling successively
over the $N-1$ transitions and using the union bound gives total coupling
failure probability at most
\(
(N-1)\beta(h_n).
\)

On the event that all couplings succeed, every statistic computed from the
retained windows agrees with the corresponding statistic computed from the
independent copies. Hence, if $\mathcal E$ has failure probability at most
$\delta$ under the independent-window model,
\[
\mathbb P_{\mathrm{dep}}(\mathcal E^c)
\le
\mathbb P_{\mathrm{ind}}(\mathcal E^c)
+
\mathbb P(\text{coupling failure})
\le
\delta+(N-1)\beta(h_n).
\]
Taking $\mathcal E$ to be the split-conformal coverage event gives the stated
bound.
\end{proof}

\paragraph{Scope of the transfer result.}
The separation condition applies to every pair of successive retained forecast
windows, not merely to the boundaries between fitting, calibration, and test
blocks. Thus Proposition~\ref{prop:beta_transfer} describes a thinned sequence
of sufficiently separated NGRC forecast origins. It does not provide an
independence approximation for the ordinary overlapping delay windows used at
every consecutive forecast origin.

\paragraph{Sampling cost under quadratic features.}
The separation requirement can be costly in the proportional quadratic-NGRC
regime. With a full quadratic lag map, $p\asymp k_n^2$. If $p\asymp n$, then
$k_n\asymp n^{1/2}$. For fixed lag spacing,
$L_n=(k_n-1)s_n\asymp n^{1/2}$. Retaining $N\asymp n$ windows while enforcing
$t_{j+1}-t_j\ge L_n+H+h_n$ therefore requires an underlying time series of
length at least order $NL_n\asymp n^{3/2}$, even before accounting for any
additional mixing gap $h_n$. The proposition should therefore be viewed as a
sufficient theoretical bridge for heavily separated windows rather than as a
direct asymptotic description of ordinary overlapping NGRC forecasts.

The result is also only a transfer principle. It transfers probability
statements from an independent-window model to sufficiently separated
dependent windows, but it does not establish Gaussian feature marginals, the
linear readout model, quadratic-feature concentration, or a Marchenko--Pastur
law for an arbitrary dependent NGRC process. Those conditions must be verified
separately.

For example, if $\beta(h)\le Ce^{-ch}$, choosing
\(
h_n\ge \frac1c\log\{C(N-1)/\eta_n\}
\)
makes the additional dependence error at most $\eta_n$. Thus preventing
overlap between adjacent NGRC windows is not sufficient for approximate
independence: an additional gap is required relative to the decay of temporal
dependence. For related conformal methods for dependent data, see
\cite{ChernozhukovWuthrichZhu2018}.

\section{Additional Simulation Results}
\label{app:additional_simulations}

\subsection{Experiment 1: Fixed-Dimensional Simulation Details}
\label{app:sim_fixed_details}

For each replication of the experiment in Section~\ref{subsec:sim_fixed_residual_shape}, we
generate
\(
r_i\overset{\mathrm{i.i.d.}}{\sim}N(0,I_p),
\)
\(
y_i=r_i^\top w_0+e_i,
\)
with $p=5$, $w_0=(-0.4,-0.2,0,0.2,0.4)^\top$, and
$\mathbb E[e_i^2]=1$. We consider four unit-variance residual distributions:
Gaussian $N(0,1)$, Laplace with scale $1/\sqrt{2}$, standardized Student-$t_5$
errors $\sqrt{3/5}\,T_5$, and centered exponential errors $E-1$ with
$E\sim\operatorname{Exp}(1)$.

We use
\(
\alpha\in\{0.01,0.05,0.10,0.20\},
\)
$n_{\rm fit}=800$, $n_{\rm cal}=800$, $n_{\rm test}=3000$, and normalized
ridge penalty $\lambda=0.01$. Results are averaged over 300 independent
replications. For each configuration, we record the Bayesian--conformal width
difference and marginal coverage of both procedures. The empirical width
difference and Bayesian coverage are compared with the limits in
Theorem~\ref{thm:fixed_width_discrepancy} and
Corollary~\ref{cor:fixed_coverage}, while split-conformal coverage is compared
with $1-\alpha$.

\subsection{Experiment 2: Verification of Individual Width Approximations}
\label{app:sim_hd_width_verification}

As an additional check for the isotropic Gaussian experiment in Section~\ref{subsec:sim_hd_widths}, we compare the empirical Bayesian and split-conformal squared widths with their corresponding finite-trace
approximations,
\(
W_B^2 \approx 4z^2(\sigma^2+P_n) \) and
\(
W_C^2 \approx 4z^2(\sigma^2+B_n+V_n),
\)
where \(z=z_{1-\alpha/2}\). Figure~\ref{fig:sim_hd_width_verification} shows close agreement for both methods across the simulated aspect ratios, ridge penalties, and signal strengths, where applicable. Thus, the close agreement in \(W_B^2-W_C^2\) reflects accurate approximation of the underlying width components rather than merely offsetting errors.

\begin{figure}[t]
    \centering
    \includegraphics[width=0.7\textwidth]{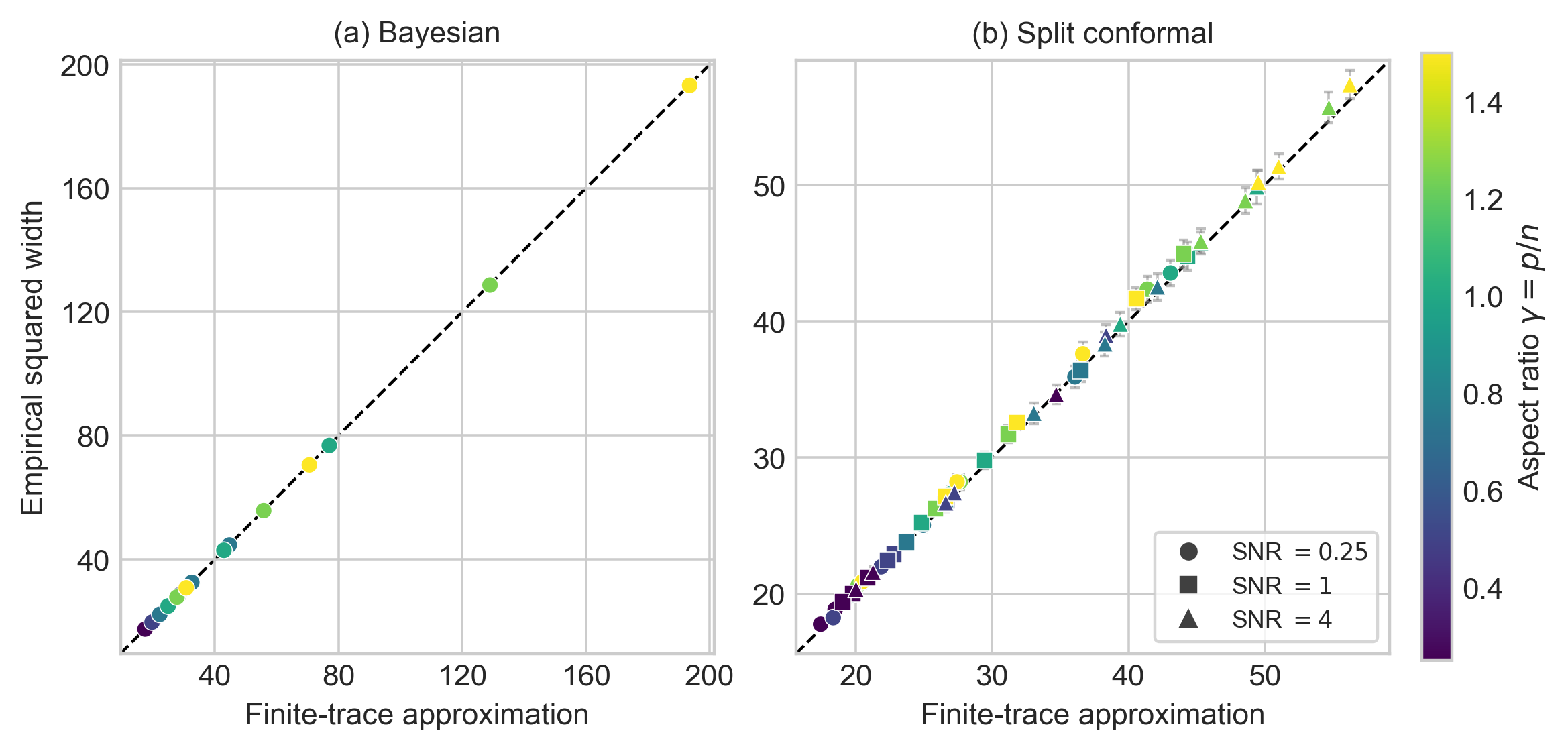}
    \caption{ Individual finite-trace width verification for Experiment~2. Points compare cell-average empirical squared widths with their finite-trace approximations for Bayesian intervals (left) and split-conformal intervals (right). Error bars are 95\%  confidence intervals for the cell means over 100 replications. Color denotes the aspect ratio \(\gamma=p/n\) and markers in the split-conformal panel denote signal-to-noise ratio. The cells span \(\lambda\in\{0.05,0.2,1\}\). The dashed line denotes equality.
    }
    \label{fig:sim_hd_width_verification}
\end{figure}

\subsection{Experiment 3: Ridge-Transformed Diffuseness Diagnostic}
\label{app:sim_quadratic_diagnostics}

The concentrated and diffuse constructions in
Section~\ref{subsec:sim_temporal_dependence} specify raw quadratic coefficient
matrices with equal Frobenius norm. Corollary~\ref{cor:quadratic_deterministic},
however, imposes its diffuseness condition on the ridge-transformed bias
direction rather than on the raw signal. We therefore record, in every
replication of the standardized independent quadratic experiment,
\(
\|D_n\|_{\mathrm{op}}
=
\left\|\mathcal Q_d\!\left(\lambda\Sigma_p^{1/2}A_nw_0\right)\right\|_{\mathrm{op}},
\)
where $A_n=(R^\top R/n_{\rm fit}+\lambda I_p)^{-1}$. For the centered and
variance-standardized independent quadratic feature map, $\Sigma_p=I_p$.

\begin{table}[t]
\centering
\small
\caption{Raw and ridge-transformed quadratic diffuseness diagnostics for
Experiment~3. Entries are means $\pm$ standard errors over 150 replications.
The generating-matrix norm refers to the unnormalized matrix $M_d$ in
$G^\top M_dG-\operatorname{tr}(M_d)$.}
\label{tab:sim_quadratic_diffuseness}
\begin{tabular}{clrrr}
\toprule
$d$ & Signal structure & $\|M_d\|_{\rm op}$ & $\|D_n\|_{\rm op}$ & Residual KS \\
\midrule
8  & Concentrated & 1.414 & $0.973\pm0.013$ & $0.140\pm0.001$ \\
8  & Diffuse      & 0.500 & $0.545\pm0.004$ & $0.079\pm0.001$ \\
12 & Concentrated & 1.414 & $0.941\pm0.010$ & $0.133\pm0.001$ \\
12 & Diffuse      & 0.408 & $0.444\pm0.002$ & $0.065\pm0.001$ \\
16 & Concentrated & 1.414 & $0.923\pm0.008$ & $0.130\pm0.001$ \\
16 & Diffuse      & 0.354 & $0.386\pm0.002$ & $0.059\pm0.001$ \\
20 & Concentrated & 1.414 & $0.901\pm0.006$ & $0.125\pm0.001$ \\
20 & Diffuse      & 0.316 & $0.346\pm0.001$ & $0.052\pm0.001$ \\
24 & Concentrated & 1.414 & $0.885\pm0.006$ & $0.125\pm0.001$ \\
24 & Diffuse      & 0.289 & $0.318\pm0.001$ & $0.050\pm0.001$ \\
\bottomrule
\end{tabular}
\end{table}

Table~\ref{tab:sim_quadratic_diffuseness} shows that the transformed direction
becomes increasingly diffuse under the diffuse construction: its mean operator
norm decreases from $0.545$ at $d=8$ to $0.318$ at $d=24$. The concentrated
construction remains substantially larger, decreasing only from $0.973$ to
$0.885$. Over the same dimensions, the diffuse residual-normality KS distance
decreases from $0.079$ to $0.050$, whereas the concentrated value remains
$0.125$ at $d=24$. These finite-dimensional diagnostics directly examine the
condition in Corollary~\ref{cor:quadratic_deterministic} and are consistent with
the Gaussian-approximation mechanism in
Lemma~\ref{lem:quadratic_gaussian_approx}; they are not presented as proof of
the asymptotic condition.

\begin{table*}[t]
\centering
\scriptsize
\caption{Width, coverage, and residual-normality results for Experiment~3 at
$d=24$. Entries are means $\pm$ standard errors over 150 replications. Trace
denotes the finite-trace approximation.}
\label{tab:sim_quadratic_widths}
\resizebox{\textwidth}{!}{%
\begin{tabular}{llrrrrrrr}
\toprule
Feature design & Signal & Empirical $W_B$ & Trace $W_B$ & Empirical $W_C$ & Trace $W_C$ & Bayesian cov. & SCP cov. & Residual KS \\
\midrule
Independent Gaussian & Concentrated & $1.165\pm0.000$ & $1.165\pm0.000$ & $3.732\pm0.016$ & $3.715\pm0.010$ & $0.462\pm0.001$ & $0.950\pm0.001$ & $0.019\pm0.000$ \\
Independent Gaussian & Diffuse      & $1.165\pm0.000$ & $1.166\pm0.000$ & $3.743\pm0.015$ & $3.747\pm0.011$ & $0.459\pm0.002$ & $0.949\pm0.001$ & $0.020\pm0.000$ \\
Quadratic Gaussian   & Concentrated & $1.169\pm0.000$ & $1.173\pm0.000$ & $3.843\pm0.035$ & $3.870\pm0.023$ & $0.542\pm0.002$ & $0.950\pm0.001$ & $0.125\pm0.001$ \\
Quadratic Gaussian   & Diffuse      & $1.169\pm0.000$ & $1.173\pm0.000$ & $3.830\pm0.015$ & $3.872\pm0.010$ & $0.448\pm0.001$ & $0.951\pm0.001$ & $0.050\pm0.001$ \\
Temporal quadratic   & Concentrated & $1.054\pm0.001$ & $1.055\pm0.001$ & $2.889\pm0.024$ & $2.886\pm0.013$ & $0.637\pm0.002$ & $0.949\pm0.001$ & $0.121\pm0.001$ \\
Temporal quadratic   & Diffuse      & $1.055\pm0.001$ & $1.057\pm0.001$ & $2.336\pm0.032$ & $2.373\pm0.025$ & $0.633\pm0.005$ & $0.942\pm0.002$ & $0.049\pm0.001$ \\
\bottomrule
\end{tabular}%
}
\end{table*}

The empirical and finite-trace Bayesian widths differ by at most $0.004$ at
$d=24$, and the conformal-width approximations are also close. Split-conformal
coverage remains near the nominal $0.95$ level. The substantially smaller
residual KS distance for the diffuse quadratic signal is consistent with the
corresponding reduction in $\|D_n\|_{\mathrm{op}}$.

\subsection{Experiment 3: Supplementary Temporal-Transfer Stress Test}
\label{app:sim_temporal_transfer}

To supplement the quadratic-feature experiment in Section~\ref{subsec:sim_temporal_dependence}, we examine whether a temporally dependent quadratic-window design approaches its same-marginal independent-window benchmark as the separation between forecast origins increases.

We generate a stationary Gaussian AR(1) process,
\(
x_t
=
\rho x_{t-1}
+
\sqrt{1-\rho^2}\,\varepsilon_t,\) with \(
\varepsilon_t\overset{\mathrm{i.i.d.}}{\sim}N(0,1),
\)
and \(\rho=0.8\). At each retained forecast origin, we form a lag vector of base dimension \(d=20\) and construct its centered, variance-standardized quadratic coordinates. The resulting feature dimension is
\(
p=(d(d+1)) / 2 =210.
\)
The signal is the diffuse diagonal quadratic form used in the main quadratic-feature experiment, with total signal magnitude \(2\). We use \(n=m=2p=420\) fitting and calibration observations, 2,000 test observations, noise standard deviation \(\sigma=0.25\), normalized ridge penalty
\(\lambda=0.5\), and miscoverage level \(\alpha=0.05\).

Forecast origins are retained at separations
\(
s\in\{1,4,16,64\}.
\)
For comparison, we also generate independent Gaussian lag windows with the same within-window AR(1) covariance. This benchmark therefore preserves the marginal feature-response distribution while removing dependence across forecast origins. Results are averaged over 150 independent simulation
replications.

We assess temporal transfer using three diagnostics. First, we compare the empirical split-conformal width \(W_C\) with the finite-trace approximation
\(
W_C^{\mathrm{tr}}
=
2z_{1-\alpha/2}
\sqrt{\sigma^2+B_n+V_n},
\)
where \(B_n\) and \(V_n\) are the finite-sample squared-bias and estimation-variance terms. Second, for each separation, we compare the distribution of \(W_C\) across replications with its independent-window counterpart using the two-sample Kolmogorov--Smirnov distance
\(
D_{\mathrm{KS}}
=
\sup_x
\left|
\widehat F_s(x)
-
\widehat F_{\mathrm{ind}}(x)
\right|.
\)
We additionally report the KS distance between the standardized prediction residuals and a standard Gaussian distribution. Third, we report empirical split-conformal coverage at every separation.

\begin{figure*}[t]
    \centering
    \includegraphics[width=\textwidth]
    {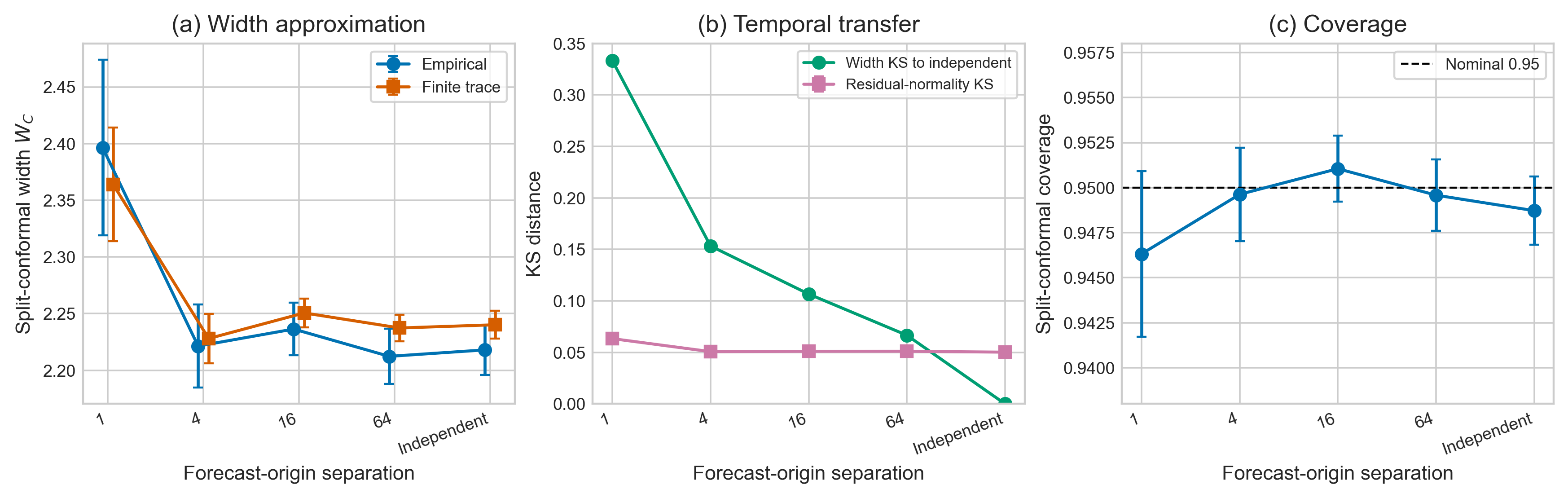}
    \caption{
        Temporal quadratic-window bridge for a stationary Gaussian AR(1) process with \(\rho=0.8\) and base dimension \(d=20\). Panel (a) compares the empirical split-conformal width with its finite-trace approximation. Panel (b) reports the KS distance between the conformal-width distribution and its independent-window benchmark, together with the residual-normality KS diagnostic. Panel (c) reports
        split-conformal coverage, with the dashed line denoting the nominal level \(0.95\). Error bars, where shown, are 95\% confidence intervals for cell means over 150 replications. }
    \label{fig:sim_temporal_dependence}
\end{figure*}

Figure~\ref{fig:sim_temporal_dependence} shows that the empirical and
finite-trace conformal widths remain close throughout the bridge. At
separation one, the empirical and approximated widths are \(2.397\) and
\(2.364\), respectively. At separations \(4\), \(16\), and \(64\), the
corresponding empirical widths are \(2.221\), \(2.236\), and \(2.212\),
compared with finite-trace approximations \(2.228\), \(2.251\), and \(2.237\).
The independent-window benchmark has empirical width \(2.218\) and
approximated width \(2.240\).

The KS distance between the conformal-width distribution and the
independent-window benchmark decreases from \(0.333\) at separation one to
\(0.153\), \(0.107\), and \(0.067\) at separations \(4\), \(16\), and \(64\),
respectively. The residual-normality KS diagnostic similarly decreases from
approximately \(0.063\) at separation one to approximately \(0.051\) at the
larger separations, close to the independent-window value \(0.050\).

Mean split-conformal coverage remains close to the nominal level throughout:
\(0.946\), \(0.950\), \(0.951\), and \(0.950\) at separations \(1\), \(4\),
\(16\), and \(64\), respectively, compared with \(0.949\) for independent
windows. Increasing forecast-origin separation substantially reduces the
cross-window dependence targeted by Proposition~\ref{prop:beta_transfer},
while the width comparison and split-conformal calibration are relatively
insensitive to the remaining dependence in this experiment. The smaller
separations include overlapping windows and should be interpreted as empirical
stress tests rather than configurations covered by
Proposition~\ref{prop:beta_transfer}.

\subsubsection{Absolute-Regularity Penalties Under Temporal Dependence}
\label{app:sim_beta_penalties}

We also examine the absolute-regularity correction terms appearing in Proposition~\ref{prop:beta_transfer}. For the stationary Gaussian AR(1) process with \(\rho=0.8\), the lag-\(h\) absolute-regularity coefficient is
computed numerically from the Gaussian transition distribution.

The current temporal bridge uses lag windows of base dimension \(d=20\).
If consecutive forecast origins are separated by \(\Delta\) time steps, the
distance between the latest observation in the earlier window and the
earliest observation in the later window is
\(
h(\Delta)=\Delta-d+1=\Delta-19.
\)
Consequently, the simulated separations
\(\Delta\in\{1,4,16\}\) produce overlapping lag windows, and an
absolute-regularity transfer bound between disjoint windows is not applicable
to those configurations. The first nonoverlapping configuration occurs at
\(\Delta=20\), for which \(h=1\). At the largest simulated separation,
\(\Delta=64\), the corresponding boundary lag is \(h=45\).

The empirical conformal-width statistic uses \(n=420\) fitting and \(m=420\)
calibration windows. Using the pooled count
\(
N_W=n+m=840,
\)
the corresponding distributional correction is
\(
(N_W-1)\beta(h)=839\beta(h).
\)
A single split-conformal coverage event additionally involves one test
window, giving
\(
N_C=n+m+1=841
\)
and
\(
\mathbb{P}\{Y_*\notin C_C(X_*)\}
\le
\alpha+(N_C-1)\beta(h)
=
0.05+840\beta(h).
\)
These pooled counts are conservative for the simulation implementation,
which generates the fitting, calibration, and test blocks from separate
AR(1) trajectories while retaining temporal dependence within each block.

\begin{figure*}[t]
    \centering
    \includegraphics[width=0.9\textwidth]
    {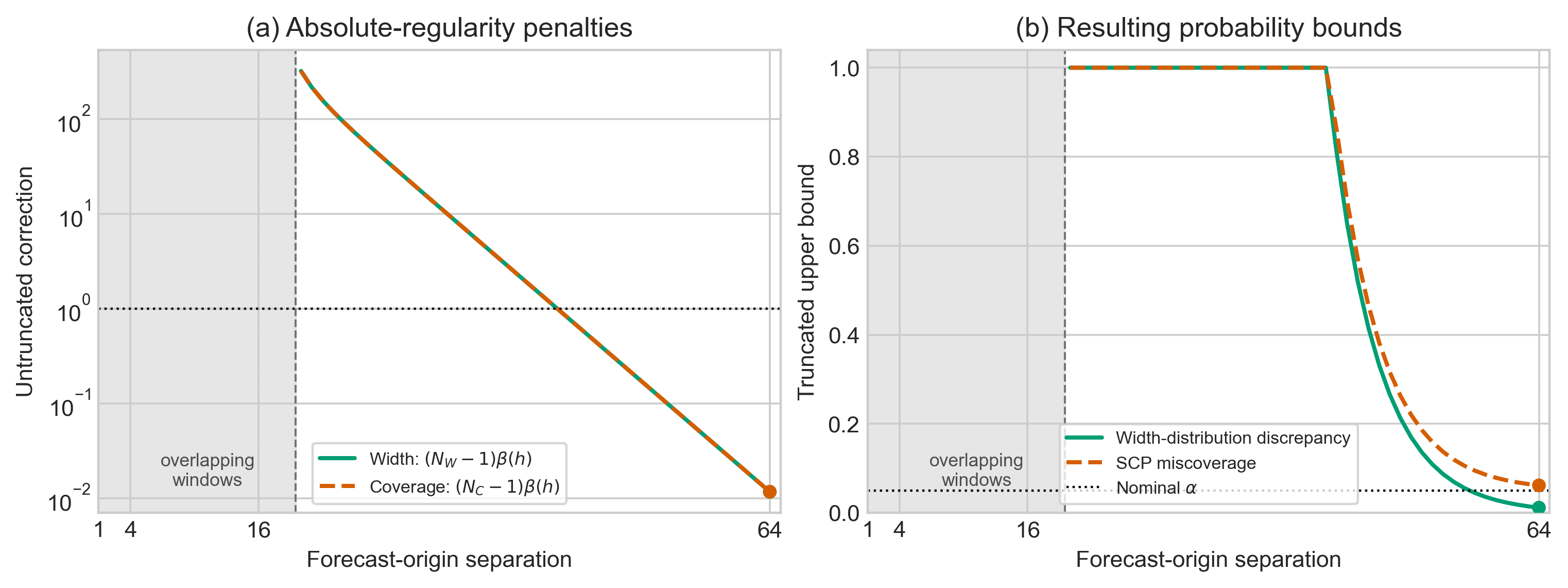}
    \caption{
        Absolute-regularity corrections for the temporal quadratic-window
        experiment with \(\rho=0.8\) and \(d=20\). The shaded region contains
        origin separations \(\Delta<20\), for which consecutive lag windows
        overlap and the disjoint-window transfer bound is not applied.
        Panel (a) shows the untruncated corrections
        \((N_W-1)\beta(h)\) and \((N_C-1)\beta(h)\); the horizontal dotted
        line marks one, above which the corresponding probability correction
        is vacuous. Panel (b) shows the resulting truncated upper bounds for
        the width-distribution discrepancy and split-conformal miscoverage.
        The horizontal dotted line in panel (b) marks the nominal
        miscoverage level \(\alpha=0.05\). Vertical dotted lines identify the
        four forecast-origin separations used in the simulation.
    }
    \label{fig:beta_penalty_decay}
\end{figure*}

Figure~\ref{fig:beta_penalty_decay} shows that the sufficient transfer bounds
are highly conservative at short nonoverlapping separations. The untruncated
corrections remain greater than one until approximately
\(\Delta=45\), corresponding to boundary lag \(h=26\). At the largest
simulated separation, \(\Delta=64\), numerical integration gives
\(
\beta(45)\approx 1.39\times 10^{-5}.
\)
The resulting width-distribution correction is approximately \(0.0117\), and
the split-conformal miscoverage upper bound is approximately
\(
0.05+840\beta(45)\approx 0.0617.
\)
The observed mean split-conformal miscoverage at this separation is
approximately \(0.0504\), consistent with this bound.

The empirical two-sample KS distance in
Figure~\ref{fig:sim_temporal_dependence} is an estimate based on 150
replications per distribution and therefore need not lie below the population
coupling bound pointwise. In particular, simulation variation can make the
empirical KS statistic larger than the bound on the underlying distributional
discrepancy. For the independent-window benchmark,
\(\beta(h)=0\), so the width-distribution correction is zero and the
miscoverage upper bound reduces to the nominal level \(\alpha=0.05\).

\subsection{Experiment 4: Additional Online-Robustness Details}
\label{app:sim_online_details}

This section provides implementation details for the sequential uncertainty
methods used in Section~\ref{subsec:sim_nonlinear_volterra}. All methods
use the same fixed NGRC ridge point predictor. After the initial fit, the
regression coefficients are not updated, so only the uncertainty estimates
are allowed to adapt as test outcomes become available.

\subsubsection{Updated Gaussian and Rolling SCP Baselines}
\label{app:online_uq_updates}

Let
\(
e_t = y_t-\widehat y_t
\)
denote the one-step-ahead forecast residual at test time $t$. When the
interval for $y_t$ is constructed, only residuals whose outcomes have
already been observed are available. The sequential methods use a rolling
window containing the $M=100$ most recent available residuals. At the
beginning of the test period, this window is initialized using the most recent
residuals from the pre-test calibration block.

For comparison, the frozen Bayesian ridge interval is
\[
C_t^{\mathrm{B}}
=
\left[
\widehat y_t
\pm
z_{1-\alpha/2}\widehat\tau
\sqrt{1+\ell_t}
\right],
\qquad
\ell_t
=
r_t^\top
\left(
R_{\mathrm{fit}}^\top R_{\mathrm{fit}}
+
n_{\mathrm{fit}}\lambda I
\right)^{-1}
r_t,
\]
where $\widehat\tau$ is the root-mean-square residual estimated from the
fitting sample. The scale $\widehat\tau$ remains fixed after deployment,
although the interval width may vary with the predictive-leverage term
$\ell_t$.

\paragraph{Updated Gaussian interval.}

Let $\mathcal I_t$ denote the indices of the $M$ residuals available
immediately before forecast $t$. The sequential forecast-error scale is
estimated as
\(
\widehat\tau_t^2
=
\frac{1}{M}
\sum_{j\in\mathcal I_t} e_j^2.
\)
The updated Gaussian interval is
\(
C_t^{\mathrm{UG}}
=
\left[
\widehat y_t
\pm
z_{1-\alpha/2}\widehat\tau_t
\right].
\)
Here $\widehat\tau_t$ estimates the total out-of-sample forecast-error scale,
so an additional Bayesian leverage factor is not applied. This diagnostic
therefore isolates the effect of replacing the frozen training-residual scale
by a sequential estimate based on recently observed forecast errors.

\paragraph{Rolling split conformal prediction.}

Rolling split conformal prediction uses the same fixed point predictor but
updates its residual quantile sequentially. For $j\in\mathcal I_t$, define
the conformity scores
\(
s_j=|e_j|.
\)
Let $s_{t,(1)}\le\cdots\le s_{t,(M)}$ denote the order statistics
of the $M$ scores in the current residual window and define
\(
k=\left\lceil (M+1)(1-\alpha)\right\rceil,
\)
with the convention $s_{t,(M+1)}=+\infty$. Let
$q_t=s_{t,(k)}$. The resulting interval is
\(
C_t^{\mathrm{RSCP}}
=
[\widehat y_t-q_t,\widehat y_t+q_t].
\)
For $M=100$ and $\alpha=0.05$, this gives $k=96$, so the infinite-score convention is not active in the reported experiment. After $y_t$ is
observed, the residual $e_t$ is appended to the rolling history and the
oldest residual is discarded. Thus, the interval for $y_t$ never uses its
own outcome or any future response.

The updated Gaussian and rolling SCP procedures are diagnostic baselines.
Comparing the updated Gaussian interval with frozen Bayesian ridge isolates
the effect of sequential scale estimation, while comparing rolling SCP with
frozen SCP isolates the effect of updating the empirical residual
distribution.

\subsubsection{Numerical Simulation of the Nonlinear Volterra System}
\label{app:volterra_numerics}

The exponentially damped oscillatory memory kernel in
Experiment~\ref{subsec:sim_nonlinear_volterra} admits a recursive
state-space representation. Define
\[
u(t)=\int_0^t e^{-0.2(t-s)}\cos\{2\pi(t-s)\}x(s)\,ds,
\qquad
v(t)=\int_0^t e^{-0.2(t-s)}\sin\{2\pi(t-s)\}x(s)\,ds.
\]
Then $M(t)=u(t)$, and differentiation of the two convolution states gives
\(
u'(t)=x(t)-0.2u(t)-2\pi v(t),\) and 
\(
v'(t)=2\pi u(t)-0.2v(t).
\)
Hence the Volterra integro-differential equation can be written as the
three-dimensional system
\[
\begin{aligned}
x'(t)
&=
\sin(t)-0.1x(t)+0.5\tanh\{u(t)\},\\
u'(t)
&=
x(t)-0.2u(t)-2\pi v(t),\\
v'(t)
&=
2\pi u(t)-0.2v(t),
\end{aligned}
\]
with $x(0)=u(0)=v(0)=0$.

We integrate all three states using forward Euler with internal step
$h=0.005$. Writing $t_j=jh$, the internal recursion is
\[
\begin{aligned}
x_{j+1}
&=
x_j+h\left[\sin(t_j)-0.1x_j+0.5\tanh(u_j)\right],\\
u_{j+1}
&=
u_j+h\left[x_j-0.2u_j-2\pi v_j\right],\\
v_{j+1}
&=
v_j+h\left[2\pi u_j-0.2v_j\right].
\end{aligned}
\]
The homogeneous Euler update for the two memory states has eigenvalues
\(
1-0.2h\pm 2\pi h\,\mathrm{i},
\)
and therefore amplification factor
\(
r(h)
=
\sqrt{(1-0.2h)^2+(2\pi h)^2}.
\)
Stability of this update requires
\[
r(h)<1
\quad\Longleftrightarrow\quad
h<\frac{2(0.2)}{0.2^2+(2\pi)^2}
\approx 0.01012.
\]
Thus, applying the printed Euler recursion directly at the observation
spacing $0.1$ would be unstable because $r(0.1)\approx1.164$. The selected
internal step $h=0.005$ satisfies the stability condition
($r(0.005)\approx0.99949$).

We retain every 20th internal state, so the observation spacing remains
$\Delta t=20h=0.1$. The first 500 retained states (equivalently, the first
10,000 internal Euler steps) are discarded as burn-in before constructing the
fitting, calibration, and test samples. Observation noise is added only after
generating the latent trajectory, so the constant-variance and
variance-change conditions use the same underlying path. Within each paired
replication, the two conditions also use the same standardized Gaussian
noise innovations. They differ only because the noise standard deviation for
test observations 160 through 799 is $0.35$ in the variance-change condition
and remains $0.15$ in the constant-variance condition. As defined in
Section~\ref{subsec:sim_nonlinear_volterra}, observations 160 through 799 in
the variance-change condition form the high-noise period.

As a numerical-accuracy check, we repeated the complete 100-replication
experiment with the internal step halved to $h=0.0025$, retaining every 40th
state so that the observation grid remained unchanged. Across the 3,221
retained latent states, the maximum absolute trajectory difference was
$1.26\times10^{-3}$ and the root-mean-square difference was
$8.86\times10^{-4}$, or $0.044\%$ of the trajectory range. Across the
downstream summaries, the largest absolute change in a coverage quantity was
$3.6\times10^{-4}$ and the largest change in an interval-width quantity was
$6.3\times10^{-5}$. All variance-change recovery fractions were unchanged,
and the largest change in a conditional mean recovery delay was 2.24
observations. The independently selected ResCP configurations were also
identical. Thus, the retained trajectories and
downstream uncertainty results are insensitive to this further refinement of
the integration step.

\subsubsection{ResCP Implementation Details}
\label{app:sim_rescp_details}

We use the ResCP construction described in
Appendix~\ref{app:rescp_details}. For
Experiment 4 in Section~\ref{subsec:sim_nonlinear_volterra}, the principal difference is
the tuning protocol. The common reservoir and sampling hyperparameters are
selected once on an independent constant-variance pilot sequence by minimizing
validation Winkler score, and are then held fixed across all constant-variance
and variance-change evaluation replications. We report an all-history version
that retains every revealed residual and the distribution-shift treatment
described for ResCP, which initializes a calibration set with the $N=800$
pre-test residuals and keeps its size fixed by FIFO replacement during testing.
Both versions apply ResCP's linear recency weighting. The two implementations
therefore differ only in whether the calibration history grows or remains fixed
at 800 residuals. As for the other online methods, a forecast residual is
incorporated only after its corresponding outcome has been observed.

\subsubsection{Recovery Criterion and Summary Statistics}
\label{app:volterra_recovery}

The post-shift high-noise period consists of test observations 160--799. We
report coverage over the first 50 high-noise observations, coverage and mean
interval width over the complete high-noise period, and sustained-recovery
diagnostics.

Recovery delay is measured from the noise increase at test step 160. It is the
number of subsequent test observations until coverage over the trailing window
of 100 test observations first enters $[0.93,0.97]$ and remains in that
interval for every remaining test observation. Recovery fraction is the number
of replications satisfying this sustained-recovery criterion. This is a
stringent retrospective stability criterion rather than a direct estimate of
an intrinsic adaptation time: sampling variation can cause a calibrated
procedure to leave the band, and the reported recovery delay is defined only
among replications that eventually satisfy the criterion. We therefore
interpret recovery delay jointly with the corresponding recovery fraction.

Because the change occurs at test index 160, an absolute recovery index can be
obtained by adding 160 to the reported delay. For example, Adaptive CP's mean
delay of $546.6$ corresponds to mean test index $706.6$ among recovering
replications, while all-history ResCP's mean delay of $610.3$ corresponds to
index $770.3$.

\subsection{Experiment 5: Unequal-Budget Sensitivity Analysis}
\label{app:sim_budget_unequal}

As a sensitivity analysis, we fix the split-conformal budget at $N_C=400$ and
vary the Bayesian budget over
\(
N_B/N_C\in\{0.5,0.75,1,1.5,2\}.
\)
Split conformal is evaluated at
$\rho\in\{0.5,0.6,0.7,0.8\}$, while Bayesian samples are nested across budget
ratios within each replication. All procedures share the same test set.

Figure~\ref{fig:sim_budget_unequal} shows that the equal-budget Bayesian
advantage does not persist uniformly as its fitting budget changes. In the
high-dimensional setting with $p=300$, the first three Bayesian budgets give
$p/N_B=1.5$, $1$, and $0.75$, respectively. Thus the two smallest budgets
place the estimator at or beyond the interpolation threshold, producing wide,
conservative intervals and substantially worse prediction performance.
Performance improves sharply once the fitting sample becomes sufficiently
large relative to the feature dimension. The unequal-budget comparison is
therefore primarily an aspect-ratio sensitivity analysis rather than evidence
of uniform Bayesian dominance or inferiority.

\begin{figure*}[t]
    \centering
    \includegraphics[width=\textwidth]{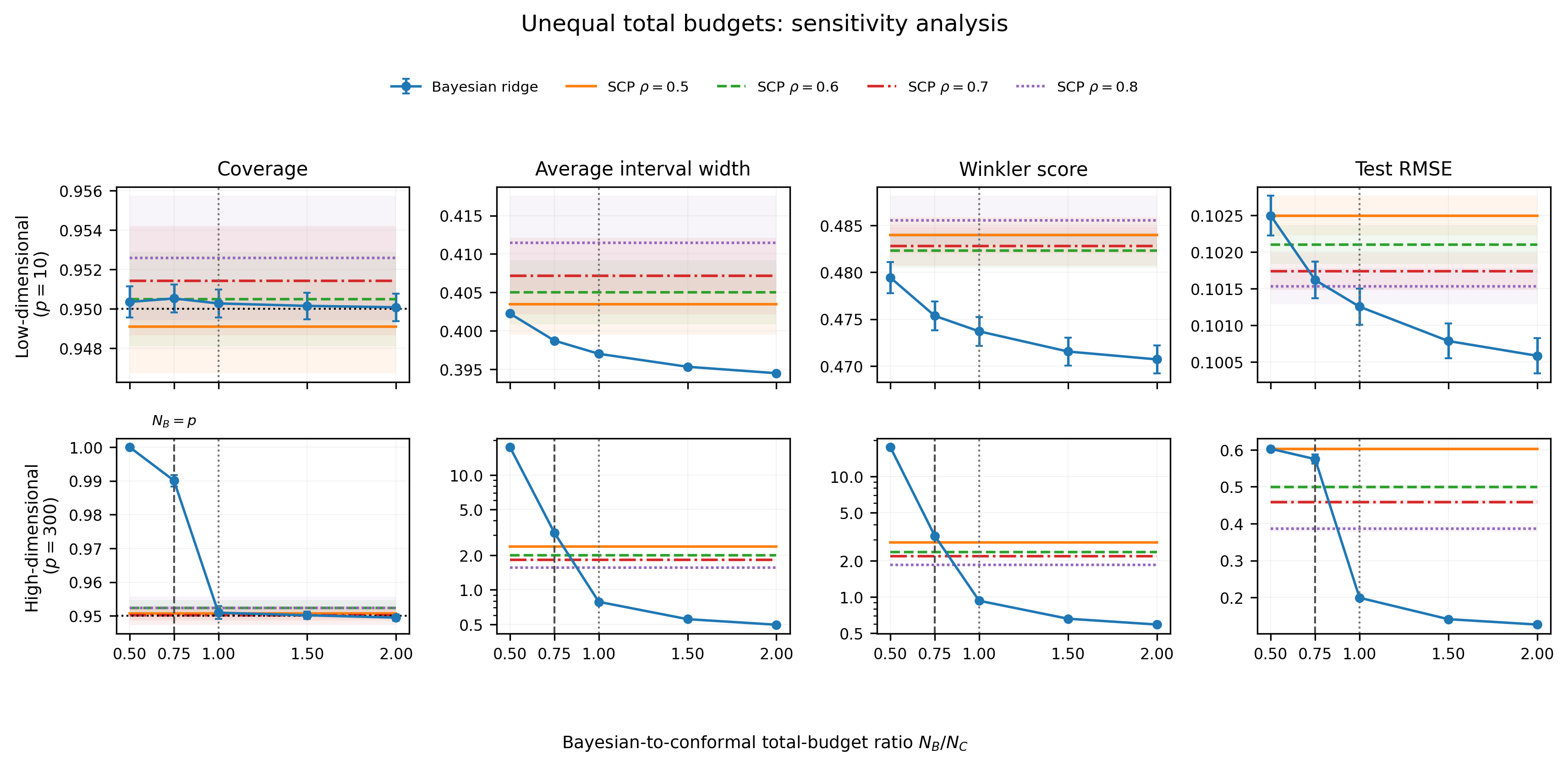}
    \caption{
    Sensitivity to unequal total data budgets. Split conformal uses
    $N_C=400$, while Bayesian ridge varies over
    $N_B/N_C\in\{0.5,0.75,1,1.5,2\}$. The vertical dotted line marks equal
    budgets, and the dashed line in the high-dimensional row marks
    $N_B=p=300$. Results are means over 200 paired replications with
    95\% confidence intervals.
    }
    \label{fig:sim_budget_unequal}
\end{figure*}

\section{Additional Real-Data Details}
\label{app:real_details}

This section provides implementation details and supplementary analyses for Section~\ref{sec:real_data}. We use the same chronological, leakage-safe evaluation protocol described in the main text. The material below records dataset-specific feature construction, tuning procedures, sequential update rules, and robustness analyses that are supplemental to the main text.

\subsection{Datasets, Preprocessing, and Evaluation Protocol}
\label{app:real_protocol}

\paragraph{LA ozone.}
We analyze the \texttt{LA\_O3} object contained in
\texttt{LA\_O3.RData}. The exact data file used in the experiments is included
with the supplementary materials. It provides daily ozone observations for a
location near Los Angeles at longitude $-118.742064$ and latitude
$34.0206066$. The forecasting target is the \texttt{value} field. Observations
run from January 1, 1980, through December 31, 2023, yielding 16,071 daily
values. If multiple records share a date, their \texttt{value} measurements
are averaged before construction of the regular daily grid. The resulting
series contains no missing dates.

\paragraph{Solar.}
We use the Solar-Energy benchmark distributed with the multivariate time-series
repository of Lai et al. The source matrix contains 137 solar-power series from
Alabama sampled every 10 minutes during 2006. The forecasting target is the
cross-sectional mean across the 137 series at each time point, yielding 52,560
observations. The source matrix used in the experiments contains no missing
entries. Because there are 144 observations per day and 1,008 per week, these
periods are also used for deterministic seasonal features.

\paragraph{Beijing PM$_{10}$.}
We use the Beijing Multi-Site Air Quality dataset from the UCI Machine Learning
Repository. The dataset contains hourly air-pollution measurements from 12
monitoring sites in Beijing from March 1, 2013, through February 28, 2017. The
target is the mean PM$_{10}$ concentration across the sites reporting at a
given hour, in $\mu\mathrm{g}\,\mathrm{m}^{-3}$. The regular hourly grid
contains 35,064 observations. After spatial averaging, 83 hours have no
available PM$_{10}$ measurement at any site; these values are filled causally
using the most recent earlier hourly value.

\paragraph{Exchange returns.}
We use the Exchange-Rate benchmark distributed with the multivariate
time-series repository of Lai et al. The dataset contains eight daily
exchange-rate series from 1990 to 2016, listed as Australia, Britain, Canada,
Switzerland, China, Japan, New Zealand, and Singapore. Following this ordering,
we analyze the first series, corresponding to Australia. Let $P_t$ denote the
observed exchange-rate level. We transform the positive level series to scaled
log returns,
\(
r_t=100\{\log(P_t)-\log(P_{t-1})\}.
\)
The 7,588 level observations therefore produce 7,587 daily returns. The
selected series contains no missing values.

\subsubsection{Chronological Splits and Leakage Controls}
\label{app:real_splits}

The primary fixed-origin analysis uses a chronological 40\% fitting, 40\% calibration, and 20\% test split. No observations are shuffled. Standardization parameters are estimated using the fitting block only and are
then applied unchanged to the calibration and test blocks.

Missing observations are filled using past information only. Specifically, an available observation is carried forward to subsequent missing entries, while an unavailable leading prefix is discarded rather than imputed from future values. Seasonal Fourier covariates are deterministic functions of the time index and therefore do not require estimation from future observations.

A feature-response row is assigned to a split according to its target time. For a direct $H$-step forecast with target at time $t$, the corresponding feature vector contains observations only through time $t-H$. We consider direct forecast horizons
\(
H\in\{1,3,7\}.
\)
The resulting processed series lengths and fixed-origin split sizes are shown
in Table~\ref{tab:real_split_sizes}. Counts vary slightly with $H$ because
direct $H$-step feature construction removes the final $H$ forecast origins.

\begin{table*}[t]
\centering
\small
\caption{Processed series lengths and chronological fitting/calibration/test
sample sizes. Each triplet is reported as
$n_{\rm fit}/n_{\rm cal}/n_{\rm test}$.}
\label{tab:real_split_sizes}
\begin{tabular}{lrrrr}
\toprule
Dataset & Observations & $H=1$ & $H=3$ & $H=7$ \\
\midrule
LA ozone & 16,071 & 6420/6420/3210 & 6419/6419/3210 & 6417/6417/3210 \\
Solar & 52,560 & 21009/21009/10506 & 21008/21008/10506 & 21007/21007/10504 \\
Beijing PM$_{10}$ & 35,064 & 14016/14016/7008 & 14015/14015/7008 & 14013/14013/7008 \\
Exchange returns & 7,587 & 3029/3029/1515 & 3028/3028/1515 & 3026/3026/1515 \\
\bottomrule
\end{tabular}
\end{table*}

For sequential uncertainty procedures, a forecast residual becomes available
to the online calibration state only after its corresponding target has been
observed. Thus, for $H>1$, online updating is delayed by the forecast horizon.

To assess sensitivity to the choice of forecast period, we additionally use three shifted chronological evaluation windows with 40\% fitting, 20\% calibration, and 20\% test blocks beginning at 0\%, 10\%, and 20\% of the
available feature rows. These windows overlap, so variability across them is interpreted as forecast-period sensitivity rather than sampling uncertainty.

\subsubsection{NGRC Feature Construction and Ridge Tuning}
\label{app:real_ngrc_details}

The NGRC ridge parameter is selected from
\(
\Lambda=
\{0.001,0.003,0.01,0.03,0.1,0.3,1,3,10\}.
\)
Selection uses a chronological 80/20 split within the fitting block. The
penalty minimizing validation RMSE is retained, after which the ridge readout
is refitted using the complete fitting block. The intercept is not penalized.
Accordingly, the empirical Bayesian interpretation corresponds to a flat prior
on the intercept and independent Gaussian priors on the remaining coefficients.
The theoretical derivations instead use the proper isotropic Gaussian prior on
all coordinates; the unpenalized intercept is an empirical modeling convention.

The dataset-specific lag lengths are 21 for LA ozone, 36 for Solar, 24 for
Beijing PM$_{10}$, and 14 for Exchange returns. The default NGRC feature map
contains the lag coordinates together with their unique quadratic products.
Solar additionally uses Fourier periods 144 and 1008, while Beijing
PM$_{10}$ uses periods 24 and 168, with two harmonics per period. LA ozone
and Exchange returns do not use explicit Fourier covariates.

For each horizon $H$, the NGRC readout is fitted directly to the corresponding
future target rather than recursively iterating a one-step predictor. Thus the
point model and ridge penalty are re-estimated separately for each forecast
horizon.

\subsubsection{Additional Point-Forecasting Baselines}
\label{app:real_point_baselines}

The exploratory architecture comparison uses the same standardized target,
chronological fitting/calibration/test blocks, and one-step forecast targets as
the NGRC analysis. Sequence-model lookback lengths equal the dataset-specific
NGRC delay lengths: 21 observations for LA ozone, 36 for Solar, 24 for Beijing
PM$_{10}$, and 14 for Exchange returns.

The ARIMA baseline is an ARIMA$(3,1,3)$ model with no deterministic trend. Its
parameters are estimated from the fitting prefix. During calibration and
testing, the fitted state is updated sequentially after each newly observed
response, without re-estimating the model order or using future responses.

The GRU consists of one univariate GRU layer with hidden dimension 32, followed
by a linear readout from the final hidden state. The causal Transformer first
embeds each scalar observation into dimension 32 and adds a learned positional
representation. It uses three Transformer encoder layers, two attention heads,
feed-forward dimension 64, GELU activations, dropout probability $0.1$, and an
upper-triangular causal attention mask. A linear readout is applied to the
representation at the final sequence position.

Both neural models are trained with the mean absolute error loss and the Adam
optimizer with learning rate $10^{-3}$. Training uses batch size 32, no
within-epoch shuffling, and at most 200 epochs. The final 20\% of the fitting
block is reserved as a chronological early-stopping set; the calibration block
is not used for neural-model selection. Training stops after 20 epochs without
an improvement exceeding $10^{-7}$ in validation loss. NumPy and PyTorch are
initialized with seed 123. Once training and early stopping are complete, all
point-model parameters are frozen before uncertainty quantification.

\subsubsection{Sequential Updating and Multi-Step Forecasts}
\label{app:real_multistep}

For a direct $H$-step forecast, a forecast residual becomes available only
when its target is observed. Accordingly, all sequential uncertainty methods
use an availability-safe delayed-update protocol.

The first $H-1$ calibration targets are treated as boundary guards because
their forecasts would have been issued before the fitting block closed. The
final $H-1$ calibration targets are not yet observable when the first test
interval is issued. Consequently, the initially available calibration-score
set contains $n_{\rm cal}-2(H-1)$ scores. Frozen split conformal prediction
uses this initial set and remains fixed throughout testing. Hyperparameter
selection uses the same availability-safe sequence.

For adaptive CP, time-weighted CP, and ResCP, the initially unavailable
calibration-tail residuals and subsequent test residuals enter the online
history only when their targets become observable. Ridge validation likewise
excludes the first $H-1$ validation targets following its inner fitting
boundary. Thus no fitting, tuning, or uncertainty procedure uses a response
before its observation time.

\subsection{Uncertainty-Quantification Implementation}
\label{app:real_uq_details}

\subsubsection{Bayesian Ridge and Split Conformal Prediction}
\label{app:real_bayes_scp}

The Bayesian predictive interval estimates its residual scale from the fitting
block only:
\[
\widehat{\tau}_{\mathrm{fit}}
=
\left[
\frac{1}{n_{\mathrm{fit}}}
\sum_{i\in\mathrm{fit}}
(y_i-\widehat y_i)^2
\right]^{1/2}.
\]
Neither calibration nor test observations are used to re-estimate this scale.
The resulting interval therefore retains the fitting-period noise estimate
throughout evaluation.

Ordinary split conformal prediction uses the absolute residuals from the
availability-safe calibration-score set available when the first test forecast
is issued. For $H=1$, this is the complete calibration block. If
$S_{(1)}\le\cdots\le S_{(m)}$ are the ordered available scores, we use
$k_\alpha=\lceil(m+1)(1-\alpha)\rceil$ with the convention
$S_{(m+1)}=+\infty$, and set $\widehat q_{1-\alpha}=S_{(k_\alpha)}$. The
split-conformal quantile is computed before testing and remains fixed during
the test period. Consequently, ordinary SCP does not adapt to test-period
changes in the residual distribution.

The asymmetric SCP baseline instead uses signed calibration residuals and
constructs separate lower and upper residual quantiles with equal nominal tail
probabilities. It uses the same fitted NGRC point predictor and calibration
sample as symmetric SCP. Formally, define the signed calibration residuals
\(
e_i = y_i-\widehat y_i.
\)
Let $\widehat q_{\alpha/2}$ and
$\widehat q_{1-\alpha/2}$ denote the corresponding empirical lower and upper
quantiles, using the same finite-sample conformal quantile convention as the
symmetric SCP interval. The asymmetric interval is
\(
C_{\mathrm{ASCP}}(r_*)
=
\left[
r_*^\top\widehat w+\widehat q_{\alpha/2},
\;
r_*^\top\widehat w+\widehat q_{1-\alpha/2}
\right].
\)
This baseline is included to isolate residual
asymmetry from the additional localization, temporal weighting, and sequential adaptation used by ResCP.

\subsubsection{Adaptive and Time-Weighted Conformal Prediction}
\label{app:real_adaptive_details}

The adaptive conformal learning rate is selected from
\(
\mathcal E=
\{0.001,0.003,0.005,0.01,0.02,0.05\},
\)
and the time-weighted conformal decay parameter is selected from
\(
\mathcal R=
\{0.95,0.98,0.99,0.995,0.997,0.999\}.
\)

For both procedures, the first 60\% of the calibration block supplies the
initial sequential residual history and the final 40\% is used for
hyperparameter validation. The candidate minimizing validation Winkler score
is retained. The selected hyperparameter is fixed before test evaluation,
while the residual history continues to update sequentially using only
outcomes that have already become observable.

Adaptive CP and time-weighted CP are used here as adaptive empirical baselines.
The projection of $\alpha_t$ onto $[0.001,0.999]$ differs from the unprojected
adaptive-conformal recursion for which a telescoping long-run coverage argument
is available, so we do not claim that this argument transfers unchanged to the
projected implementation. Likewise, the time-weighted method normalizes
weights over revealed historical residuals only and does not assign test-point
mass at infinity. It is a weighted residual-quantile baseline rather than the
finite-sample weighted-conformal construction, and no corresponding
finite-sample coverage guarantee is claimed.

For direct $H$-step forecasting, neither adaptive nor time-weighted conformal
prediction uses a score immediately after its forecast is issued. A test
residual and, for adaptive conformal prediction, its coverage error enter the
online history only after the corresponding target becomes observable, giving
an $H$-step feedback delay.

\subsubsection{Reservoir Conformal Prediction}
\label{app:rescp_details}

We additionally evaluate Reservoir Conformal Prediction (ResCP)
\cite{Neglia2026ResCP}. ResCP encodes the revealed signed-residual history in a
random reservoir and compares the current reservoir state with previously
recorded states. Sampling probabilities combine a temperature-scaled softmax
of cosine similarities with the official implementation's linear recency
weights. Residuals are sampled with replacement according to these
probabilities, and empirical lower and upper quantiles of the sampled
residuals define a potentially asymmetric interval around the same point
forecast used by the other uncertainty-quantification methods.

We use the official stochastic sampler and reservoir implementation through a
local residual-bundle adapter. All primary real-data experiments use a fixed
reservoir configuration: reservoir dimension 512, spectral radius $1.2$, leak
rate $0.9$, input scaling $0.25$, connectivity $0.2$, and zero reservoir
noise. These reservoir parameters are not tuned. Residual inputs to the
reservoir are standardized using only calibration residuals available at the
corresponding forecast origin, while sampled residuals remain on the original
standardized-target scale. We use cosine similarity, linear recency decay, and
100 candidate quantile pairs. The adaptive-miscoverage parameter internal to
ResCP is set to zero.

Only the sampling temperature and residual-history policy are selected.
The temperature grid is
$T\in\{0.05,0.10,0.15,0.25\}$.
The candidate numerical history windows are \\
$W\in\{100,250,500,1000,1500,3200,3800,7600\}$.
An all-history policy is included as a separate candidate.

For a direct $H$-step forecast, the first $H-1$ calibration targets are
boundary guards and the final $H-1$ calibration outcomes are excluded from
hyperparameter selection because they are not yet observable when the first
test interval is issued. Temperature and history policy are selected on the
final 10\% of this availability-safe calibration sequence by first minimizing
validation Winkler score. Ties in Winkler score are broken by smaller absolute
validation coverage error, with any remaining ties resolved by candidate-grid
order. The selected configuration is frozen before test evaluation.

During testing, a forecast residual and its associated forecast-origin
reservoir state enter the ResCP history only after the corresponding target
has become observable. Each primary dataset--horizon configuration uses one
prespecified deterministic seed for the reservoir and stochastic sampler. We
separately assess stochastic sensitivity by repeating the final ResCP
evaluation over five seeds while holding the point forecasts, data splits, and
selected hyperparameters fixed; see
Appendix~\ref{app:real_rescp_seed_sensitivity}.

\subsection{Hyperparameter Selection}
\label{app:real_hyperparameters}

All hyperparameters are selected chronologically without using test responses.
For NGRC, the ridge penalty is chosen by validation RMSE on the final 20\% of
the fitting block. For adaptive and time-weighted conformal prediction, the
first 60\% of the availability-safe calibration sequence supplies the initial
residual history and the final 40\% is used for validation. The candidate
minimizing validation Winkler score is retained, with ties resolved by
candidate-grid order. ResCP uses the first 90\% of the availability-safe
calibration sequence as its initial history and the final 10\% for
chronological validation. ResCP candidates are ordered first by validation
Winkler score, then by absolute validation coverage error, and finally by
candidate-grid order.

The ridge grid is
$\lambda\in\{0.001,0.003,0.01,0.03,0.1,0.3,1,3,10\}$,
and the adaptive-CP grid is
$\eta\in\{0.001,0.003,0.005,0.01,0.02,0.05\}$.
For time-weighted CP, the candidate decay parameters are
$\rho\in\{0.95,0.98,0.99,0.995,0.997,0.999\}$.
For ResCP,
$T\in\{0.05,0.10,0.15,0.25\}$ and
$W\in\{100,250,500,1000,1500,3200,3800,7600,\mathrm{all}\}$.
A numerical ResCP window is omitted whenever it exceeds the residual history
available in the chronological tuning prefix.

\begin{table*}[t]
\centering
\small
\caption{Hyperparameters selected for the primary fixed-origin experiments at
95\% nominal coverage. Here $T$ is the ResCP sampling temperature and $W$ is
its residual-history window.}
\label{tab:real_selected_hyperparameters}
\begin{tabular}{lrrrrr}
\toprule
Dataset and horizon & $\lambda$ & Adaptive $\eta$ & TWCP $\rho$ & ResCP $T$ & ResCP $W$ \\
\midrule
Beijing PM$_{10}$, $H=1$ & 0.003 & 0.05  & 0.95  & 0.10 & all \\
Beijing PM$_{10}$, $H=3$ & 0.001 & 0.02  & 0.95  & 0.15 & all \\
Beijing PM$_{10}$, $H=7$ & 0.001 & 0.01  & 0.99  & 0.15 & all \\
Exchange returns, $H=1$  & 10    & 0.01  & 0.95  & 0.15 & 1500 \\
Exchange returns, $H=3$  & 10    & 0.02  & 0.95  & 0.25 & 100 \\
Exchange returns, $H=7$  & 10    & 0.01  & 0.95  & 0.15 & 1000 \\
LA ozone, $H=1$           & 0.03  & 0.001 & 0.999 & 0.10 & 3200 \\
LA ozone, $H=3$           & 1     & 0.001 & 0.999 & 0.10 & all \\
LA ozone, $H=7$           & 0.3   & 0.001 & 0.999 & 0.25 & 3800 \\
Solar, $H=1$              & 0.001 & 0.05  & 0.95  & 0.05 & all \\
Solar, $H=3$              & 0.001 & 0.001 & 0.999 & 0.05 & 7600 \\
Solar, $H=7$              & 0.001 & 0.001 & 0.999 & 0.05 & 7600 \\
\bottomrule
\end{tabular}
\end{table*}

\subsection{Evaluation Metrics}
\label{app:real_metrics}

Let $\{(y_i,\widehat y_i,L_i,U_i)\}_{i=1}^n$ denote the test responses, point
forecasts, and lower and upper prediction limits. Point accuracy is measured by
\[
\operatorname{RMSE}
=
\left\{
\frac1n\sum_{i=1}^n(y_i-\widehat y_i)^2
\right\}^{1/2}.
\]
Empirical coverage is
\(
\widehat{\operatorname{Cov}}
=
\frac1n\sum_{i=1}^n\mathbf 1\{L_i\le y_i\le U_i\},
\)
and mean interval width is
\(
\widehat{\operatorname{Width}}
=
\frac1n\sum_{i=1}^n(U_i-L_i).
\)
We report signed coverage error in percentage points as
\(
\Delta_{\rm cov}=100\{\widehat{\operatorname{Cov}}-(1-\alpha)\}.
\)
Positive values indicate overcoverage and negative values indicate
undercoverage.

For one observation, the Winkler interval score is
\[
W_{\alpha,i}
=
\begin{cases}
U_i-L_i, & L_i\le y_i\le U_i,\\
(U_i-L_i)+\frac{2}{\alpha}(L_i-y_i), & y_i<L_i,\\
(U_i-L_i)+\frac{2}{\alpha}(y_i-U_i), & y_i>U_i.
\end{cases}
\]
The reported Winkler score is $n^{-1}\sum_{i=1}^n W_{\alpha,i}$; smaller
values are better because the score penalizes both wide intervals and missed
observations.

\subsection{Supplementary Theory-Facing Diagnostics}
\label{app:real_theory_details}

The main text reports the quantities most directly connected to the
fixed-dimensional Bayesian--conformal comparison, including
$p/n_{\mathrm{fit}}$, $d_{\mathrm{eff}}/n_{\mathrm{fit}}$, blockwise residual
RMS values, and
\(
\rho_q
=
\widehat q_{1-\alpha,\mathrm{cal}} / \{z_{1-\alpha/2}\widehat\tau_{\mathrm{fit}}\}.
\)

For reference, the one-step fitting and calibration sample sizes and feature
dimensions are
\[
\begin{array}{lrrr}
\toprule
\text{Dataset} & n_{\mathrm{fit}} & n_{\mathrm{cal}} & p\\
\midrule
\text{LA ozone}          & 6420  & 6420  & 253\\
\text{Solar}             & 21009 & 21009 & 711\\
\text{Beijing PM}_{10}   & 14016 & 14016 & 333\\
\text{Exchange returns}  & 3029  & 3029  & 120\\
\bottomrule
\end{array}
\]
where the feature dimension includes the intercept and any deterministic
seasonal covariates used by the corresponding NGRC specification.

These dimensions confirm that the empirical applications are much closer to
the fixed- or effective-low-dimensional regime than to the proportional-growth
benchmark developed in Section~\ref{sec:theory}. The proportional theory is
therefore used to clarify possible finite-sample mechanisms rather than as a
literal asymptotic model for these four datasets.

\subsection{Repeated chronological evaluation}
\label{app:real_rolling}

Table~\ref{tab:real_main} provides the detailed comparison for one fixed
forecast period. To assess whether its conclusions depend on that particular
period, we separately repeat the comparison over three shifted 40\% fit/20\%
calibration/20\% test windows. Table~\ref{tab:real_rolling} reports the mean
and standard deviation across these forecast periods. Because this robustness
analysis uses a shorter calibration block, its values should not be compared
row-for-row with Table~\ref{tab:real_main}; its purpose is to quantify
sensitivity to temporal shifts.

\begin{table*}[t]
\centering
\small
\caption{Repeated chronological one-step results at 95\% nominal coverage.
Entries are means $\pm$ standard deviations over three shifted forecast
periods. RMSE is reported once per dataset because all UQ methods use the
same NGRC point predictor within each origin. Because the windows overlap,
the standard deviations quantify forecast-period sensitivity rather than
sampling uncertainty or standard errors.}
\label{tab:real_rolling}
\begin{tabular}{llrrrr}
\toprule
Dataset & Method & RMSE & Coverage & Width & Winkler \\
\midrule

\multirow{6}{*}{LA ozone}
& Bayesian ridge
& \multirow{6}{*}{$0.383 \pm 0.003$}
& $0.981 \pm 0.006$ & $1.927 \pm 0.123$ & $2.054 \pm 0.087$ \\
& SCP
& & $0.959 \pm 0.008$ & $1.639 \pm 0.057$ & $1.929 \pm 0.008$ \\
& Asymmetric SCP
& & $0.958 \pm 0.006$ & $1.630 \pm 0.047$ & $1.927 \pm 0.011$ \\
& Adaptive CP
& & $0.953 \pm 0.001$ & $1.587 \pm 0.005$ & $1.921 \pm 0.011$ \\
& Time-weighted CP
& & $0.953 \pm 0.001$ & $1.586 \pm 0.002$ & $1.921 \pm 0.011$ \\
& ResCP
& & $0.947 \pm 0.005$ & $1.542 \pm 0.032$ & $1.936 \pm 0.010$ \\
\midrule

\multirow{6}{*}{Solar}
& Bayesian ridge
& \multirow{6}{*}{$0.034 \pm 0.008$}
& $0.905 \pm 0.064$ & $0.111 \pm 0.004$ & $0.233 \pm 0.081$ \\
& SCP
& & $0.920 \pm 0.038$ & $0.117 \pm 0.007$ & $0.221 \pm 0.061$ \\
& Asymmetric SCP
& & $0.922 \pm 0.030$ & $0.116 \pm 0.008$ & $0.217 \pm 0.055$ \\
& Adaptive CP
& & $0.943 \pm 0.003$ & $0.119 \pm 0.022$ & $0.168 \pm 0.009$ \\
& Time-weighted CP
& & $0.875 \pm 0.015$ & $0.123 \pm 0.026$ & $0.182 \pm 0.032$ \\
& ResCP
& & $0.922 \pm 0.025$ & $0.057 \pm 0.006$ & $0.086 \pm 0.005$ \\
\midrule

\multirow{6}{*}{Beijing PM$_{10}$}
& Bayesian ridge
& \multirow{6}{*}{$0.203 \pm 0.028$}
& $0.970 \pm 0.003$ & $0.879 \pm 0.046$ & $1.260 \pm 0.127$ \\
& SCP
& & $0.953 \pm 0.007$ & $0.723 \pm 0.097$ & $1.239 \pm 0.157$ \\
& Asymmetric SCP
& & $0.953 \pm 0.007$ & $0.723 \pm 0.101$ & $1.239 \pm 0.158$ \\
& Adaptive CP
& & $0.945 \pm 0.000$ & $0.711 \pm 0.027$ & $1.059 \pm 0.075$ \\
& Time-weighted CP
& & $0.936 \pm 0.002$ & $0.684 \pm 0.047$ & $1.073 \pm 0.103$ \\
& ResCP
& & $0.948 \pm 0.007$ & $0.614 \pm 0.013$ & $0.966 \pm 0.097$ \\
\midrule

\multirow{6}{*}{Exchange returns}
& Bayesian ridge
& \multirow{6}{*}{$1.318 \pm 0.268$}
& $0.930 \pm 0.062$ & $4.106 \pm 0.124$ & $7.653 \pm 2.801$ \\
& SCP
& & $0.957 \pm 0.060$ & $5.416 \pm 0.814$ & $8.276 \pm 2.010$ \\
& Asymmetric SCP
& & $0.956 \pm 0.059$ & $5.415 \pm 0.823$ & $8.280 \pm 1.993$ \\
& Adaptive CP
& & $0.951 \pm 0.011$ & $4.780 \pm 1.606$ & $6.997 \pm 1.948$ \\
& Time-weighted CP
& & $0.946 \pm 0.003$ & $4.591 \pm 1.468$ & $6.541 \pm 1.548$ \\
& ResCP
& & $0.925 \pm 0.050$ & $4.168 \pm 0.865$ & $7.355 \pm 2.268$ \\
\bottomrule
\end{tabular}
\end{table*}

The repeated evaluation changes the emphasis relative to a single favorable test period. Adaptive CP is comparatively stable across forecast periods. Mean 95\% coverage ranges from 0.943 to 0.953, with small between-origin variation for ozone, Solar, and Beijing. Bayesian ridge is conservative for ozone and Beijing but undercovers on average for Solar and Exchange returns, consistent
with the block-to-block RMS changes in Table~\ref{tab:real_theory_diagnostics}. SCP and asymmetric SCP are nearly indistinguishable for these series, so asymmetry alone does not explain the larger differences observed for ResCP.
The Exchange results have much larger between-origin variation for every nonadaptive method, illustrating the practical importance of temporal shift.

\subsection{Forecast-horizon robustness}
\label{app:real_horizon}

Using the same fixed-origin split, we additionally evaluate direct forecast
horizons $H\in\{1,3,7\}$ to assess whether the one-step conclusions persist as
the prediction horizon increases. For $H>1$, every sequential method uses the
availability-safe delayed-update protocol described above: a residual enters
the online calibration state only after its target has been observed.

Under this protocol, Adaptive CP remains close to nominal coverage across
horizons. ResCP has relatively low Winkler scores for LA ozone and Beijing
PM$_{10}$, although its Exchange-return coverage is $0.894$ at $H=3$ and
$0.931$ at $H=7$. For Solar, uncertainty grows rapidly with the horizon;
ResCP coverage is $0.915$ at $H=3$ and $0.919$ at $H=7$, while ordinary SCP
also undercovers. ResCP hyperparameter selection considers only numerical
history windows that do not exceed the residual history available in the
chronological tuning prefix. Under this protocol, the LA-ozone $H=7$
configuration selects $T=0.25$ and $W=3800$. Thus good one-step performance
does not imply uniform multi-horizon calibration.

\begin{table}[t]
\centering
\caption{Complete availability-safe multi-horizon NGRC
uncertainty-quantification results at nominal 95\% coverage.}
\label{tab:real_horizon_full}

\begin{minipage}[t]{0.48\textwidth}
\centering
\textbf{LA ozone}

\vspace{2pt}
\small
\begin{tabular}{clrrr}
\toprule
$H$ & Method & Cov. & Width & Winkler \\
\midrule
1 & Bayesian ridge   & 0.993 & 2.047 & 2.083 \\
  & SCP              & 0.963 & 1.573 & 1.795 \\
  & Adaptive CP      & 0.954 & 1.485 & 1.784 \\
  & Time-weighted CP & 0.954 & 1.477 & 1.782 \\
  & ResCP            & 0.929 & 1.369 & 1.797 \\
\addlinespace[1pt]
3 & Bayesian ridge   & 0.997 & 2.669 & 2.691 \\
  & SCP              & 0.967 & 1.874 & 2.102 \\
  & Adaptive CP      & 0.957 & 1.778 & 2.083 \\
  & Time-weighted CP & 0.956 & 1.775 & 2.082 \\
  & ResCP            & 0.953 & 1.761 & 2.102 \\
\addlinespace[1pt]
7 & Bayesian ridge   & 0.996 & 2.726 & 2.752 \\
  & SCP              & 0.966 & 1.916 & 2.171 \\
  & Adaptive CP      & 0.956 & 1.816 & 2.151 \\
  & Time-weighted CP & 0.955 & 1.812 & 2.152 \\
  & ResCP            & 0.950 & 1.765 & 2.123 \\
\bottomrule
\end{tabular}
\end{minipage}
\hfill
\begin{minipage}[t]{0.48\textwidth}
\centering
\textbf{Solar}

\vspace{2pt}
\small
\begin{tabular}{clrrr}
\toprule
$H$ & Method & Cov. & Width & Winkler \\
\midrule
1 & Bayesian ridge   & 0.941 & 0.115 & 0.205 \\
  & SCP              & 0.940 & 0.115 & 0.205 \\
  & Adaptive CP      & 0.943 & 0.109 & 0.167 \\
  & Time-weighted CP & 0.857 & 0.113 & 0.176 \\
  & ResCP            & 0.961 & 0.058 & 0.075 \\
\addlinespace[1pt]
3 & Bayesian ridge   & 0.932 & 0.281 & 0.513 \\
  & SCP              & 0.923 & 0.267 & 0.520 \\
  & Adaptive CP      & 0.948 & 0.323 & 0.509 \\
  & Time-weighted CP & 0.949 & 0.326 & 0.509 \\
  & ResCP            & 0.915 & 0.168 & 0.256 \\
\addlinespace[1pt]
7 & Bayesian ridge   & 0.917 & 0.600 & 1.035 \\
  & SCP              & 0.903 & 0.560 & 1.067 \\
  & Adaptive CP      & 0.948 & 0.711 & 1.018 \\
  & Time-weighted CP & 0.947 & 0.698 & 1.014 \\
  & ResCP            & 0.919 & 0.417 & 0.578 \\
\bottomrule
\end{tabular}
\end{minipage}

\vspace{0.8em}

\begin{minipage}[t]{0.48\textwidth}
\centering
\textbf{Beijing PM$_{10}$}

\vspace{2pt}
\small
\begin{tabular}{clrrr}
\toprule
$H$ & Method & Cov. & Width & Winkler \\
\midrule
1 & Bayesian ridge   & 0.971 & 0.909 & 1.174 \\
  & SCP              & 0.960 & 0.776 & 1.138 \\
  & Adaptive CP      & 0.947 & 0.726 & 1.048 \\
  & Time-weighted CP & 0.936 & 0.677 & 1.030 \\
  & ResCP            & 0.954 & 0.656 & 0.933 \\
\addlinespace[1pt]
3 & Bayesian ridge   & 0.962 & 1.851 & 2.622 \\
  & SCP              & 0.961 & 1.856 & 2.645 \\
  & Adaptive CP      & 0.946 & 1.709 & 2.429 \\
  & Time-weighted CP & 0.920 & 1.515 & 2.459 \\
  & ResCP            & 0.951 & 1.496 & 2.208 \\
\addlinespace[1pt]
7 & Bayesian ridge   & 0.949 & 2.725 & 4.208 \\
  & SCP              & 0.961 & 3.095 & 4.288 \\
  & Adaptive CP      & 0.944 & 2.748 & 3.790 \\
  & Time-weighted CP & 0.938 & 2.614 & 3.871 \\
  & ResCP            & 0.953 & 2.436 & 3.322 \\
\bottomrule
\end{tabular}
\end{minipage}
\hfill
\begin{minipage}[t]{0.48\textwidth}
\centering
\textbf{Exchange returns}

\vspace{2pt}
\small
\begin{tabular}{clrrr}
\toprule
$H$ & Method & Cov. & Width & Winkler \\
\midrule
1 & Bayesian ridge   & 0.946 & 3.981 & 6.468 \\
  & SCP              & 0.983 & 5.407 & 6.997 \\
  & Adaptive CP      & 0.954 & 4.015 & 6.396 \\
  & Time-weighted CP & 0.939 & 4.281 & 6.242 \\
  & ResCP            & 0.934 & 3.793 & 6.640 \\
\addlinespace[1pt]
3 & Bayesian ridge   & 0.946 & 3.976 & 6.491 \\
  & SCP              & 0.983 & 5.423 & 6.998 \\
  & Adaptive CP      & 0.952 & 4.037 & 6.364 \\
  & Time-weighted CP & 0.938 & 4.274 & 6.830 \\
  & ResCP            & 0.894 & 3.335 & 7.202 \\
\addlinespace[1pt]
7 & Bayesian ridge   & 0.947 & 3.980 & 6.490 \\
  & SCP              & 0.983 & 5.423 & 6.999 \\
  & Adaptive CP      & 0.954 & 4.007 & 6.403 \\
  & Time-weighted CP & 0.937 & 4.282 & 6.879 \\
  & ResCP            & 0.931 & 3.565 & 6.585 \\
\bottomrule
\end{tabular}
\end{minipage}
\end{table}

\subsection{Calibration across nominal levels}
\label{app:real_calibration}
Figure~\ref{fig:real_calibration_all} compares empirical and nominal coverage
at nominal levels 80\%, 85\%, 90\%, 95\%, and 97.5\%. A method whose 95\% result
appears satisfactory can still show systematic miscalibration elsewhere on
the curve. No procedure is uniformly calibrated across datasets and levels.

\begin{figure*}[t]
  \centering
  \includegraphics[width=0.88\textwidth]{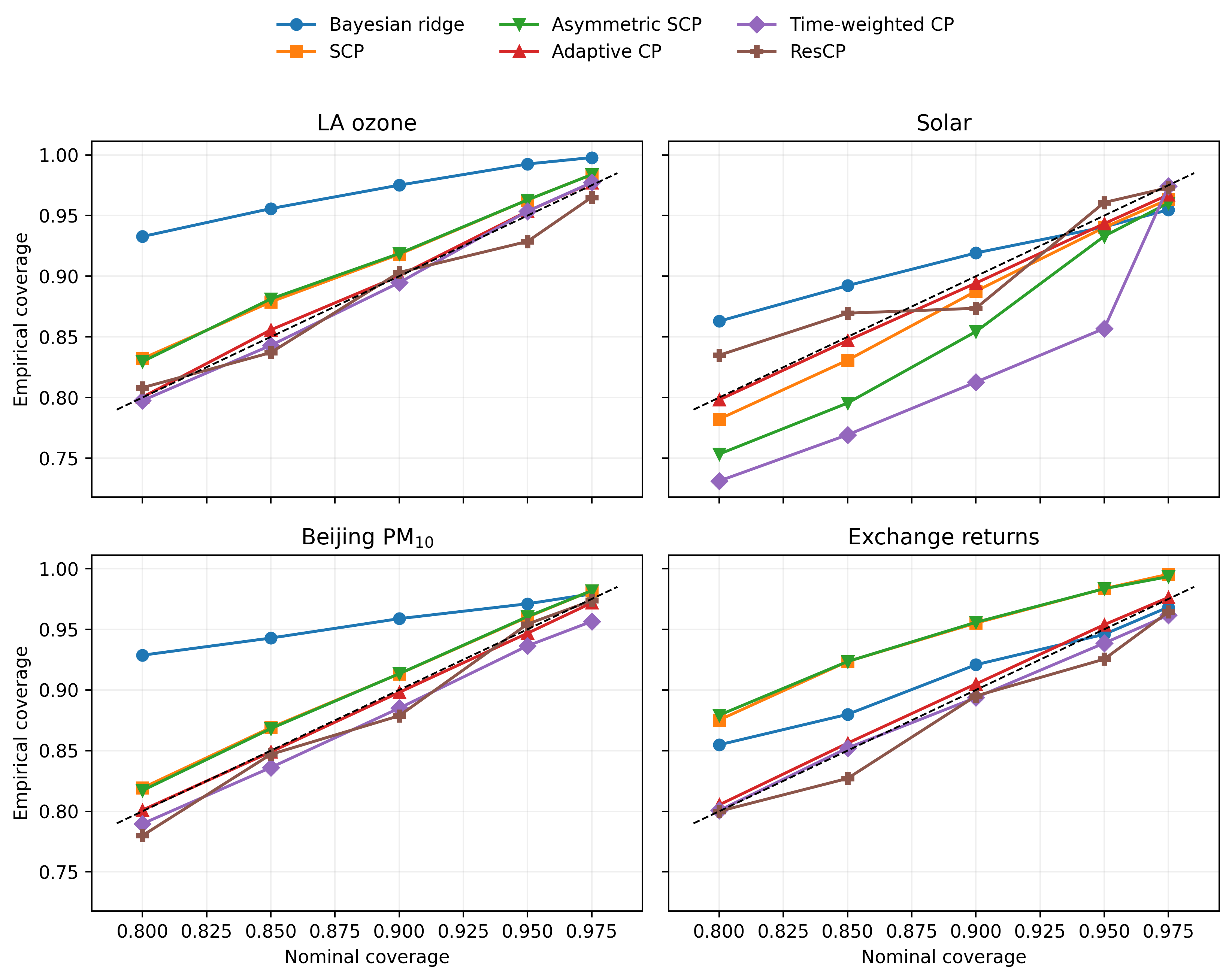}
  \caption{Fixed-origin empirical versus nominal coverage for all six NGRC
  UQ procedures. The dashed diagonal denotes exact marginal calibration.}
  \label{fig:real_calibration_all}
\end{figure*}

\subsection{Feature-Construction Robustness}
\label{app:real_feature_ablation}

We examine whether the main Bayesian--SCP conclusions depend strongly on the
default quadratic NGRC representation. The ablation compares linear lags,
quadratic NGRC features, three single-scale memory augmentations, and a
multi-scale memory representation.

Table~\ref{tab:real_feature_uq_ablation} reports point-prediction RMSE together
with coverage, interval width, and Winkler score for Bayesian ridge and
ordinary SCP. This analysis is intended as a robustness check rather than as a
search for an optimal feature representation.

\begin{table*}[t]
\centering
\scriptsize
\setlength{\tabcolsep}{4pt}
\renewcommand{\arraystretch}{0.95}
\caption{
Feature-construction robustness at $H=1$ and 95\% nominal coverage.
RMSE is reported once per feature construction because Bayesian ridge and
SCP use the same fitted point predictor.
}
\label{tab:real_feature_uq_ablation}
\begin{tabular}{llrccc ccc}
\toprule
& & & \multicolumn{3}{c}{Bayesian ridge}
& \multicolumn{3}{c}{SCP} \\
\cmidrule(lr){4-6}\cmidrule(lr){7-9}
Dataset & Features & RMSE
& Coverage & Width & Winkler
& Coverage & Width & Winkler \\
\midrule

\multirow{6}{*}{LA ozone}
& Linear lags        & 0.357 & 0.993 & 2.098 & 2.132 & 0.964 & 1.583 & 1.806 \\
& Quadratic NGRC     & 0.360 & 0.993 & 2.047 & 2.083 & 0.963 & 1.573 & 1.795 \\
& Short-memory NGRC  & 0.360 & 0.993 & 2.047 & 2.082 & 0.963 & 1.574 & 1.796 \\
& Medium-memory NGRC & 0.360 & 0.993 & 2.047 & 2.082 & 0.963 & 1.572 & 1.795 \\
& Long-memory NGRC   & 0.355 & 0.992 & 2.040 & 2.079 & 0.963 & 1.555 & 1.783 \\
& Multi-scale NGRC   & 0.354 & 0.992 & 2.036 & 2.076 & 0.962 & 1.558 & 1.783 \\
\midrule

\multirow{6}{*}{Solar}
& Linear lags        & 0.033 & 0.936 & 0.120 & 0.227 & 0.933 & 0.116 & 0.228 \\
& Quadratic NGRC     & 0.031 & 0.941 & 0.115 & 0.205 & 0.940 & 0.115 & 0.205 \\
& Short-memory NGRC  & 0.031 & 0.941 & 0.114 & 0.205 & 0.940 & 0.114 & 0.205 \\
& Medium-memory NGRC & 0.031 & 0.941 & 0.115 & 0.205 & 0.940 & 0.114 & 0.205 \\
& Long-memory NGRC   & 0.031 & 0.941 & 0.115 & 0.205 & 0.940 & 0.114 & 0.205 \\
& Multi-scale NGRC   & 0.031 & 0.941 & 0.114 & 0.205 & 0.940 & 0.114 & 0.205 \\
\midrule

\multirow{6}{*}{Beijing PM$_{10}$}
& Linear lags        & 0.179 & 0.973 & 0.931 & 1.185 & 0.958 & 0.751 & 1.126 \\
& Quadratic NGRC     & 0.182 & 0.971 & 0.909 & 1.174 & 0.960 & 0.776 & 1.138 \\
& Short-memory NGRC  & 0.182 & 0.971 & 0.909 & 1.174 & 0.960 & 0.775 & 1.138 \\
& Medium-memory NGRC & 0.182 & 0.971 & 0.909 & 1.174 & 0.960 & 0.773 & 1.137 \\
& Long-memory NGRC   & 0.182 & 0.971 & 0.909 & 1.174 & 0.960 & 0.772 & 1.137 \\
& Multi-scale NGRC   & 0.182 & 0.971 & 0.909 & 1.174 & 0.960 & 0.773 & 1.137 \\
\midrule

\multirow{6}{*}{Exchange returns}
& Linear lags        & 1.276 & 0.948 & 3.912 & 6.401 & 0.983 & 5.405 & 6.983 \\
& Quadratic NGRC     & 1.284 & 0.946 & 3.981 & 6.468 & 0.983 & 5.407 & 6.997 \\
& Short-memory NGRC  & 1.284 & 0.946 & 3.981 & 6.466 & 0.983 & 5.404 & 6.994 \\
& Medium-memory NGRC & 1.284 & 0.946 & 3.981 & 6.467 & 0.983 & 5.407 & 6.997 \\
& Long-memory NGRC   & 1.284 & 0.946 & 3.981 & 6.467 & 0.983 & 5.403 & 6.994 \\
& Multi-scale NGRC   & 1.283 & 0.946 & 3.980 & 6.466 & 0.983 & 5.405 & 6.994 \\
\bottomrule
\end{tabular}
\end{table*}

The qualitative Bayesian--SCP comparison is largely stable across these
representations. Changes in feature construction can alter point error and
interval metrics, but the differences among the uncertainty procedures are
not explained solely by fine tuning of the NGRC representation. Similar point
RMSE also does not imply identical interval behavior.

\subsection{ResCP Stochastic-Seed Sensitivity}
\label{app:real_rescp_seed_sensitivity}

The primary ResCP results use one prespecified seed for the random reservoir
and stochastic residual sampler in each dataset--horizon configuration. To
assess whether the qualitative conclusions depend on that computational
randomness, we repeat the final ResCP evaluation over five seeds while holding
the NGRC point forecasts, chronological split, selected temperature, and
selected history policy fixed. Thus this check isolates seed sensitivity rather
than combining computational randomness with hyperparameter reselection.

\begin{table*}[t]
\centering
\small
\caption{Sensitivity of ResCP to its stochastic seed. Entries are means
$\pm$ standard deviations over five seeds; the coverage range gives the
minimum and maximum empirical coverage across those seeds. Standard deviations
describe computational seed sensitivity, not sampling standard errors.}
\label{tab:real_rescp_seed_sensitivity}
\begin{tabular}{llrrrr}
\toprule
Dataset & Horizon & Coverage & Coverage range & Width & Winkler score \\
\midrule
LA ozone & $H=1$ & $0.933\pm0.003$ & $[0.929,0.936]$ & $1.375\pm0.006$ & $1.818\pm0.017$ \\
& $H=3$ & $0.954\pm0.001$ & $[0.953,0.955]$ & $1.755\pm0.012$ & $2.095\pm0.006$ \\
& $H=7$ & $0.952\pm0.001$ & $[0.950,0.953]$ & $1.772\pm0.008$ & $2.111\pm0.012$ \\
\midrule
Solar & $H=1$ & $0.960\pm0.003$ & $[0.955,0.963]$ & $0.059\pm0.001$ & $0.078\pm0.002$ \\
& $H=3$ & $0.928\pm0.010$ & $[0.915,0.937]$ & $0.183\pm0.012$ & $0.258\pm0.009$ \\
& $H=7$ & $0.928\pm0.009$ & $[0.919,0.936]$ & $0.419\pm0.008$ & $0.576\pm0.010$ \\
\midrule
Beijing PM$_{10}$ & $H=1$ & $0.954\pm0.004$ & $[0.947,0.958]$ & $0.648\pm0.024$ & $0.933\pm0.011$ \\
& $H=3$ & $0.954\pm0.005$ & $[0.948,0.960]$ & $1.532\pm0.062$ & $2.146\pm0.056$ \\
& $H=7$ & $0.957\pm0.003$ & $[0.953,0.961]$ & $2.525\pm0.081$ & $3.339\pm0.060$ \\
\midrule
Exchange returns & $H=1$ & $0.937\pm0.003$ & $[0.934,0.941]$ & $3.830\pm0.070$ & $6.654\pm0.037$ \\
& $H=3$ & $0.891\pm0.005$ & $[0.884,0.898]$ & $3.408\pm0.083$ & $7.233\pm0.213$ \\
& $H=7$ & $0.932\pm0.004$ & $[0.927,0.937]$ & $3.654\pm0.054$ & $6.587\pm0.100$ \\
\bottomrule
\end{tabular}
\end{table*}

Across the 12 dataset--horizon configurations, the largest across-seed coverage
standard deviation is $0.010$ for Solar at $H=3$. The largest relative standard
deviations of mean interval width and Winkler score are 6.8\% and 3.3\%,
respectively. Thus the qualitative comparisons are not driven by the particular
stochastic seed used in the primary results.

\subsection{Runtime and Computational Details}
\label{app:real_runtime}

We distinguish two runtime quantities throughout the real-data analysis.
UQ-only runtime measures method-specific calibration, hyperparameter
selection, sequential updating, and interval construction after the common
NGRC point predictor has been fitted. End-to-end runtime additionally
includes preprocessing, point-model fitting, and forecasting.

The runtimes reported for the primary UQ comparison are UQ-only measurements.
The architecture comparison instead reports end-to-end pipeline runtime.
These quantities should therefore not be compared directly.

For the multi-horizon experiments, wall-clock time is recorded separately at
each forecast horizon. Bayesian ridge and ordinary SCP require little
additional computation after fitting the common point predictor, whereas
adaptive CP, time-weighted CP, and ResCP require sequential processing and,
where applicable, pre-test hyperparameter selection.

\begin{table}[t]
\centering
\caption{Uncertainty-quantification runtime in seconds for forecast horizons
\(H=3\) and \(H=7\). Times include pre-test uncertainty-quantification
hyperparameter selection where applicable and exclude fitting of the common
NGRC point predictor.}
\label{tab:real_runtime_horizon}
\begin{tabular}{llrr}
\toprule
Dataset & Method & \(H=3\) & \(H=7\) \\
\midrule
LA ozone
& Bayesian ridge   & 0.0125 & 0.0107 \\
& SCP              & 0.0003 & 0.0007 \\
& Adaptive CP      & 6.4812 & 6.4685 \\
& Time-weighted CP & 9.5753 & 9.2434 \\
& ResCP            & 22.2484 & 31.4085 \\
\midrule
Solar
& Bayesian ridge   & 0.1008 & 0.0988 \\
& SCP              & 0.0012 & 0.0012 \\
& Adaptive CP      & 71.8993 & 73.2512 \\
& Time-weighted CP & 99.6658 & 100.6878 \\
& ResCP            & 126.1974 & 122.0124 \\
\midrule
Beijing PM$_{10}$
& Bayesian ridge   & 0.0267 & 0.0249 \\
& SCP              & 0.0008 & 0.0007 \\
& Adaptive CP      & 32.0766 & 31.9235 \\
& Time-weighted CP & 44.4136 & 45.1172 \\
& ResCP            & 131.2923 & 131.9915 \\
\midrule
Exchange returns
& Bayesian ridge   & 0.0041 & 0.0015 \\
& SCP              & 0.0002 & 0.0001 \\
& Adaptive CP      & 1.4273 & 1.4068 \\
& Time-weighted CP & 1.8807 & 1.8806 \\
& ResCP            & 5.3240 & 5.4499 \\
\bottomrule
\end{tabular}
\end{table}

\end{document}